\documentclass[10pt,journal,compsoc]{IEEEtran}
\usepackage[numbers,sort&compress]{natbib}
\usepackage[hyphens]{url}
\usepackage{graphicx}
\usepackage{booktabs}
\usepackage{amsmath,amssymb,amsthm}
\usepackage{microtype}
\makeatletter
\def\@IEEEsectpunct{\ \,}%
\renewcommand{\paragraph}{\@startsection{paragraph}{4}{\z@}%
  {1.5ex plus 0.5ex minus .2ex}{-1em}{\normalfont\normalsize\bfseries}}
\makeatother
\theoremstyle{plain}\newtheorem{theorem}{Theorem}\newtheorem{proposition}{Proposition}
\newtheorem{lemma}{Lemma}\newtheorem{corollary}{Corollary}
\theoremstyle{definition}\newtheorem{definition}{Definition}
\theoremstyle{definition}\newtheorem{construction}{Construction}
\theoremstyle{remark}\newtheorem{remark}{Remark}
\newcommand{\Ch}{\mathrm{Ch}}
\newcommand{\E}{\mathbb{E}}\newcommand{\Prob}{\mathbb{P}}\newcommand{\R}{\mathbb{R}}
\newcommand{\one}{\mathbf{1}}\newcommand{\AUC}{\mathrm{AUC}}\newcommand{\MI}{\mathrm{MI}}
\newcommand{\Var}{\mathrm{Var}}\newcommand{\KL}{\mathrm{KL}}\newcommand{\TV}{\mathrm{TV}}
\DeclareMathOperator*{\argmax}{arg\,max}

\begin{document}

\title{RouteGuard: Certifying Routing Gain in LLM Multi-Agent Systems\\
When Complementarity Is Not Enough}

\author{Anchen~Sun and Kaiqi~Yang%
\IEEEcompsocitemizethanks{%
\IEEEcompsocthanksitem A.~Sun is with the Department of Electrical and
Computer Engineering, University of Miami, Coral Gables, FL 33146 USA, and is
now with Google. This work was initiated while a Ph.D.\ candidate at the
University of Miami.\protect\\
E-mail: anchensun@google.com
\IEEEcompsocthanksitem K.~Yang is with the Department of Mathematics,
University of Miami, Coral Gables, FL 33146 USA.\protect\\
E-mail: yangkaiqi2020@gmail.com}}

\markboth{Preprint --- under review}%
{Sun \MakeLowercase{\textit{and}} Yang: RouteGuard: Certifying Routing Gain
in LLM Multi-Agent Systems}

\IEEEtitleabstractindextext{%
\begin{abstract}
Multi-agent LLM systems route among model-backed advisors, yet a deployer
rarely knows before shipping whether routing will help at all. Prevailing
routers optimize a gate's AUC and presume that advisor complementarity
suffices. We show that neither determines the deployable gain. We introduce
\textbf{RouteGuard}, a deployment-certification framework. Routing gain
decomposes as $G{=}\pi\,\Delta_E$, and the achievable gain is governed by a
conditional-regret functional $\Phi$, not by AUC. A finite-sample
certification bracket comes with a matching Le~Cam lower bound,
constant-sharp over the fixed-activity class, and a robustness phase
transition. On two benchmarks the framework
acts as a guardrail. On RouterBench ($11$ cross-family models) the verdict
depends on the sampling unit: the protocol \emph{certifies} a gain over
GPT-4 under prompt-level sampling and \emph{withholds} it under
workload-cluster resampling, because the gain rests on $3$ of $86$ workload
cells. On OpenRCA (three Gemini advisors) the advisors are statistically
\emph{redundant}: the realized oracle sits at or below the independence
baseline in all pools we tested ($221$ RouterBench pools and three OpenRCA
distributions), so the protocol correctly \emph{refuses} to certify. A
pre-registered semi-synthetic control confirms calibration: the protocol
certifies a genuine gain once $m\ge m^\star$ and does not certify a true
null.
Code and frozen artifacts will be released with the published version.
\end{abstract}

\begin{IEEEkeywords}
Large language models, multi-agent systems, model routing, learning to defer,
statistical certification, finite-sample bounds, minimax lower bounds.
\end{IEEEkeywords}}

\maketitle
\IEEEdisplaynontitleabstractindextext
\IEEEpeerreviewmaketitle

\IEEEraisesectionheading{\section{Introduction}\label{sec:intro}}

\IEEEPARstart{L}{arge-language-model} (LLM) multi-agent systems
increasingly route among
model-backed advisors: a gate reads side information and commits the system to
one advisor's answer. If different advisors fail on different inputs, a router
should beat the primary (strongest single advisor), the intuition driving a fast-growing
routing literature (e.g., MasRouter~\citep{masrouter2025}) and work on
diagnosing multi-agent failures (Who\&When~\citep{whoandwhen2025};
MAST~\citep{mast2025}). We address what that literature leaves open: \emph{when
does selecting among imperfect agents actually help, and how would a deployer
know before shipping?} Small-sample routing gains are fragile, and a gate that
looks predictive in-sample can be uninformative about which advisor is right. A
deployer therefore needs a \emph{certification protocol}: from $m$ evaluation
examples, either certify routing---report a lower bound on the deployed gain
that holds with probability at least $1-\delta$---or refuse to endorse it. The
theory behind such a protocol must also identify \emph{which} property of the
gating signal determines whether any gain exists. This paper supplies both and
stress-tests them on two benchmarks.

\paragraph{Thesis: gain is governed by gating informativeness.}
Whether routing can help at all is governed by a single scalar property of
the gating signal: how much the signal reveals about \emph{which} advisor is
correct on the current instance. We call this quantity the gate's
\emph{informativeness}, write it $\Phi$, and define it formally in
\S\ref{sec:theory}. The best achievable routing gain equals $\Phi$ exactly
(Theorem~\ref{thm:theory-T1}). The gate's AUC does not determine $\Phi$.
Advisor complementarity alone does not guarantee $\Phi>0$ either. The certification bracket of \S\ref{sec:theory} is built
on this objective.

\paragraph{Empirical findings in brief.}
On OpenRCA~Bank~\citep{openrca2025} (three Gemini advisors, $135$ incidents)
the router realizes
$+0.0$\,pp despite a $+9.6$\,pp oracle headroom. The advisors co-fail more
often than independent advisors of the same accuracies would, and the gate is
uninformative, so the protocol refuses. A pre-registered semi-synthetic
control that the protocol \emph{does} certify shows the refusal reflects
calibration rather than incapacity (\S\ref{sec:empirical}).

\paragraph{Proof status.}
T1 (design objective) is fully proven. T2 (scope) is settled within its
structural vocabulary: complementarity is proven necessary, our own
conjectured completion is refuted, and no Boolean combination of the
structural conditions decides positive gain (Rem.~\ref{rem:theory-T2});
the operative quantity is $I_{\mathrm{TV}}$. The bracket
constant is sharp at leading order---an asymptotic statement, reconciled with
the strict finite-$m$ conservatism of the bracket in \S\ref{sec:theory}---and
the class-level minimax constant is sharp
($c^\star_{\mathrm{cert}}{=}2V^\star$; Appendix~\ref{app:sharpconst}).

\subsection*{Contributions}

\begin{enumerate}
  \item \textbf{Decomposition and design objective (proven).}
        Routing gain factors exactly into how often the router intervenes
        and how well it does when it intervenes, and the largest achievable
        gain is the gate's informativeness. Formally, $G(R)=\pi\,\Delta_E$
        for any router (Prop.~\ref{prop:theory-decomp}), and
        $\max_RG(R)=\Phi=\E[\max_j\eta_j(T)-\eta_1(T)]$, attained by the
        Bayes selector (Theorem~\ref{thm:theory-T1}). The gate's AUC is not
        a sufficient statistic for~$\Phi$: two laws with identical
        $\AUC{=}\tfrac12$ yield $\Phi{=}0$ vs $\tfrac14$.

  \item \textbf{Finite-sample certification bracket with Le~Cam lower bound.}
        The certificate returns a high-confidence lower bound on the
        deployed gain or refuses, and no test can need substantially fewer
        samples. Formally: a Bernstein bound $B(m,\delta)$ sharp at leading
        order, a Le~Cam lower bound matching in scaling---and, over the
        fixed-activity class, in the exact minimax constant
        ($c^\star_{\mathrm{cert}}{=}2V^\star{=}2\pi$, attained by the
        variance-capped certificate)---and a robustness phase transition at
        $\rho^\star{=}\pi\Delta_E/2$. On the audited anchor the fixed-$M{=}2$
        bracket needs $m^\star{=}312$.

  \item \textbf{Independence baseline and the redundancy pattern.}
        Advisor pools that look diverse on paper can be effectively
        redundant, and redundancy caps what any router can recover. The
        excess over independence $\mathcal{E}$ enters the routing ceiling
        additively ($\Phi\le H_{\mathrm{ind}}+\mathcal{E}$,
        Lemma~\ref{lem:ceiling}). On all three OpenRCA distributions and
        every one of $221$ RouterBench pools, $\mathcal{E}\le0$: the
        advisors co-fail more often than independent advisors of the same
        accuracies would, which lowers the attainable ceiling well below
        what the marginals suggest. As a scope result, complementarity (C1)
        is necessary for $G>0$ but not sufficient. The operative structural
        quantity is $I_{\mathrm{TV}}$ (Rem.~\ref{rem:theory-T2}).

  \item \textbf{End-to-end guardrail validation on two benchmarks.}
        On RouterBench ($11$ cross-family models) the verdict depends on
        the sampling unit: the protocol \emph{certifies} a gain over GPT-4
        under prompt-level sampling and correctly \emph{withholds} it under
        workload-cluster resampling, because the gain rests on $3/86$
        cells. On OpenRCA~Bank ($N{=}3$ advisors, $m{=}135$ incidents) the
        gate is uninformative and the protocol refuses. A pre-registered
        semi-synthetic control certifies a true gain at $m\ge m^\star$,
        withholds below~$m^\star$, and does not certify a true null.
\end{enumerate}

\section{Related Work}
\label{sec:related}

Our contribution sits at the intersection of failure \emph{attribution} in
multi-agent systems, model \emph{routing} for LLMs, and finite-sample
\emph{certification}. Our question belongs to the latter two: whether and when
a routing gain can be statistically guaranteed at deployment. From the first
we borrow only diagnostic vocabulary.

\subsection{Failure attribution in multi-agent systems}
A fast-growing cluster localizes \emph{which} agent or step caused a failure:
\emph{Who\&When}~\citep{whoandwhen2025} benchmarks automated attribution,
\emph{MAST}~\citep{mast2025} taxonomizes failure modes, and
\emph{AgentDebug}~\citep{agentdebug2025}, \emph{A2P}~\citep{a2p2025}, and
\emph{AgenTracer}~\citep{agentracer2025} trace error propagation---a
\emph{post-hoc, instance-level} question, whereas ours is \emph{prospective and
population-level}: certifying that the gain stays positive under bounded
shift. Attribution can supply candidate gates, but we never require the gate
to explain \emph{why} an advisor erred.

\subsection{LLM routing and model selection}
A second cluster routes queries among models---\emph{MasRouter}~\citep{masrouter2025},
\emph{Causal LLM Routing}~\citep{causalrouting2025}, \emph{Universal Model
Routing}~\citep{universalrouting2025}, and diagnostics such as
\emph{RouterBench}~\citep{routerbench2024}, \emph{RouterXBench}~\citep{routerxbench2025},
and \emph{When Routing Collapses}~\citep{routingcollapses2025}. Our objective
differs from this literature in two ways. The prevailing objective is
cost-adjusted accuracy under a learned gate, whereas we certify the gain over
a fixed primary. The literature also treats gate--correctness AUC as the
quality target, while Theorem~\ref{thm:theory-T1} proves that the
informativeness $\Phi$ governs attainable gain. ``Routing collapses''
corresponds in our language to the regime $\pi\Delta_E\to0$, detected before
deployment.

\subsection{Learning to defer, ensembles, and cascades}
The Bayes selector over $\eta_j(t)$ is multi-expert
\emph{learning-to-defer}~\citep{madras2018,mozannar2020,verma2023}, and $\Phi$
is the value of deferral. The distinction is that the deferral literature
optimizes the \emph{policy} while we certify the deployed \emph{value} of a chosen
policy---a finite-sample accept/refuse decision it does not provide. The
selector itself is standard; the new objects are the bracket $B(m,\delta)$,
the phase transition $\rho^\star$, and the redundancy diagnostic
$\mathcal{E}$. Diversity diagnostics
($\phi$, oracle ceilings) are classical ensemble theory~\citep{kuncheva2003};
$\mathcal{E}$ reframes them as a deployment screen with a quantitative role in the
ceiling (Lemma~\ref{lem:ceiling}). Cascades~\citep{frugalgpt2023} and selective
prediction~\citep{geifman2017} optimize cost/coverage objectives. We scope to
accuracy-only, where the AUC-vs-$\Phi$ phenomenon is cleanest, and cost-aware
extensions compose with the same bracket.

\subsection{Finite-sample certification}
The closest prior work shares our finite-sample, phase-transition machinery.
\emph{Proactive-Routing}~\citep{proactiverouting2026} gives one-sided conformal
per-decision guarantees. We give a certification \emph{bracket} on the gain
functional with a Le~Cam lower bound that conformal calibration lacks.
\emph{Cer-Eval}~\citep{cereval2026} is sequential and certifies \emph{point}
selection. We are fixed-$m$ and certify \emph{advisor} selection. Certifying
$G>0$ at a small gap is a best-arm-identification instance~\citep{bestarm2025};
our additions are the routing decomposition, the phase transition
$\rho^\star=\pi\Delta_E/2$, and a bracket whose lower bound is
constant-sharp at class level.

\subsection{RCA agents and the gating signal we target}
Our empirical surface is OpenRCA~\citep{openrca2025}. \emph{Why Do AI Agents
Systematically Fail at Cloud RCA}~\citep{whyrcafail2026} documents that root-cause-analysis (RCA) agents ignore
key-performance-indicator (KPI) categories. Our $K{=}4$ KPI-family partition injects
deterministic, family-partitioned anomaly evidence as a gate independent of the
LLM chain---on Bank it is uninformative for advisor selection, so the protocol
refuses (\S\ref{sec:empirical}).

\section{A Deployment-Certification Theory for Advisor Routing}
\label{sec:theory}

The value of an advisor router is governed by a single per-instance
\emph{informativeness} functional that admits a finite-sample certification
bracket.

\subsection{Formal setup and the central identity}
\label{sec:theory-setup}

\begin{definition}[Predictors, primary, gating signal, router]
\label{def:theory-setup}
Fix $N \ge 2$ advisors with correctness indicators $C_j \in \{0,1\}$ on
instances $X\sim\mu$, where $\mu$ is the (unknown) population distribution of
evaluation instances; write $p_j := \E[C_j]$, with the \emph{primary} $j{=}1$,
$p_1=\max_jp_j$. A \emph{gating signal} $T = (T_1, \dots, T_N)$ takes values
$T_j \in \{0, \dots, K\}$; a \emph{router} is any measurable
$R : \mathcal{T}^N \to [N]$ with accuracy $A_R := \E[C_{R(T)}]$ and
\emph{routing gain} $G(R) := A_R - p_1$. Define
$\eta_j(t) := \Prob(C_j = 1 \mid T = t)$ and the oracle accuracy
$A_\star := \E[\max_j C_j]$. A \emph{joint law} $\mathcal{L} =
\mathcal{L}(C_{1:N}, T)$ is the joint distribution of the correctness vector
and the gating signal (induced by $X\sim\mu$); $G$, $\Phi$, and $\AUC$ below
are functionals of $\mathcal{L}$.
\end{definition}

\begin{definition}[ROC area]
\label{def:theory-auc}
For ordinal $W$ and binary $C$ with $\Prob(C{=}1)\in(0,1)$,
$\AUC(W;C):=\Prob(W_+>W_-)+\tfrac12\Prob(W_+{=}W_-)$ with
$W_\pm\stackrel{d}{=}(W\mid C{=}\pm)$ independent; $W$ \emph{anti-predicts} $C$
when $\AUC<\tfrac12$.
\end{definition}

The router only changes the outcome where it deviates from the primary. The
following assumption-free identity isolates that subset.

\begin{proposition}[Routing-gain decomposition]
\label{prop:theory-decomp}
For any router $R$, let $E_R := \{t : R(t) \ne 1\}$ be the \emph{route-away
event}, $\pi := \Prob(E_R)$ its mass, and
$\Delta_E := \E[C_{R(T)} - C_1 \mid E_R]$ the conditional accuracy edge on the
routed set (with the convention $\pi\Delta_E:=0$ when $\pi=0$). Then
\[
  \boxed{\,G(R) \ =\ \pi \cdot \Delta_E\,.}
\]
Moreover $G(R)$ depends on the joint law $\mathcal{L}(C_{1:N}, T)$ only through
its restriction to $E_R$.
\end{proposition}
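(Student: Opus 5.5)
The plan is to write the gain as a single expectation and split it along the indicator of the route-away event. Since $A_R=\E[C_{R(T)}]$ and $p_1=\E[C_1]$, linearity gives $G(R)=\E[C_{R(T)}-C_1]$. The integrand is bounded in $[-1,1]$, so every expectation below is finite. We then write $1=\one\{T\in E_R\}+\one\{T\notin E_R\}$ and split
\[
G(R)=\E\bigl[(C_{R(T)}-C_1)\one\{T\in E_R\}\bigr]+\E\bigl[(C_{R(T)}-C_1)\one\{T\notin E_R\}\bigr].
\]
On the complement of $E_R$ we have $R(T)=1$ by definition, so the integrand $C_{R(T)}-C_1$ vanishes identically and the second term is zero. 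Measurability of $E_R=R^{-1}([N]\setminus\{1\})$ follows directly from measurability of $R$.

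For the first term we split into two cases.
\begin{itemize}
\item If $\pi=\Prob(T\in E_R)>0$, the definition of elementary conditional expectation gives $\E[(C_{R(T)}-C_1)\one\{T\in E_R\}]=\pi\,\E[C_{R(T)}-C_1\mid E_R]=\pi\Delta_E$.
\item If $\pi=0$, the first term is the expectation of a bounded variable supported on a null set, hence $0$. This matches the stated convention $\pi\Delta_E:=0$.
\end{itemize}
Either way, $G(R)=\pi\Delta_E$.

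For the ``moreover'' clause, the same computation shows $G(R)=\E[(C_{R(T)}-C_1)\one\{T\in E_R\}]$. On $E_R$ the integrand is $\sum_{j\ne1}\one\{R(T)=j\}(C_j-C_1)$, a measurable function of $(C_{1:N},T)$ restricted to $\{T\in E_R\}$. Its expectation is therefore determined by the sub-probability measure $\mathcal{L}(\,\cdot\cap\{T\in E_R\})$, that is, by the restriction of the joint law to $E_R$.

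There is no substantive obstacle here. The only points needing care are the degenerate case $\pi=0$, handled by the convention, and stating precisely what ``depends only through its restriction'' means. I would make the latter concrete as follows: any two joint laws that agree as measures on $\{C_{1:N}\}\times E_R$ yield the same $G(R)$. This holds because $G(R)$ is the integral of a fixed bounded function against that restricted measure.
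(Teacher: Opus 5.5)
Your proof is correct and follows the paper's argument. Like the paper, you write $G(R)=\E[C_{R(T)}-C_1]$, observe that the integrand vanishes off $E_R$, and identify the remaining term as $\pi\Delta_E$. Your explicit treatment of the $\pi=0$ case and your precise reading of ``depends only through its restriction to $E_R$'' (two laws agreeing on $\{C_{1:N}\}\times E_R$ give the same $G(R)$) spell out points that the paper's short proof leaves implicit.
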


\begin{proof}
On $E_R^c$ the router selects the primary, so $C_{R(T)} - C_1 = 0$ there; on
$E_R$, by the tower property, $\E[(C_{R(T)} - C_1)\one_{E_R}] = \pi\Delta_E$.
Since $A_R - p_1 = \E[C_{R(T)} - C_1]$ and the integrand vanishes off $E_R$, the
two claims follow.
\end{proof}

Proposition~\ref{prop:theory-decomp} is the organizing identity: gain factors
into a \emph{quantity} ($\pi$) and a \emph{quality} ($\Delta_E$) of intervention,
making precise the failure mode our empirics exhibit: non-trivial route-away
mass with $\Delta_E=0$ (recoveries offsetting destructions), as observed on Bank.

\subsection{The design objective: informativeness, not AUC}
\label{sec:theory-T1}

We first identify the largest gain achievable by \emph{any} router. Part~(a)
below holds \emph{law-wise}---for every joint law $\mathcal{L}$, with no class
restriction; the nuisance class
$\mathfrak{L}_{p_1, A_\star} := \{ \mathcal{L} : \E[C_1] = p_1,\ \E[\max_j C_j]
= A_\star \}$ (primary strength and oracle ceiling held fixed) is needed only for
the comparison in part~(b).
Define the \emph{conditional-regret functional}
\[
\begin{aligned}
  \Phi(\mathcal{L}) &\ :=\ \E\!\big[\, \max_j \eta_j(T) - \eta_1(T) \,\big] \\
  &\ =\ \sum_t \mu_T(t)\big( \max_j \eta_j(t) - \eta_1(t) \big) .
\end{aligned}
\]
We refer to $\Phi$ as the gating signal's \emph{informativeness}: it is the
expected per-instance advantage of the best posterior-correct advisor over the
primary, and it is zero precisely when $T$ never reveals a context in which some
advisor strictly dominates the primary.

\begin{theorem}[Design objective]
\label{thm:theory-T1}
\begin{itemize}
  \item[\textnormal{(a)}] \textbf{Attainable maximum (law-wise).}
        For \emph{every} joint law $\mathcal{L}$,
        $\displaystyle \max_R G(R) = \Phi(\mathcal{L})$ exactly, attained by the
        Bayes selector $R^\star(t) := \argmax_j \eta_j(t)$.
  \item[\textnormal{(b)}] \textbf{AUC-insufficiency.} On the nuisance class
        $\mathfrak{L}_{p_1, A_\star}$ there exist
        $\mathcal{L}_a, \mathcal{L}_b \in \mathfrak{L}_{p_1, A_\star}$ with
        $\AUC(T_1; C_1)(\mathcal{L}_a) = \AUC(T_1; C_1)(\mathcal{L}_b)$ but
        $\Phi(\mathcal{L}_a) \ne \Phi(\mathcal{L}_b)$. Hence $\AUC(T_1; C_1)$ is
        \emph{not} a sufficient statistic for $\Phi$.
\end{itemize}
\end{theorem}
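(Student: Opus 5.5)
For part (a) we condition on the gating signal. A router $R$ is a deterministic function of $T$, so by the tower property
\[
A_R=\E[C_{R(T)}]=\E\big[\E[C_{R(T)}\mid T]\big]=\E\big[\eta_{R(T)}(T)\big].
\]
Since $p_1=\E[\eta_1(T)]$, this gives $G(R)=\E[\eta_{R(T)}(T)-\eta_1(T)]$. Pointwise in $t$ we have $\eta_{R(t)}(t)\le\max_j\eta_j(t)$, so $G(R)\le\Phi(\mathcal{L})$ for every measurable $R$. Equality holds for $R^\star(t)=\argmax_j\eta_j(t)$, with ties broken by a fixed measurable rule; we break ties in favour of index $1$, which keeps the route-away set $E_{R^\star}$ minimal without changing the value. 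Hence the maximum is attained and equals $\Phi$. No class restriction enters: $T$ takes finitely many values, so measurability and the conditional probabilities are trivial (on null atoms $\eta$ can be set arbitrarily). As a consistency check, Proposition~\ref{prop:theory-decomp} gives the same value, since $G(R^\star)=\pi\Delta_E$ with the integrand nonnegative on $E_{R^\star}$.

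For part (b) we exhibit two explicit laws with $N=2$, $K=1$, $T_1\in\{0,1\}$, and $T_2$ constant. Take $p_1=\tfrac12$ and $A_\star=\tfrac34$.

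\emph{Law $\mathcal{L}_a$.} Let $T_1$ be uniform and independent of $(C_1,C_2)$, where $(C_1,C_2)$ is uniform on $\{(1,0),(0,1),(1,1),(0,0)\}$. Then $\eta_1\equiv\eta_2\equiv\tfrac12$, so $\Phi=0$, and $\AUC(T_1;C_1)=\tfrac12$ by independence. Also $p_1=p_2=\tfrac12$ and $A_\star=\tfrac34$.

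\emph{Law $\mathcal{L}_b$.} Keep $T_1$ uniform and independent of $C_1$, so again $\AUC=\tfrac12$, but let $T_1$ carry information about $C_2$: on $\{T_1=1\}$ set $C_2=1$, and on $\{T_1=0\}$ set $C_2=0$. With $C_1\sim\mathrm{Bern}(\tfrac12)$ independent of $T_1$, we get $p_2=\tfrac12$ and $A_\star=\Prob(C_1\vee C_2)=\tfrac12+\tfrac12\cdot\tfrac12=\tfrac34$, so $\mathcal{L}_b\in\mathfrak{L}_{1/2,3/4}$. Here $\eta_1\equiv\tfrac12$, $\eta_2(1)=1$, $\eta_2(0)=0$, so $\Phi=\tfrac12\cdot\tfrac12=\tfrac14$. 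This reproduces the $0$ versus $\tfrac14$ contrast stated in the contributions.

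The argument is essentially routine. The only real care point is the construction in (b): both the primary strength $p_1$ and the oracle ceiling $A_\star$ must match across the two laws while $\AUC(T_1;C_1)$ stays fixed. The key observation is that AUC is measured against $C_1$ only, so all of the informativeness can be loaded onto the gate's relation to the other advisor. It remains to confirm that $p_1=\max_jp_j$ holds; ties are allowed here, and if strictness is required we perturb $p_2$ slightly below $\tfrac12$ and readjust.
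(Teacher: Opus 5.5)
Your proposal is correct and matches the paper's proof: part (a) is the same tower-property and pointwise-maximum argument, and your $\mathcal{L}_a$ and $\mathcal{L}_b$ are the paper's witnesses, with the gate labels swapped in $\mathcal{L}_b$ ($C_2=T_1$ rather than $C_2=1-T_1$). The closing perturbation caveat is unnecessary, since the paper's witnesses also have $p_1=p_2=\tfrac12$ and the setup only requires $p_1=\max_j p_j$, which allows ties.
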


\begin{proof}[Proof sketch]
(a) $A_R=\E[\eta_{R(T)}(T)]\le\E[\max_j\eta_j(T)]$, with equality at $R^\star$.
(b) Two $N{=}2$ witnesses with identical $\AUC{=}\tfrac12$ realize $\Phi{=}0$
and $\tfrac14$; full constructions in Appendix~\ref{app:T1}.
\end{proof}

Theorem~\ref{thm:theory-T1} is proven. (Every theorem in the paper is also
re-verified numerically against frozen golden files. Appendix~\ref{app:veriftable} consolidates
all checks in one table.) The correct design objective is therefore a held-out
estimate of $\Phi$, equivalently of $\Delta_E$ on the route-away rows. The
empirical $\AUC(T;C)$ is not a substitute. When $T$ is continuous or learned,
estimate $\Phi$ by binning or a cross-fitted plug-in, as in \S\ref{sec:bank135}.

\subsection{The informativeness ceiling: a capturable-gain inequality}
\label{sec:theory-itv}

Theorem~\ref{thm:theory-T1} shows AUC is the wrong statistic. The following
identifies the quantity that bounds achievable gain. Write
$I_{\mathrm{TV}}(T) := \E\|\eta(T) - p\|_1$ for the total-variation
informativeness of the gate, with $\eta(t) = (\eta_1(t), \dots, \eta_N(t))$ and
$p = (p_1, \dots, p_N)$ (the expectation averages over $T$).

\begin{theorem}[Informativeness ceiling on capturable gain]
\label{thm:theory-itv}
With the primary indexed so that $p_1 = \max_j p_j$,
\[
  \Phi \;=\; \max_R G(R) \;\le\; \tfrac12\, I_{\mathrm{TV}}(T) ,
\]
with equality exactly when $\max_j \eta_j(t) = p_1 + \sum_j(\eta_j(t)-p_j)_+$
for a.e.\ $t$; the conditions \textnormal{(i)} $\eta_1 \equiv p_1$ a.e.,
\textnormal{(ii)} at most one active competitor per $t$, and \textnormal{(iii)}
active competitors mean-tied to the primary are \emph{sufficient} and canonical
(full characterization and a mutual-information companion in
Appendix~\ref{app:itv}).
\end{theorem}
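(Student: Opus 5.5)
Since $\Phi=\max_R G(R)$ by Theorem~\ref{thm:theory-T1}(a), only the inequality $\Phi\le\tfrac12 I_{\mathrm{TV}}(T)$ and its equality case need proof. I will show a pointwise bound and then average over $T$, noting first that $\E[\eta_j(T)]=p_j$ for every $j$ by the tower property.

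\textbf{Pointwise bound.} Fix $t$ and write $d_j:=\eta_j(t)-p_j$. Then $\sum_j \E[d_j(T)]=0$ and $\E[d_1(T)]=0$. For each $j$, since $p_j\le p_1$,
\[
\eta_j(t)=p_j+d_j\le p_1+(d_j)_+\le p_1+\textstyle\sum_k (d_k)_+ .
\]
Taking the maximum over $j$ gives
\[
\max_j\eta_j(t)-\eta_1(t)\le \textstyle\sum_k (d_k)_+ - d_1 .
\]

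\textbf{Averaging.} Take expectations and use $\E[d_1(T)]=0$:
\[
\Phi\le \textstyle\sum_k \E[(d_k(T))_+].
\]
Each $d_k(T)$ has mean zero, so its positive and negative parts have equal expectation. Hence
\[
\E[(d_k)_+]=\tfrac12\E|d_k|, \qquad \textstyle\sum_k\E[(d_k)_+]=\tfrac12\E\|\eta(T)-p\|_1=\tfrac12 I_{\mathrm{TV}}(T).
\]
This proves the inequality.

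\textbf{Equality case.} The only inequality used is the pointwise one, and after averaging the term $-d_1$ has expectation zero. Equality therefore holds iff
\[
\max_j\eta_j(t)-\eta_1(t)-\textstyle\sum_k(d_k)_++d_1=0 \quad \text{for a.e. } t,
\]
since this nonpositive function has zero mean exactly when it vanishes a.e. Rearranging, this is precisely $\max_j\eta_j(t)=p_1+\sum_j(\eta_j(t)-p_j)_+$ a.e., the stated characterization.

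\textbf{Sufficient conditions.} Assume (i)--(iii). By (i), $d_1=0$. By (ii), at most one $j$ has $d_j>0$ at each $t$. By (iii), that active $j$ has $p_j=p_1$, so $\eta_j=p_1+d_j$, while every inactive index has $\eta_k\le p_k\le p_1$. Thus $\max_j\eta_j=p_1+\sum_k(d_k)_+$, so the characterization holds.

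\textbf{Main obstacle.} The delicate step is the bookkeeping in the equality case. With strict dominance of the primary, an active competitor contributes $(d_j)_+$ to the bound while realizing only $\eta_j<p_1+d_j$. This is why (iii) is needed and why (i)--(iii) are sufficient but not necessary. I will state the full characterization only in the pointwise form and defer the finer analysis to Appendix~\ref{app:itv}.
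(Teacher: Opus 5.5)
Your proof is correct and follows the paper's argument. You use the same cellwise bound $\max_j\eta_j(t)-p_1\le\sum_j(\eta_j(t)-p_j)_+$; bounding every $\eta_j$ rather than only the maximizer, and carrying a $-d_1$ term that averages to zero, are cosmetic differences. You then apply the mean-zero identity $\E[W_+]=\tfrac12\E|W|$, get the equality case from the fact that a nonpositive integrand with zero mean vanishes a.e., and your check that (i)--(iii) force the pointwise identity is the appendix's Corollary~\ref{cor:i-iii}(a).
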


\begin{proof}[Sketch]
Bound $\max_j\eta_j$ by the active competitor and apply the mean-zero identity
$\E[W_+]=\tfrac12\E|W|$; full proof in
Appendix~\ref{app:itv}.
\end{proof}

This is a routing-specific, dimension-free specialization of classical
value-of-information / Blackwell comparison, used here as a pre-deployment
screen.

Theorem~\ref{thm:theory-itv} upgrades the design objective into a \emph{router-free, pre-deployment
screen}: a \emph{necessary} condition for a certifiable gain at sample size $m$
is $I_{\mathrm{TV}}(T)\ge 2B(m,\delta)$. It also explains the empirics
mechanistically: an uninformative gate ($I_{\mathrm{TV}}\approx0$) caps
\emph{every} router at $\Phi\approx0$ regardless of oracle headroom
(\S\ref{sec:bank135}).

\subsection{Scope: complementarity is necessary but not sufficient}
\label{sec:theory-T2}

Theorem~\ref{thm:theory-T1} does not characterize \emph{which structural features of a deployment} make
$\Phi>0$---a question a practitioner would want answered before running LLM
experiments. We establish one half, refute a tempting conjecture for the
other, and prove that no Boolean combination of the natural structural
conditions can close the gap. Let (C1) be model complementarity:
writing $\mathcal{A}_j$ for the set of contexts on which advisor $j$ can be
correct, (C1) holds when $\bigcup_j \mathcal{A}_j \supsetneq \mathcal{A}_1$. Let
(C2) be a conditional-informativity requirement on $T$ over the primary-failure
subset.

\begin{theorem}[Scope: necessity of complementarity; refutation of (C2)]
\label{thm:theory-T2}
\textnormal{(N)}\ \emph{(C1) is necessary.} If $\mathcal{A}_j \subseteq
\mathcal{A}_1$ for all $j$, then $\eta_j(t) \le \eta_1(t)$ pointwise, so
$R^\star \equiv 1$ and $G(R^\star) = 0$.\\[2pt]
\textnormal{(R)}\ \emph{(C2) is not necessary.} There exist
latent-type instances satisfying (C1) but violating (C2) on which
$G(R^\star) > 0$. Two such witnesses realize $G(R^\star) = 0.03$ and
$G(R^\star) = 0.10$ respectively, so the conjectured criterion
(C1)$\wedge$(C2) is refuted.
\end{theorem}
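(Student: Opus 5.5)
Part (N) is a pointwise dominance argument, and part (R) is proved by exhibiting explicit finite latent-type laws. I would first fix the latent-type model that gives (C1) and (C2) their meaning. Instances carry a latent context $Z$ in a finite set. Advisor $j$ can be correct only on contexts in $\mathcal{A}_j$, so $C_j\le \one\{Z\in\mathcal{A}_j\}$. The gate $T$ is a (possibly noisy) function of $Z$.

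\textbf{Part (N).} The key assumption is that on $\mathcal{A}_1$ the primary is correct whenever any advisor $j$ is, i.e.\ $C_j\le C_1$ pointwise. This is the modelling reading of ``can be correct'': inside $\mathcal{A}_1$ the primary realizes its capability. Together with $\mathcal{A}_j\subseteq\mathcal{A}_1$ this gives $C_j\le C_1$ almost surely. Conditioning on $T=t$ then yields $\eta_j(t)\le\eta_1(t)$ for every $t$ with $\mu_T(t)>0$. Hence $\max_j\eta_j(t)=\eta_1(t)$, so the Bayes selector may take $R^\star\equiv 1$, breaking ties toward the primary. Then $\Phi=0$, and Theorem~\ref{thm:theory-T1}(a) gives $\max_R G(R)=G(R^\star)=0$. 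The main obstacle here is definitional. If the primary were only sometimes correct inside $\mathcal{A}_1$, pointwise dominance could fail. The proof must therefore state the capability-realization convention explicitly, or prove the weaker conclusion via the coupling $C_j\le C_1$ that the convention provides.

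\textbf{Part (R).} I would fix a formal (C2), for example ``$T$ is conditionally informative on $\{C_1=0\}$'': the law of $T$ given $C_1=0$ differs across which competitor is correct, or $I_{\mathrm{TV}}$ restricted to $\{C_1=0\}$ is positive. I would then build two $N=2$ witnesses. In each, $T$ carries information about the contexts but is uninformative when restricted to primary-failure instances, yet the Bayes selector still gains. The mechanism is that $T$ can shift $\eta_2(t)-\eta_1(t)$ through the primary's success rate rather than through which advisor is correct among primary failures.

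Concretely, take a binary $T$ and contexts $z\in\{a,b\}$. On $T=1$, set $\eta_1=0.4$ and $\eta_2=0.4+g$. On $T=0$, let the primary dominate, chosen so that $p_1\ge p_2$. Choose the joints so that $\Prob(C_2=1\mid C_1=0,T)$ is constant in $T$. This violates (C2), while (C1) holds because advisor 2 is correct on some context outside $\mathcal{A}_1$. Then $G(R^\star)=\Prob(T=1)\,g$ by Proposition~\ref{prop:theory-decomp}. Tuning $(\Prob(T=1),g)$ gives $0.03$ and $0.10$, for instance $0.3\times0.1$ and $0.5\times0.2$.

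The hard step is satisfying all three constraints simultaneously with valid joint probabilities in $[0,1]$:
\begin{itemize}
\item the constancy of $\Prob(C_2\mid C_1=0,T)$;
\item the condition $p_1=\max_j p_j$;
\item the positive gap on $T=1$.
\end{itemize}
I would parametrize each cell by the four probabilities of $(C_1,C_2)$, impose the constancy as a linear constraint, and solve the small linear system. I would then verify the result numerically, as the paper does for its other theorems.
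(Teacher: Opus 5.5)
Your part (N) is the paper's argument. The paper's latent-type model sets $C_j:=\one\{Y\in\mathcal{A}_j\}$, which is your capability-realization convention, and $\eta_j(t)\le\eta_1(t)$ is then monotonicity of $\Prob(\cdot\mid T{=}t)$.

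Part (R), however, refutes less than the theorem intends. With $F:=\one\{Y\notin\mathcal{A}_1\}$, the paper formalizes ``conditional informativity over the primary-failure subset'' in two ways. (C2-type) is $\MI(T;Y\mid F{=}1)>0$: $T$ tells which failure type occurred. (C2-flag) is $\MI(T;F)>0$: $T$ tells whether the primary fails. The two witnesses exist so that \emph{each} reading is violated by one of them.

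You fix only the first reading: your condition $T\perp C_2\mid C_1{=}0$ is essentially the failure of (C2-type). You then treat the second witness as a retuning of $(\Prob(T{=}1),g)$. But your mechanism moves $\eta_2-\eta_1$ through the primary's success rate, which makes $T$ informative about $C_1=1-F$. So both witnesses, as you design them, \emph{satisfy} (C2-flag), and (C1)$\wedge$(C2-flag) is left standing. In your two-context design no tuning fixes this. With $C_1=\one\{Z{=}a\}$ and $T$ driven by $Z$, a gate independent of $C_1$ is independent of $Z$. Then $\eta_2\equiv p_2\le p_1\equiv\eta_1$ and $G(R^\star)=0$.

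The missing idea is the opposite mechanism, which needs a third latent type: $T\perp F$, while $T$ separates the failure type on which the competitor is right. The paper's $G=0.10$ witness does exactly this:
\begin{itemize}
\item $\nu=(0.4,0.3,0.3)$ on $\{a,b_1,b_2\}$, with $\mathcal{A}_1=\{a\}$ and $\mathcal{A}_2=\{b_1\}$;
\item channel $\Prob(T{=}1\mid a)=0.5$, $\Prob(T{=}1\mid b_1)=1$, $\Prob(T{=}1\mid b_2)=0$.
\end{itemize}
Then $\Prob(F{=}1\mid T{=}t)=0.6$ for both $t$, yet $\eta_1\equiv0.4$ and $(\eta_2(0),\eta_2(1))=(0,0.6)$. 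Hence $G(R^\star)=0.5\cdot0.2=0.10$. The paper's $G=0.03$ witness is your construction, realized with identical channels on $b_1$ and $b_2$, so that $T$ is only a partial failure flag.

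Alternatively, within your own cell parametrization, a single instance can violate both readings. Let $T$ inform only about $C_2$ on $\{C_1{=}1\}$, and take $\Prob(T{=}t)=\tfrac12$. List $\Prob(C_1{=}c_1,C_2{=}c_2\mid T{=}t)$ in the order $(c_1,c_2)=(1,1),(1,0),(0,1),(0,0)$. Use $(0.4,0,0.18,0.42)$ at $t{=}1$ and $(0,0.4,0.18,0.42)$ at $t{=}0$. Viewed as a latent-type instance with $Y=(C_1,C_2)$, this gives $T\perp C_1$ and $T\perp C_2\mid C_1{=}0$. It also gives $p_1=0.4>p_2=0.38$ and $G(R^\star)=0.09$.

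Finally, when you fill in numbers, note one constraint. In the deterministic two-context model, $\eta_1(1)=0.4$ forces $\eta_2(1)=0.6$. Your $0.3\times0.1$ split therefore needs either a stochastic competitor or a split failure type.
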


\begin{remark}[No Boolean structural NSC exists]
\label{rem:theory-T2}
The witnesses show neither failure-\emph{event} nor failure-\emph{type}
informativeness is necessary, and the gap is provably not repairable in
this vocabulary: two further instances share the full profile
(C1)$\wedge$(C2-type)$\wedge$(C2-flag) yet realize $\Phi=0$ and
$\Phi=0.06>0$, so no Boolean combination of these conditions decides
$\Phi>0$ (Appendix~\ref{app:T2}). What matters is \emph{where} the
informativity points. The operative quantity is the gate's informativeness
$I_{\mathrm{TV}}$---the necessary screen $\Phi\le\tfrac12I_{\mathrm{TV}}$
(Theorem~\ref{thm:theory-itv})---which we adopt as the deployable test.
\end{remark}

Thus complementarity is necessary but not sufficient. An informative gate is
additionally required, with $I_{\mathrm{TV}}$ the structural test
(Rem.~\ref{rem:theory-T2}). On Bank (\S\ref{sec:empirical}), (C1) holds yet
$G{=}{+}0.0$\,pp---the $+9.6$\,pp oracle headroom is \emph{redundant} rather than
error-diverse (\S\ref{sec:theory-indep}) and the gate is uninformative
($\AUC{=}0.525$).

\subsection{The independence baseline: a screen for error diversity}
\label{sec:theory-indep}
Condition (C1) above is \emph{set-theoretic}: it asks only that some non-primary advisor
be correct where the primary fails, which holds whenever the advisors differ at all. The
operationally relevant question is sharper: are the advisors wrong on \emph{different}
inputs \emph{beyond chance}? We make this checkable with a one-line, router-free
diagnostic. For advisors with marginal accuracies $p_1,\dots,p_N$, the oracle accuracy
that \emph{independent} advisors of the same marginals would attain is
$1-\prod_j(1-p_j)$. Define the \emph{excess over independence}
\[
  \mathcal{E} \ :=\ \E[\max_j C_j] \;-\; \big(1-\textstyle\prod_j(1-p_j)\big) .
\]
$\mathcal{E}>0$ signals error diversity beyond what the marginals imply;
$\mathcal{E}\le 0$ means the advisors are statistically \emph{redundant},
co-failing at least as often as chance. Its quantitative role is \emph{additive}
in the routing ceiling:

\begin{lemma}[Ceiling identity]\label{lem:ceiling}
For any gating signal, $\Phi \le A_\star - p_1 = H_{\mathrm{ind}} + \mathcal{E}$,
where $H_{\mathrm{ind}} := 1-\prod_j(1-p_j) - p_1$ is the headroom that
independent advisors of the same marginals would supply.
\end{lemma}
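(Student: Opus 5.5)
The lemma splits into an inequality and an identity, and I would prove them separately.

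\emph{Inequality.} By Theorem~\ref{thm:theory-T1}(a), $\Phi=\max_R G(R)=A_{R^\star}-p_1$ with $A_{R^\star}=\E[C_{R^\star(T)}]$. It is therefore enough to show that every router satisfies $A_R\le A_\star$. This holds pointwise: for every realization, $C_{R(T)}\le\max_j C_j$, because $R(T)$ is one of the indices in $[N]$. Taking expectations gives $A_R\le \E[\max_j C_j]=A_\star$. Subtracting $p_1$ yields $\Phi\le A_\star-p_1$. This uses no property of the gating signal $T$, which is why the bound holds ``for any gating signal.''

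An alternative route goes through the definition of $\Phi$. Conditional expectation is monotone, so $\max_j\eta_j(T)=\max_j\E[C_j\mid T]\le\E[\max_jC_j\mid T]$. Averaging over $T$ by the tower property then gives the same bound. I would present the router version, since it reuses Theorem~\ref{thm:theory-T1} directly.

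\emph{Identity.} This part is pure algebra from the definitions of $\mathcal{E}$ and $H_{\mathrm{ind}}$:
\[
H_{\mathrm{ind}}+\mathcal{E}=\Big(1-\textstyle\prod_j(1-p_j)-p_1\Big)+\Big(A_\star-1+\textstyle\prod_j(1-p_j)\Big)=A_\star-p_1 .
\]
The independence terms cancel exactly.

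The only real content is the pointwise bound $C_{R(T)}\le\max_jC_j$, and no step here poses a genuine obstacle. The one point to handle carefully is the order of quantifiers: the inequality must be shown for every measurable router, so that it also covers the maximizer $R^\star$. This avoids any need to prove that the Bayes selector is measurable beyond what Theorem~\ref{thm:theory-T1} already guarantees. If desired, I would also remark that in general $H_{\mathrm{ind}}\ge 0$, while $\mathcal{E}$ may take either sign. So when $\mathcal{E}\le 0$ the ceiling lies at or below the independence headroom, which is the interpretation the section uses.
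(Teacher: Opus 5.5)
Your proof is correct and matches the paper's in substance. The paper uses exactly your ``alternative route'' ($\E[\max_j\eta_j(T)]\le\E\big[\E[\max_jC_j\mid T]\big]$ by monotonicity of conditional expectation, then the tower property) and treats the identity as a direct restatement of the definition of $\mathcal{E}$, while your primary route through Theorem~\ref{thm:theory-T1}(a) and the pointwise bound $C_{R(T)}\le\max_jC_j$ is an equivalent repackaging of the same pointwise-domination argument.
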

\begin{proof}
$\Phi=\E[\max_j\eta_j(T)]-p_1\le\E\big[\E[\max_j C_j\mid T]\big]-p_1=A_\star-p_1$;
the identity is the definition of $\mathcal{E}$.
\end{proof}

So $\mathcal{E}\le0$ \emph{lowers} the ceiling by $|\mathcal{E}|$ relative to
independent advisors but cannot zero it while $A_\star-p_1>0$. It is
therefore a diagnostic of diversity beyond the marginals rather than a
necessary precondition for positive gain: on Bank,
$H_{\mathrm{ind}}=24.2$\,pp and $\mathcal{E}=-14.6$\,pp, yet a fully
informative gate could still capture $+9.6$\,pp. The realized $G{=}0$ traces
to the \emph{uninformative gate}, while $\mathcal{E}<0$ separately explains
the modest headroom.

\begin{proposition}[Shared difficulty forces redundancy]
\label{prop:redundancy}
Suppose the correctness indicators $C_1,\dots,C_N$ are conditionally independent
given a latent difficulty $D$, and each competence
$q_j(d):=\Prob(C_j{=}1\mid D{=}d)$ is monotone in $d$, all in the \emph{same}
direction. Then the errors are positively associated and $\mathcal{E}\le0$; for
$N{=}2$, $\mathcal{E}=-\mathrm{Cov}(C_1,C_2)\le0$ exactly.
\end{proposition}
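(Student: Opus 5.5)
We plan to rewrite $\mathcal{E}$ as a statement about the joint failure probability and then apply a Harris/FKG-type correlation inequality conditionally on $D$. Let $F_j:=1-C_j$ denote the failure indicators. Then
\[
1-\max_j C_j=\prod_j F_j, \qquad\text{so}\qquad \mathcal{E}=\prod_j(1-p_j)-\E\big[\textstyle\prod_j F_j\big].
\]
Hence $\mathcal{E}\le0$ is equivalent to
\[
\E\big[\textstyle\prod_j F_j\big]\ \ge\ \prod_j \E[F_j].
\]
This is the precise sense of ``errors positively associated'' that we need.

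First we condition on $D$. Conditional independence gives
\[
\E\big[\textstyle\prod_j F_j\big]=\E\Big[\prod_j f_j(D)\Big], \qquad f_j(d):=1-q_j(d)\in[0,1].
\]
Since all $q_j$ are monotone in the same direction, all $f_j$ are monotone in a common direction; replacing $d$ by $-d$ if necessary, we may take them all nondecreasing.

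Next we show by induction on $N$ that
\[
\E\Big[\prod_j f_j(D)\Big]\ \ge\ \prod_j\E[f_j(D)]=\prod_j(1-p_j).
\]
The base case is Chebyshev's association inequality for a single real random variable: $\E[gh]\ge\E[g]\E[h]$ when $g$ and $h$ are both nondecreasing. For the inductive step, set $g:=\prod_{j<N}f_j$. Because every $f_j$ is nonnegative and nondecreasing, $g$ is again nonnegative and nondecreasing. Chebyshev then gives $\E[g f_N]\ge \E[g]\E[f_N]$, and the induction hypothesis bounds $\E[g]$ from below. The nonnegativity of the $f_j$ is what keeps the products monotone, and it holds automatically because the $f_j$ are probabilities.

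The Chebyshev step is the only place where care is needed. For a totally ordered $D$ we would prove it by the symmetrization identity
\[
\E[gh]-\E g\,\E h=\tfrac12\,\E\big[(g(D)-g(D'))(h(D)-h(D'))\big]\ge0,
\]
where $D'$ is an independent copy of $D$; the integrand is nonnegative by comonotonicity. If $D$ were only partially ordered we would instead need Harris/FKG, which requires a lattice structure. We therefore read ``monotone in $d$'' as referring to a real-valued (or at least totally ordered) difficulty.

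For $N=2$ the claim is a direct computation with no monotonicity needed:
\[
\E[F_1F_2]-(1-p_1)(1-p_2)=\mathrm{Cov}(F_1,F_2)=\mathrm{Cov}(C_1,C_2),
\]
so $\mathcal{E}=-\mathrm{Cov}(C_1,C_2)$ exactly. The sign comes from the law of total covariance. Conditional independence kills the conditional-covariance term, leaving
\[
\mathrm{Cov}(C_1,C_2)=\mathrm{Cov}\big(q_1(D),q_2(D)\big),
\]
which is nonnegative by the same Chebyshev inequality. Hence $\mathcal{E}\le0$ for $N=2$ as well.
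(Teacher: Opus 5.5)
Your proposal is correct and follows essentially the same route as the paper: write $\mathcal{E}=\prod_j\E[g_j(D)]-\E[\prod_j g_j(D)]$ via the failure indicators and conditional independence, apply Chebyshev's association inequality inductively (using nonnegativity of the $g_j$ to keep products monotone), and settle $N{=}2$ through $\mathcal{E}=-\mathrm{Cov}(C_1,C_2)=-\mathrm{Cov}(q_1(D),q_2(D))$. Your symmetrization proof of Chebyshev's inequality and your remark that $D$ must be totally ordered are small clarifications that the paper leaves implicit.
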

\begin{proof}[Sketch]
With $g_j(d){=}1{-}q_j(d)$ (same-direction monotone) and conditioning on $D$,
$\mathcal{E}=\prod_j\E[g_j(D)]-\E[\prod_j g_j(D)]\le0$ by Chebyshev's association
inequality and induction; full proof in Appendix~\ref{app:redundancy}.
\end{proof}
\begin{remark}[Why redundancy is generic]
A shared difficulty axis (the natural consequence of overlapping pretraining
and a common task distribution) thus makes $\mathcal{E}<0$ generic; routable
diversity ($\mathcal{E}>0$) requires competences that move in \emph{opposite}
directions on that axis (genuine specialization), which the $221$-pool screen
never finds.
\end{remark}

\begin{table}[!t]
\caption{\textbf{The redundancy pattern.} $\mathcal{E}\le0$ for every pool tested:
all $221$ RouterBench pools ($55$ pairs, $165$ triples, full $11$; $0$
complementary, best-pool permutation $p{=}1.0$) and all three OpenRCA
distributions. CIs: cell-block bootstrap (RouterBench) / i.i.d.\ bootstrap
(OpenRCA); only the three near-zero pairs straddle $\mathcal{E}{=}0$ (the
full-$11$ CI is wide: anti-correlated cell means inflate the bootstrap
variance).}
\label{tab:redundancy}\centering\small
\renewcommand{\arraystretch}{1.05}
\begin{tabular}{lcr@{\;}l}
\toprule
Advisor pool & $N$ & \multicolumn{2}{c}{$\mathcal{E}$ (pp), $95\%$ CI} \\
\midrule
RouterBench best pair      & 2 & $-0.3$  & $[-3.2,+3.1]$ \\
RouterBench worst pair     & 2 & $-16.4$ & $[-19.4,-11.2]$ \\
RouterBench full pool      & 11 & $-23.4$ & $[-49.0,-3.8]$ \\
Bank (OpenRCA)             & 3 & $-14.6$ & $[-19.2,-9.6]$ \\
Telecom (OpenRCA)          & 6 & $-20.7$ & $[-31.0,-8.2]$ \\
Market cb-1 (OpenRCA)      & 2 & $-0.3$  & $[-4.1,+2.7]$ \\
Market cb-2 (OpenRCA)      & 2 & $-2.4$  & $[-6.8,+1.9]$ \\
\bottomrule
\end{tabular}
\end{table}

Across all $221$ RouterBench pools and the three OpenRCA distributions,
$\mathcal{E}\le0$ (Table~\ref{tab:redundancy}), and even the least-redundant
pool is not significantly diverse (permutation $p{=}1.0$). The positive
control (\S\ref{sec:poscontrol}) operates in the contrasting
$\mathcal{E}>0$ regime.

\subsection{A finite-sample certification bracket}
\label{sec:theory-T3}

We now turn the framework into a deployment guardrail. Given an i.i.d.\ sample of
$m$ instances and a fixed (already-chosen) router $\widehat R$, we ask: can we
certify, at confidence $1-\delta$, that the population gain $G_\mu(\widehat R)$
is positive---robust to a bounded distribution shift of radius $\rho$ between
the evaluation corpus and deployment?

\paragraph{Variance.}
Let $Z_i := C_{\widehat R(T_i),i} - C_{1,i} \in \{-1,0,1\}$ (zero off the route-away
event). With $\sigma_E^2 := \Var(C_{\widehat R(T)} - C_1\mid E_R)$, the per-sample variance is
\[
  \sigma^2 \ :=\ \pi\,\sigma_E^2 + \pi(1-\pi)\,\Delta_E^2 ,
  \qquad \Var(\widehat G_m) = \sigma^2 / m ,
\]
since $\E[Z] = \pi\Delta_E = G_\mu(\widehat R)$. On the $m{=}50$ Bank anchor
(\S\ref{sec:frozen50}), $\sigma^2_{\mathrm{routing}} = 0.0384$, distinct from the null-construction
variance $\sigma^2_{\mu_0} = \pi = 0.12$ ($3\times$ larger)---the variance
that governs the class-level sharp constant.

\paragraph{Bernstein upper bound.}
By a one-sided Bernstein inequality~\citep{bennett1962,blm2013} on the bounded summands $Z_i$, for any
$\delta \in (0,1)$,
\[
  \Prob\!\big( \widehat G_m \ge G_\mu(\widehat R) + B(m,\delta) \big) \le \delta,
\]
\[
  B(m,\delta) \ =\ \sigma\sqrt{\tfrac{2\log(1/\delta)}{m}}
    + \tfrac{2 M \log(1/\delta)}{3 m} ,
\]
which yields the certified lower confidence value
$\widehat\gamma_m := \widehat G_m - B(m,\delta) - 2\rho$ for the
$\rho$-robust gain (the same $B$ bounds the symmetric lower tail, used for
$m^\star$ and the conservatism metric).

\paragraph{Fixed constant; bracket robustness.}
We fix $M{=}2$, the worst-case bound $|Z-\E Z|\le1{+}|G|\le2$ for
$Z\in\{-1,0,1\}$: the constant depends on \emph{no} estimated quantity (the
data-dependent $1{+}|G|$ needs a sample split; the plug-in
$\widehat\sigma^2$ remains, bounded by the empirical-Bernstein bracket
below). The
closed-form $\sqrt{a{+}b}$ relaxation, rather than the choice of $M$, drives
the conservatism: at the anchor the relaxed bracket needs $m^\star{=}312$ while a
direct Bennett inversion (same $M{=}2$) needs $m^\star{=}225$; an
empirical-Bernstein bound~\citep{maurerpontil2009} that
assumes \emph{no} $\sigma^2$ needs $810$. All agree on every reported verdict
(Appendix~\ref{app:bracket}).

\begin{theorem}[Leading-order sharpness of the bracket constant]
\label{thm:theory-T3}
The leading-order constant $c_4=1$ on the $\sqrt{\log(1/\delta)/m}$ term of
$B(m,\delta)$ is \emph{sharp at leading order}: in the
large-deviation/iterated-limit (CLT) regime no smaller constant yields a valid
one-sided bound. At finite $m$ the full bracket remains strictly conservative---the realized
tail at $m^\star$ is ${\approx}1.7{\times}10^{4}$ below the nominal $\delta$
on the audited instance (${\approx}5{\times}10^{2}$ under the direct Bennett
inversion above)---and we make \emph{no} claim of finite-$m$ tightness.
\end{theorem}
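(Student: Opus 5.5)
The plan has two parts. The first shows that any one-sided bound whose leading term is $c\,\sigma\sqrt{2\log(1/\delta)/m}$ with $c<1$ fails. The second is a separate finite-$m$ check that the full bracket is conservative.

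\emph{Sharpness.} The lower bound comes from the Gaussian (CLT) limit of $\widehat G_m$. Fix a law with per-sample variance $\sigma^2>0$, for instance the $m{=}50$ Bank anchor of \S\ref{sec:frozen50}. Write $z_\delta:=\Phi_{\mathcal N}^{-1}(1-\delta)$ for the standard normal quantile.
\begin{itemize}
\item[(i)] By the central limit theorem for the bounded i.i.d.\ summands $Z_i$, for fixed $\delta$,
\[
\Prob\bigl(\widehat G_m-G_\mu\ge c\,\sigma\sqrt{2\log(1/\delta)/m}\bigr)\to 1-\Phi_{\mathcal N}\bigl(c\sqrt{2\log(1/\delta)}\bigr).
\]
The $M$-term is $O(1/m)$, so it is negligible against $m^{-1/2}$.
\item[(ii)] I then let $\delta\to0$, which is the iterated limit. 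The Mills-ratio asymptotics give $z_\delta=\sqrt{2\log(1/\delta)}\,(1+o(1))$. Moreover $1-\Phi_{\mathcal N}(c\sqrt{2\log(1/\delta)})=\delta^{c^2+o(1)}$, and this exceeds $\delta$ for small $\delta$ whenever $c<1$.
\item[(iii)] I add a moderate-deviation check (Cramér) with $\delta=\delta_m\to0$ and $\log(1/\delta_m)=o(m^{1/3})$. This gives $\Prob(\widehat G_m-G_\mu\ge x\sigma/\sqrt m)=e^{-x^2/2(1+o(1))}$ in that joint regime, so the same violation holds there.
\end{itemize}
Hence no constant $c_4<1$ is valid. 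Validity at $c_4=1$ is the Bernstein bound already stated.

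\emph{Finite-$m$ conservatism.} This is a computation on the audited instance.
\begin{itemize}
\item[(i)] Take $m^\star=312$ with the anchor's $(\pi,\Delta_E,\sigma_E^2)$.
\item[(ii)] The law of $\sum Z_i$ is an exact trinomial, with probabilities $\Prob(Z=\pm1)$ and $\Prob(Z=0)$ read off from $\pi,\Delta_E,\sigma_E^2$.
\item[(iii)] Compute the exact tail $\Prob(\widehat G_m-G\ge B(m^\star,\delta))$ by convolution, i.e.\ a sum over trinomial counts, and report its ratio to $\delta$. This should give roughly $1.7\times10^4$.
\item[(iv)] Repeat with the Bennett-inverted radius to obtain roughly $5\times10^2$.
\end{itemize}
The strict inequality then follows in general from two facts. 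First, $\sqrt{a+b}\le\sqrt a+\sqrt b$ is strict for $a,b>0$. Second, Bernstein's bound is itself an upper bound on the Chernoff bound, which is strictly larger than the true tail for non-Gaussian bounded summands. I make no tightness claim.

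\emph{Main obstacle.} The hard step is stating the regime precisely enough that ``no smaller constant'' is a theorem and not a heuristic. With fixed $\delta$ and $m\to\infty$, the exact valid constant is $z_\delta/\sqrt{2\log(1/\delta)}<1$, so a constant strictly below one would actually suffice for moderate $\delta$. The sharpness must therefore be stated in the iterated limit, $m\to\infty$ first and then $\delta\to0$, or in the moderate-deviation joint limit. I would first try the iterated limit via Mills' ratio, since it needs only the CLT and standard Gaussian tail asymptotics. I would add the Cramér moderate-deviation version only if a uniform-in-$\delta$ statement is needed.
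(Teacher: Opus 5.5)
Your sharpness half is the paper's argument: the CLT at fixed $\delta$, then the Mills ratio in the iterated limit, with the same caveat that at fixed $\delta$ the admissible constant is only $z_\delta/\sqrt{2\log(1/\delta)}<1$. The tail direction does not matter there, by symmetry of the Gaussian limit, and your moderate-deviation variant is an optional strengthening.

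The gap is in the finite-$m$ half, where the direction does matter. The conservatism factor quoted in the statement is $\delta/\Prob(\widehat G_m\le G-B(m,\delta))$, i.e.\ the \emph{lower} tail. The main text says the symmetric lower tail is the one used for $m^\star$ and the conservatism metric. You propose computing the upper tail $\Prob(\widehat G_m-G\ge B)$ instead. Your step (ii) correctly recovers the audited law as $\Prob(Z{=}1)=0.04$, $\Prob(Z{=}0)=0.96$, $\Prob(Z{=}{-1})=0$. On that law, the upper-tail event at $m=312$ is $\{\mathrm{Bin}(312,0.04)\ge25\}$, since $312(G+B)\approx24.95$. Its probability is about $9\times10^{-4}$, a ratio near $55$, not $1.7\times10^4$. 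At the Bennett threshold the upper-tail ratio is near $11$, not $5\times10^2$. Your claim that the convolution ``should give roughly $1.7\times10^4$'' is therefore false, and the plan does not establish the quoted figures.

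What you are missing is a discreteness observation on the lower tail. Since $B(312,0.05)=0.03996$, the level $G-B\approx4\times10^{-5}$ lies strictly between $0$ and $1/312$, the smallest positive value of $\widehat G_{312}$. Because the law has no $-1$ mass, $\{\widehat G_{312}\le G-B\}$ is exactly the event of no $+1$ increment in $312$ draws. Its probability is $0.96^{312}\approx2.9\times10^{-6}$, giving the ratio $\approx1.7\times10^4$.

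The $5\times10^2$ figure arises the same way, but at the Bennett bracket's \emph{own} threshold $m^\star=225$, not at $312$. There the Bennett radius is $\approx0.03997$, so again $G$ minus the radius lies in $(0,1/225)$. The tail is $0.96^{225}\approx1.0\times10^{-4}$ and the ratio is $\approx4.9\times10^2$.

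Your general strictness remarks (the strict $\sqrt{a+b}$ relaxation, and the Chernoff bound lying strictly above the true tail) suffice for the qualitative ``strictly conservative'' claim. They cannot produce these magnitudes, which come from the lattice of $\widehat G_m$ at threshold in the lower tail and do not carry over to the certification-relevant upper tail.
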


\paragraph{Matching Le~Cam lower bound; sharp class constant.}
A Le~Cam two-point construction~\citep{lecam1973,bretagnollehuber1979}
supported on $E_R$ shows no test can certify $G>0$ with substantially fewer
samples: the minimax sample size obeys
$m^\star(\delta,\rho)\asymp\sigma^2\log(1/\delta)/(\pi\Delta_E-2\rho)_+^2$,
i.e.\ $\Theta(\sigma^2\log(1/\delta)/G^2)$ at $\rho{=}0$, matching the
Bernstein upper bound in \emph{scaling}. Over the fixed-activity class
(route-away mass $\pi$ known) the match is \emph{constant-sharp}: the
minimax certification complexity is
$\log(1/\delta)/\KL(\mu_1\|\mu_0)\,(1{+}o(1))$ at the two-point pair,
giving $c^\star_{\mathrm{cert}}{=}2V^\star{=}2\pi$ in the small-gap limit
($8V^\star$ two-sided; $V^\star{=}\pi$ is the extremal mean-zero
variance); the variance-capped certificate attains it, so the protocol is
asymptotically minimax-optimal (Appendix~\ref{app:sharpconst}). The class constant and the
audited instance are governed by different variances, so their sample sizes
differ structurally rather than by slack. The general-class bridge
and the binding $\rho$-regime remain open.

\paragraph{Robustness phase transition.}
The $(\pi\Delta_E - 2\rho)_+^2$ denominator makes $m^\star(\delta,\rho)$ diverge as
$\rho\uparrow\rho^\star=\tfrac12\pi\Delta_E=\tfrac12 G$ ($=0.020$ on the anchor; the
$\pi$ factor is essential, else the radius is overstated by $1/\pi$); below
$\rho^\star$ it is finite.

\paragraph{Learned routers.}\label{sec:theory-afeasible}
When the router is \emph{learned} from $m$ samples, a greedy plug-in selector
has \emph{negative} expected gain on an uninformative gate (optimizer's
curse); a high-confidence threshold router abstains to the
primary, which is the safe action (Appendix~\ref{app:learned}).

\section{Methods}
\label{sec:methods}

This section specifies the evaluation setup and gating signal, the estimation
of the gain functional from a frozen prediction matrix, and the certification
protocol.

\subsection{Benchmark and base-predictor advisors}
\label{sec:setup-bench}

We evaluate on OpenRCA~\citep{openrca2025}, \emph{Bank} dataset. Base predictors
(BPs) are single-agent RCA pipelines, one per Gemini model. After fixing a
pre-compute bug that had silently emptied the telemetry window, we
re-baselined $N{=}3$ BPs on the $m{=}135$ incidents
parseable for all three ($38.5\%$/$28.9\%$/$14.8\%$ strict). The primary ($p_1{=}38.5\%$) and the $48.1\%$ oracle
expose a $+9.6$\,pp headroom, \emph{redundant} rather than error-diverse
($\mathcal{E}={-}14.6$\,pp), whose \emph{capturability} the protocol adjudicates.

\subsection{Deterministic $K{=}4$ KPI-partition gating signal and the partition-support router}
\label{sec:setup-router}

The gating signal is a deterministic $K{=}4$ KPI-family partition computed from
raw telemetry with \emph{no LLM call}: four scouts (CPU, Memory/JVM, Network,
IO/Application) each filter the anomaly table to their family, rank components by
$\sum|z|$, and emit a top-$3$. The outputs are pure functions of the telemetry,
structurally independent of the BP advisors and bit-for-bit reproducible.

The router is the \emph{Partition-Support Router} (PSR), a deterministic selection
$R:\mathcal{T}^N\!\to\![N]$: for each BP it computes a \emph{partition support}
$T_j\in\{0,\dots,K\}$ (the number of scouts whose top-$3$ contains that BP's
root-cause component) and selects the BP of maximal support, breaking ties toward
the primary and defaulting to it when nothing scores. PSR is a pure \emph{argmax}
with \emph{no} inverse-propensity or importance weighting, so the increment stays
$Z=C_{R(T)}-C_1\in\{-1,0,1\}$ with fixed bound $M{=}2$ (\S\ref{sec:theory-T3}). The
default-to-primary makes $G(R)$ depend on the joint law only through $E_R$
(Prop.~\ref{prop:theory-decomp}) and the decision fully auditable.

\subsection{Estimating $G$, $\pi$, $\Delta_E$, and $\sigma^2$ from a frozen matrix}
\label{sec:estimation}

All estimands come from a single \emph{frozen prediction matrix} (hashed,
\S\ref{sec:repro}): with $Z_i:=C_{\widehat R(T_i),i}-C_{1,i}$ we estimate
$\widehat G_m=\tfrac1m\sum_i Z_i$,
$\widehat\pi=\tfrac1m\#\{i:\widehat R(T_i)\neq1\}$, $\widehat\Delta_E$ (mean
increment on routed rows), and
$\widehat\sigma^2=\widehat\pi\widehat\sigma_E^2+\widehat\pi(1-\widehat\pi)\widehat\Delta_E^2$;
these satisfy $\widehat G_m=\widehat\pi\widehat\Delta_E$ exactly, an internal
consistency check.

\subsection{The certification protocol}
\label{sec:protocol}

The protocol replaces ``$\widehat G_m>0$?'' (possibly sampling noise) with a one-sided high-probability lower bound on the population gain.

\paragraph{Certification statistic.} Fix $\delta$ (we use $0.05$) and define the Bernstein bracket
\[
B(m,\delta)\;=\;\sigma\sqrt{\frac{2\log(1/\delta)}{m}}\;+\;\frac{2M\log(1/\delta)}{3m},
\]
where $\sigma^2$ is the routing variance of \S\ref{sec:estimation} and $M{=}2$ is the fixed worst-case bound on the centred increment from \S\ref{sec:theory} (``Fixed constant''); there is no propensity reweighting (\S\ref{sec:setup-router}). The certified gain is $\widehat\gamma_m := \widehat G_m - B(m,\delta)$, and the protocol \textbf{certifies iff $\widehat\gamma_m>0$}; otherwise it \textbf{refuses}. Certification implies $G_\mu(\widehat R)\ge\widehat\gamma_m\ge0$ with probability $\ge1-\delta$. (Modeling a shift radius $\rho$ subtracts a further $2\rho$, collapsing at $\rho^\star=\pi\Delta_E/2$.)

\paragraph{Protocol (pseudocode).} Given increments $\{Z_i\}_{i=1}^m$, level $\delta$, fixed $M{=}2$, shift $\rho$:
\begin{enumerate}\setlength\itemsep{0pt}
  \item $\widehat G_m\leftarrow\tfrac1m\sum_i Z_i$;\ \ $\widehat\sigma^2\leftarrow\widehat\pi\widehat\sigma_E^2+\widehat\pi(1{-}\widehat\pi)\widehat\Delta_E^2$.
  \item $B\leftarrow\widehat\sigma\sqrt{2\log(1/\delta)/m}+2M\log(1/\delta)/(3m)$;\ \ $\widehat\gamma_m\leftarrow\widehat G_m-B-2\rho$.
  \item \textbf{certify} ($G_\mu(\widehat R)\ge\widehat\gamma_m\ge0$ w.p.\ $\ge1{-}\delta$) if $\widehat\gamma_m>0$, \textbf{else refuse}.
  \item \emph{If outcomes cluster, replace $m$ by $m_{\mathrm{eff}}{=}m/\mathrm{deff}$ (condition (iv) below).}
\end{enumerate}

\paragraph{Required sample size.} Inverting $\widehat\gamma_m>0$ gives
$m^\star(\delta)$; on the $m{=}50$ anchor the fixed-$M{=}2$ bracket requires
$m^\star{=}312$ (direct Bennett $225$; all \S\ref{sec:theory-T3} caveats apply).

\paragraph{Validity conditions.} (i)~the empirically best primary is a
data-dependent but \emph{conservative} choice for $G$ (a sample split removes it);
(ii)~the bracket is valid for one pre-specified gate---comparing $k$ gates needs a
held-out split or level $\delta/k$; (iii)~the shift ball
$\{\nu:\TV(\nu,\mu)\le\rho\}$ gives $|G_\nu-G_\mu|\le2\rho$, a conservative
envelope; (iv)~$\sigma^2/m$ is $\Var(\widehat G_m)$ \emph{only} when the
exchangeability unit is the sampled query---when increments cluster (shared
workload cells) the unit is the cluster and we inflate by the design effect
$\mathrm{deff}{=}1+(\bar n_c{-}1)\mathrm{ICC}$ (intraclass correlation), without
which type-I control is lost (\S\ref{sec:routerbench}).

\subsection{Reproducibility}
\label{sec:repro}

All quantitative claims regenerate from frozen, content-hashed artifacts
(artifact index at the end of the appendix): hashed BP prediction tables, a single
source-of-truth constants module (every $G$, $\pi$, $\Delta_E$, variance, $M$,
bracket, $m^\star$), and end-to-end scripts. The deterministic router invokes
no language model. Only the frozen BP predictions ever depended on model
sampling.

\section{Empirical case study: the framework as a guardrail}
\label{sec:empirical}

Three studies exercise the protocol end to end: RouterBench
(\S\ref{sec:routerbench}), OpenRCA
(\S\ref{sec:bank135}--\ref{sec:market}), and a pre-registered positive
control (\S\ref{sec:poscontrol}).

\subsection{Bank-135 re-baseline: redundant advisors, an uninformative gate}
\label{sec:bank135}

On the re-baselined pool (\S\ref{sec:setup-bench}) the router returns
$52/135=38.5\%$, identical to the primary, against an oracle of
$65/135=48.1\%$.

\paragraph{Redundant headroom, uninformative gate.}
The $+9.6$\,pp headroom ($13$ recoverable incidents) means set-theoretic
(C1) holds. The headroom is not error diversity:
it lies $14.6$\,pp below the independence baseline
(Lemma~\ref{lem:ceiling}). An \emph{oracle} gate ($T{=}$ incident identity)
realizes exactly the remaining ceiling,
$G(R^\star){=}A_\star{-}p_1{=}{+}9.6$\,pp
(Thm.~\ref{thm:theory-T1}(a))---capturable in principle on real data; the
two feasible gates capture none (KPI-partition ${+}0.0$, cross-fitted
logistic ${-}3.0$\,pp). Missing headroom therefore does not explain the
oracle-to-feasible gap; gate informativeness does. The gate, however, is uninformative: the
router routes away on mass $\pi{=}0.141$ with $\Delta_E{=}0$ ($2$ recoveries
vs $2$ destructions), so $\widehat G=\pi\Delta_E=+0.0$\,pp. The support
score has $\AUC{=}0.525$ (chance), placing this squarely in the
$\Phi\approx0$ regime of Theorem~\ref{thm:theory-T1}, despite a favorable
marginal $\AUC(T_1;C_1){=}0.605$.

\paragraph{$\widehat\Phi$, a learned gate, and the $I_{\mathrm{TV}}$ screen.}
A cross-fitted plug-in confirms the diagnosis: $\widehat\Phi_{\mathrm{CF}}=-0.015$
($95\%$ CI $[-0.037,0.000]$; a negative point estimate is finite-sample noise on a
true $\Phi{=}0$, the in-sample $+0.022$ being an upward-biased envelope), and a
cross-fitted ridge-logistic gate realizes $\widehat G=-0.030$ $[-0.074,+0.007]$,
the optimizer's curse on an uninformative signal. The raw $\widehat I_{\mathrm{TV}}=0.45$
nominally clears the screen $2B(135,\delta)=0.132$ (fixed $M{=}2$), but a
permutation null reaches $0.34$ from
sparsity alone; the noise-adjusted excess ($+0.106$) sits below $2B$. At this
sample size $I_{\mathrm{TV}}$ therefore functions as a \emph{necessary}
screen whose power grows with $m$, and it is no high-confidence detector.

\paragraph{The protocol correctly refuses.}\label{sec:frozen50}
At $m{=}135$, $\delta{=}0.05$ ($\widehat\sigma{=}0.172$, fixed $M{=}2$) the bracket
is $B{=}0.066$, so $\widehat\gamma_m=-0.066<0$: \textbf{refuse}. With $\widehat G{=}0$
there is no gain to certify at any $m$. The positive-gain anchor would need
$m^\star{=}312>136$, the full Bank count ($135$ parseable), so the guardrail
reaches the correct refusal in both cases. An earlier frozen $m{=}50$ sample
with an apparent $+4$\,pp gain was likewise refused
($\widehat\gamma_{50}=-0.108$): small-sample conservatism rather than
false-positive detection, since at $m{=}50\ll m^\star$ the bracket refuses
\emph{any} $+4$\,pp gain (details in Appendix~\ref{app:conservatism}).

\subsection{Cross-distribution evidence: the same structure on Market}
\label{sec:market}
Re-running on Market (cloudbed-1 (cb-1), $m{=}64$; cb-2, $m{=}78$; identical
pool and gate, Table~\ref{tab:redundancy}; cb-2 a \emph{post-hoc} third
distribution) reproduces the pattern.
\textbf{(i)} The ``primary'' is not universal---Gemini-3.1-pro dominates Bank but
is weaker on cb-1 and ties on cb-2 (McNemar $p{=}0.84$), itself an
argument for selection. \textbf{(ii)} The oracle headrooms
($+9.6$/$+10.9$/$+14.1$\,pp for Bank/cb-1/cb-2) all sit at or below their
independence baselines (excess $-14.6$/$-0.3$/$-2.4$\,pp): redundancy, not
complementarity (Lemma~\ref{lem:ceiling}). \textbf{(iii)} The KPI-partition gate
does not transfer to Market's microservice metrics, so the realized gain stays
$\approx0$, as Theorem~\ref{thm:theory-itv} predicts. A positive certification
needs error diversity \emph{and} an informative gate. Redundancy lowers the
ceiling but does not preclude it; the next subsection constructs the contrasting
regime.

\begin{figure*}[t]\centering
\includegraphics[width=0.36\textwidth]{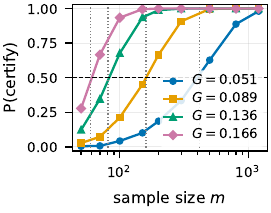}\hspace{0.05\textwidth}%
\includegraphics[width=0.36\textwidth]{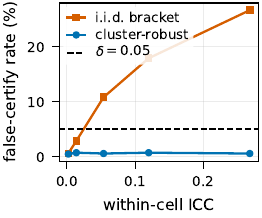}\\[-1pt]
{\footnotesize\makebox[0.36\textwidth]{(a) power}\hspace{0.05\textwidth}%
\makebox[0.36\textwidth]{(b) type-I under clustering}}
\caption{\textbf{Calibration} (\S\ref{sec:poscontrol}). \textbf{(a)}~certify-rate
vs $m$ for several true gains; dotted lines mark $m^\star(G)$.
\textbf{(b)}~false-certify rate on a true null vs ICC: the i.i.d.\ bracket
loses type-I control; the cluster-robust bracket holds $\le\delta$.}
\label{fig:calib}
\end{figure*}

\subsection{A real-data certification, and what survives clustering}
\label{sec:routerbench}
We apply the protocol, pre-registered and unchanged, to
RouterBench~\citep{routerbench2024}: $36{,}497$ prompts scored for $11$
cross-family models, strict $C_j=\one\{\text{score}{=}1\}$, with \emph{workload
identity} ($86$ task cells) as the deployable gate. On a seeded stratified
50/50 split, the train-fitted Bayes router routes away from the GPT-4 primary on
$3$ cells; on the frozen test half ($m{=}18{,}230$) the realized gain is
$\widehat G={+}0.0050$ and the prompt-i.i.d.\ bracket certifies
($\widehat\gamma^{\mathrm{iid}}={+}0.0034>0$, $m^\star{=}2{,}855$): a real
${+}0.5$\,pp gain over a $64.2\%$ primary---\emph{if} fresh queries are
exchangeable at the prompt level (the same workload mix).

\paragraph{Dependence on the exchangeability unit.} That gain lives entirely
in $3$ of $86$ workload cells, and outcomes correlate within a cell
($\mathrm{ICC}{=}0.15$). If deployment may draw \emph{new} workload types the
resampling unit is the cell rather than the prompt. Inflating the bracket by
the design effect ($\mathrm{deff}{=}29.8$, $m_{\mathrm{eff}}{=}612$) flips
the verdict to \emph{withhold} ($\widehat\gamma^{\mathrm{clu}}={-}0.0094$),
and a cell-block bootstrap resampling whole cells gives a $95\%$ CI on the
gain of $[{-}0.0014,{+}0.0142]$ that straddles zero, a bracket-independent
diagnostic locating the fragility in the gain itself. Certification holds
in-workload and is withheld for cross-workload generalization; beating the
primary requires the error diversity that Table~\ref{tab:redundancy} shows
is absent.

\paragraph{Which certifications survive clustering.} Of all $11$ candidate
primaries, $10$ certify under the cluster bracket, but each draws $96$--$98\%$
of its gain from routing to GPT-4, a trivial base-model upgrade under our accuracy-only
scope. GPT-4 is the \emph{only} primary whose gain spreads across
multiple advisors (top advisor $58\%$), and the one whose gain is withheld:
the protocol certifies the obvious and withholds the non-robust
complementarity. Both verdicts replicate 5-shot
($\widehat\gamma^{\mathrm{iid}}={+}0.0025$, $\widehat\gamma^{\mathrm{clu}}={-}0.0086$,
gain in $7/86$ cells) and the graded-score gain (${+}0.17$\,pp) is correctly
refused (no new LLM calls).

\subsection{A calibrated positive control}
\label{sec:poscontrol}
A pre-registered semi-synthetic control supplies a \emph{known} ground-truth
gain and a true null, which no real benchmark provides: a latent regime
drives two conditionally independent advisors and an $85\%$-accurate gate, an
error-diverse instance ($\mathcal{E}={+}3.8$\,pp, error correlation $\phi={-}0.17$,
$\AUC=0.75$) with $\widehat G=0.103$, $m^\star=146$. The protocol behaves as a calibrated guardrail
(seed $20260608$, $2000$ MC draws, exact $95\%$ CIs): \textbf{(a)}~certify at
$m{=}300\ge m^\star$ (certify-rate $98.95\%$ $[98.40,99.35]$);
\textbf{(b)}~withhold the same gain at $m{=}70<m^\star$ (certify-rate
$11.10\%$ $[9.76,12.56]$);
\textbf{(c)}~type-I---a matched null ($G{=}0$, same gate) certifies $0.15\%$
$[0.03,0.44]\le\delta$. It refuses on OpenRCA because those pools are
redundant and the sample small.

\paragraph{Power and the cluster-robust upgrade.}
Figure~\ref{fig:calib} characterizes the protocol beyond these point
checks. \emph{Power:} sweeping $G$ and $m$, the certify-rate
is $\approx0$ below $m^\star(G)$, $\tfrac12$ at $m^\star$, and $\to1$ above:
$m^\star$ is the operating threshold. \emph{Type-I under
correlation:} on a true null with tunable ICC the i.i.d.\ bracket's false-certify
rate climbs from $0.6\%$ at $\mathrm{ICC}{=}0$ to $27\%$ at $\mathrm{ICC}{=}0.27$
($\gg\delta$), while the design-effect cluster bracket of
\S\ref{sec:routerbench} holds $\le\delta$ throughout (and does not
over-correct at $\mathrm{ICC}{=}0$).

\subsection{Limitations}
\label{sec:limitations}
Four limits bound our claims. (i)~\emph{Redundancy sources are only partly
modeled}: Prop.~\ref{prop:redundancy} explains $\mathcal{E}\le0$ via shared
difficulty, which we do not stratify empirically.
(ii)~\emph{Gates}: OpenRCA uninformativeness is shown for the PSR and a
cross-fitted logistic gate, and RouterBench's only beat-the-best gain
(${+}0.5$\,pp, $3/86$ cells) is withheld under clustering---richer gates may
surface a robustly certifiable slice. (iii)~\emph{Shift}: the TV ball is
conservative and at small gains $\rho^\star{=}G/2$ tolerates little.
(iv)~\emph{Accuracy-only scope}: cost-aware extensions compose with the same
bracket.

\section{Conclusion}
\label{sec:conclusion}
\textbf{RouteGuard} decides before shipping whether routing among LLM
advisors yields a warranted gain. The design objective is the gate's
informativeness $\Phi$ rather than its AUC (Theorem~\ref{thm:theory-T1}),
and a finite-sample bracket, constant-sharp at class level
($c^\star_{\mathrm{cert}}{=}2V^\star$), certifies the gain or refuses.
Every pool we tested---$221$ RouterBench pools and three OpenRCA
distributions---is redundant ($\mathcal{E}\le0$), as shared difficulty
predicts (Prop.~\ref{prop:redundancy}). RouterBench certifies a GPT-4 gain
under prompt sampling and withholds it under workload clustering. OpenRCA
refuses on an uninformative gate, and a pre-registered control confirms
calibration. Open problems include richer gates for the redundancy-lowered
ceiling, the general-class constant bridge, and the binding $\rho$-regime.

\appendices

\section*{Overview of the Technical Appendices}
\noindent This appendix contains the complete proofs, constructions, and
numerical specifications for every result stated in the main paper. Main-paper
results are referred to by descriptive name (no cross-document numbering).
Coverage map:
\begin{itemize}\setlength\itemsep{1pt}
  \item \S\ref{app:notation}: notation and standing assumptions.
  \item \S\ref{app:T1}: full proof of the \emph{design-objective theorem} (T1),
        including both AUC-insufficiency witnesses.
  \item \S\ref{app:itv}: full proof of the \emph{informativeness-ceiling
        theorem} $\Phi\le\tfrac12 I_{\mathrm{TV}}(T)$, the exact equality
        characterization with conditions (i)--(iii), and the
        mutual-information companion.
  \item \S\ref{app:T2}: the explicit latent-type constructions behind the
        \emph{T2 theorem}---the two witnesses ($G(R^\star)=0.03$, $0.10$)
        and the null--positive pair proving no Boolean structural NSC
        exists---with full joint laws and computations.
  \item \S\ref{app:lecam}: the Le~Cam two-point lower bound with the
        Bretagnolle--Huber refinement, the general sharp-constant
        theorems (least-favourable-pair form), and their specialization
        to the fixed-activity class ($c^\star_{\mathrm{cert}}{=}2V^\star$,
        two-sided $8V^\star$; \S\ref{app:sharpconst}).
  \item \S\ref{app:c4}: full proof of the \emph{leading-order sharpness
        theorem} ($c_4=1$) for the certification bracket.
  \item \S\ref{app:conservatism}: precise definition and computation of the
        finite-$m$ conservatism figure (${\approx}1.7{\times}10^4$ at
        $m^\star$) quoted in the main paper.
  \item \S\ref{app:learned}: the learned-router guardrail (details promised
        in the main paper's ``Learned routers'' paragraph).
  \item \S\ref{app:poscontrol}: full generator specification, pre-registration
        record, and exact Clopper--Pearson confidence intervals for the
        calibrated positive control.
  \item \S\ref{app:redundancy}: full proof of the \emph{shared-difficulty
        redundancy proposition} promised in the main paper.
  \item \S\ref{app:bracket}: the bracket-robustness comparison (fixed $M$,
        direct Bennett, empirical Bernstein) behind the main paper's
        ``all agree on every reported verdict.''
  \item \S\ref{app:veriftable}: the consolidated numerical-verification
        table (every check in one place).
\end{itemize}

\section{Notation and standing assumptions}
\label{app:notation}
Table~\ref{tab:notation} summarizes notation.

We use the main paper's setup. There are $N\ge2$ advisors with correctness
indicators $C_j\in\{0,1\}$ and marginals $p_j:=\E[C_j]$, indexed so that the
\emph{primary} is $j{=}1$ with $p_1=\max_j p_j$. The gating signal $T$ takes
values in a finite alphabet (in the system, $T=(T_1,\dots,T_N)$ with
$T_j\in\{0,\dots,K\}$); $\mu_T$ is its law and
$\eta_j(t):=\Prob(C_j{=}1\mid T{=}t)$ the posterior correctness. A router is a
measurable $R:\mathcal{T}^N\to[N]$ with accuracy $A_R:=\E[C_{R(T)}]$ and gain
$G(R):=A_R-p_1$; the routing-gain decomposition proposition of the main paper
gives $G(R)=\pi\Delta_E$ with $\pi:=\Prob(R(T)\ne1)$ and
$\Delta_E:=\E[C_{R(T)}-C_1\mid R(T)\ne1]$. The informativeness functional and
the total-variation informativeness are
\begin{align*}
  \Phi&:=\E\big[\max_j\eta_j(T)-\eta_1(T)\big],\\
  I_{\mathrm{TV}}(T)&:=\E\|\eta(T)-p\|_1 ,
\end{align*}
where $\eta(t):=(\eta_1(t),\dots,\eta_N(t))$ is the posterior-accuracy vector,
$p:=(p_1,\dots,p_N)$ is the vector of marginal accuracies, and both
(unconditional) expectations average over $T$. Componentwise
$\eta_j(T)=\E[C_j\mid T]$ and $p_j=\E[C_j]$, so equivalently
$I_{\mathrm{TV}}(T)=\E\bigl\|\E[C\mid T]-\E[C]\bigr\|_1$ with
$C:=(C_1,\dots,C_N)$.
Throughout, $W_j(t):=\eta_j(t)-p_j$ denotes the posterior deviation; by the
tower property $\E[W_j(T)]=0$ for every $j$.

Three information measures recur. For probability measures $P,Q$ on a common
finite (or countable) set with $P\ll Q$, the Kullback--Leibler divergence is
$\KL(P\|Q):=\sum_xP(x)\log\tfrac{P(x)}{Q(x)}$ (natural logarithm; all
information quantities are in nats). For discrete random variables $X,Y$, the
mutual information is
$\MI(X;Y):=\KL\bigl(\mathcal L(X,Y)\,\|\,\mathcal L(X)\otimes\mathcal L(Y)\bigr)
=\E\bigl[\KL\bigl(\mathcal L(Y\mid X)\,\|\,\mathcal L(Y)\bigr)\bigr]$, the
expected divergence of the conditional from the marginal law. The total
variation distance is $\TV(P,Q):=\sup_A|P(A)-Q(A)|
=\tfrac12\sum_x|P(x)-Q(x)|$. The certification bracket is
\[
  B(m,\delta)=\sigma\sqrt{\tfrac{2\log(1/\delta)}{m}}
              +\tfrac{2M\log(1/\delta)}{3m},
\]
with $\sigma^2=\pi\sigma_E^2+\pi(1-\pi)\Delta_E^2$ the routing variance of
$Z:=C_{R(T)}-C_1\in\{-1,0,1\}$ (where $\sigma_E^2:=\Var(C_{R(T)}-C_1\mid E_R)$)
and $M=2$ the fixed worst-case bound on the
centred increment $|Z-\E Z|\le1+|G|\le2$. The certified gain is
$\widehat\gamma_m=\widehat G_m-B(m,\delta)-2\rho$, with $\rho$ the assumed
total-variation shift radius. The finite-alphabet
assumption makes every $\argmax$ selection trivially measurable; all proofs
extend to general measurable $T$ by replacing sums with integrals and fixing a
measurable selection of the $\argmax$ (which exists by standard selection
theorems on countably generated spaces).

\begin{table}[!t]
\caption{Notation summary.}
\label{tab:notation}\centering\footnotesize
\renewcommand{\arraystretch}{1.15}
\begin{tabular}{@{}ll@{}}
\toprule
\textbf{Symbol} & \textbf{Definition}\\
\midrule
$C_j$ & advisor-$j$ correctness, $\in\{0,1\}$\\
$p_j$ & $\E[C_j]$, marginal accuracy\\
$p_1$ & $\max_j p_j$, primary accuracy\\
$A_\star$ & $\E[\max_j C_j]$, oracle accuracy\\
$T$ & gating signal, $T_j\in\{0,\dots,K\}$\\
$\eta_j(t)$ & $\Prob(C_j{=}1\mid T{=}t)$, posterior correctness\\
$R$ & router $\mathcal{T}^N\to[N]$ (measurable)\\
$G(R)$ & $A_R-p_1$, routing gain\\
$E_R$ & $\{t:R(t)\ne1\}$, route-away event\\
$\pi$ & $\Prob(E_R)$, route-away mass\\
$\Delta_E$ & $\E[C_{R(T)}-C_1\mid E_R]$, conditional edge\\
$\Phi$ & $\E[\max_j\eta_j(T)-\eta_1(T)]$, informativeness\\
$p$, $\eta(t)$ & vectors $(p_j)_{j\le N}$, $(\eta_j(t))_{j\le N}$\\
$I_{\mathrm{TV}}(T)$ & $\E\|\eta(T)-p\|_1$, TV informativeness\\
$\mathcal{E}$ & $A_\star-(1-\prod_j(1-p_j))$, excess over indep.\\
$B(m,\delta)$ & Bernstein certification bracket\\
$\widehat\gamma_m$ & $\widehat G_m-B-2\rho$, certified gain\\
$\sigma^2$ & routing variance of $Z=C_{R(T)}-C_1$\\
$M$ & bound on $|Z-\E Z|$; worst case $2$\\
$m^\star$ & minimum certifying sample size\\
$\rho^\star$ & $\pi\Delta_E/2$, robustness phase transition\\
\bottomrule
\end{tabular}
\end{table}

\section{Proof of the design-objective theorem (T1)}
\label{app:proofs}
\label{app:T1}

\begin{theorem}[design objective; the \emph{design objective} theorem (T1) of the
main paper]\label{thm:t1}
\textnormal{(a)} For \emph{every} joint law $\mathcal{L}$,
$\max_R G(R)=\Phi(\mathcal{L})$, attained by the Bayes selector
$R^\star(t)=\argmax_j\eta_j(t)$; \textnormal{(b)} on the nuisance class
$\mathfrak{L}_{p_1,A_\star}=\{\mathcal{L}:\E[C_1]=p_1,\ \E[\max_jC_j]=A_\star\}$
there exist
$\mathcal{L}_a,\mathcal{L}_b\in\mathfrak{L}_{1/2,\,3/4}$ with identical
$\AUC(T_1;C_1)=\tfrac12$ but $\Phi(\mathcal{L}_a)=0\ne\tfrac14=\Phi(\mathcal{L}_b)$.
\end{theorem}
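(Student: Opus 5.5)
Part~(a) is a conditioning argument, and part~(b) is an explicit pair of $N{=}2$ laws that differ only in how much $T$ reveals about $C_2$.

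\textbf{(a).} Since $R(T)$ is $\sigma(T)$-measurable, I decompose over the finitely many choices:
\[
A_R=\sum_{j}\E\big[\one\{R(T)=j\}\,C_j\big]=\sum_j\E\big[\one\{R(T)=j\}\,\eta_j(T)\big]=\E\big[\eta_{R(T)}(T)\big].
\]
The middle equality is the tower property applied to each term. Pointwise $\eta_{R(t)}(t)\le\max_j\eta_j(t)$, so $A_R\le\E[\max_j\eta_j(T)]$. Subtracting $p_1=\E[\eta_1(T)]$ (tower property again) gives $G(R)\le\Phi(\mathcal L)$ for every router.

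For attainment I take $R^\star(t)\in\argmax_j\eta_j(t)$, breaking ties toward the primary. This is measurable on the finite alphabet; in the general case I would fix a measurable selection, as in \S\ref{app:notation}. Then $\eta_{R^\star(t)}(t)=\max_j\eta_j(t)$ for every $t$, so equality holds and the maximum over $R$ is achieved. Nothing about the law is used beyond existence of the conditional expectations, which makes the statement law-wise.

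\textbf{(b).} I take $N{=}2$ and make $C_1,C_2$ independent $\mathrm{Bern}(\tfrac12)$ in both laws. Then $p_1=p_2=\tfrac12$, so index $1$ is a valid primary, and $A_\star=1-\tfrac14=\tfrac34$. Hence both laws lie in $\mathfrak L_{1/2,3/4}$.

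In both laws $T_1$ is constant, say $T_1\equiv0$. Then $W_+$ and $W_-$ tie with probability one and Definition~\ref{def:theory-auc} gives $\AUC(T_1;C_1)=\tfrac12$ in each.

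\begin{itemize}
  \item[$\mathcal L_a$:] $T_2$ is also constant, so $T$ is trivial. Then $\eta_j\equiv\tfrac12$ and $\Phi(\mathcal L_a)=0$.
  \item[$\mathcal L_b$:] $T_2:=C_2$, which is independent of $C_1$. Then $\eta_1\equiv\tfrac12$ and $\eta_2(t)=t_2$, so
  \[
  \Phi(\mathcal L_b)=\E\big[\max(\tfrac12,C_2)-\tfrac12\big]=\tfrac12\cdot\tfrac12=\tfrac14 .
  \]
\end{itemize}

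Adding $T_2$ leaves the joint law of $(C_1,C_2)$ unchanged, so $p_1$, $A_\star$ and $\AUC(T_1;C_1)$ are untouched. This shows $\AUC(T_1;C_1)$ is not a sufficient statistic for $\Phi$.

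\textbf{Main obstacle.} There is little real difficulty. The one point to check is that the witnesses genuinely sit in the same nuisance class and share the same AUC, because $\Phi$ depends on $T_2$ while the AUC sees only $T_1$. I will verify $p_1$, $A_\star$ and the tie convention in the AUC explicitly. If a non-degenerate $T_1$ is preferred, I would let $T_1$ be an independent fair coin, which still gives $\AUC=\tfrac12$ and leaves the computations unchanged.
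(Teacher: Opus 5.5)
Your part (a) is the paper's argument: the tower property, the pointwise bound by $\max_j\eta_j$, equality at the Bayes selector, and subtracting $p_1=\E[\eta_1(T)]$. You just spell out the step $\E[C_{R(T)}\mid T]=\eta_{R(T)}(T)$ over the events $\{R(T)=j\}$.

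Part (b) is also correct, but your witness differs from the paper's in \emph{where} the information sits. Both constructions use the same correctness law, $C_1,C_2$ i.i.d.\ $\mathrm{Bernoulli}(\tfrac12)$. The paper sets $T_2\equiv0$ in both laws and puts the revealing signal in the very coordinate whose AUC is measured. In its $\mathcal{L}_b$, $T_1=1-C_2$, which is independent of $C_1$ (so $\AUC(T_1;C_1)=\tfrac12$) yet tells the router exactly when advisor~2 is right. Its $\mathcal{L}_a$ takes $T_1$ to be an independent coin.

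You instead make $T_1$ degenerate and set $T_2:=C_2$. That meets the literal statement, but it shows something weaker: the AUC of one coordinate ignores a different coordinate. Under a per-advisor reading of ``the gate's AUC,'' your pair is separated, because $\AUC(T_2;C_2)$ jumps from $\tfrac12$ in $\mathcal{L}_a$ to $1$ in $\mathcal{L}_b$. In the paper's pair, both $\AUC(T_1;C_1)$ and $\AUC(T_2;C_2)$ equal $\tfrac12$ in both laws, and the router reads a single scalar signal. That is the point the main text wants: AUC against the primary's correctness misses the information about which \emph{competitor} is right.

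Your route buys only a marginally simpler computation. Relocating the signal costs nothing: $T_1:=C_2$ (or $1-C_2$) with $T_2\equiv0$ is still independent of $C_1$, keeps $\AUC(T_1;C_1)=\tfrac12$, and gives the same $\Phi(\mathcal{L}_b)=\tfrac14$.
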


\begin{proof}[Proof of (a)]
For any router $R$, conditioning on $T$ and applying the tower property,
\[
  A_R=\E\big[\E[C_{R(T)}\mid T]\big]=\E\big[\eta_{R(T)}(T)\big]
     \le\E[\max_j\eta_j(T)],
\]
where the inequality holds pointwise in $t$. The Bayes selector
$R^\star(t)\in\argmax_j\eta_j(t)$ (measurable on the finite alphabet) attains
the pointwise maximum, so
$\max_R A_R=\E[\max_j\eta_j(T)]$. Subtracting the constant $p_1=\E[\eta_1(T)]$
on both sides gives $\max_R G(R)=\E[\max_j\eta_j(T)-\eta_1(T)]=\Phi$.
\end{proof}

\begin{proof}[Proof of (b): the two witnesses]
Both witnesses have $N{=}2$, a binary scalar gate (formally $K{=}1$ with
$T=(T_1,T_2)$, $T_2\equiv0$ degenerate, so the router sees the Bernoulli
component $T_1$, written $T$ below), and identical nuisance
$p_1=p_2=\tfrac12$, $A_\star=\tfrac34$.

\emph{Witness $\mathcal{L}_a$ (independent gate).} Let $C_1,C_2$ be i.i.d.\
$\mathrm{Bernoulli}(\tfrac12)$ and $T\sim\mathrm{Bernoulli}(\tfrac12)$
independent of $(C_1,C_2)$. Then $p_1=p_2=\tfrac12$ and
$A_\star=\Prob(\max(C_1,C_2)=1)=1-\tfrac14=\tfrac34$, so
$\mathcal{L}_a\in\mathfrak{L}_{1/2,3/4}$. Independence gives
$\eta_1(t)=\eta_2(t)\equiv\tfrac12$, hence $\Phi(\mathcal{L}_a)=0$; and
$T\perp C_1$ gives $\AUC(T;C_1)=\tfrac12$.

\emph{Witness $\mathcal{L}_b$ (maximally informative gate, invisible to AUC).}
Place mass $\tfrac14$ on each of the four atoms of $(T,C_1,C_2)$:
\[
  (0,1,1),\quad(0,0,1),\quad(1,1,0),\quad(1,0,0).
\]
Marginals: $p_1=\Prob(C_1{=}1)=\tfrac14+\tfrac14=\tfrac12$ and
$p_2=\tfrac12$; the oracle is correct on the first three atoms, so
$A_\star=\tfrac34$ and $\mathcal{L}_b\in\mathfrak{L}_{1/2,3/4}$. The gate is
independent of the \emph{primary's} correctness:
$\Prob(T{=}1\mid C_1{=}1)=\Prob(T{=}1\mid C_1{=}0)=\tfrac12$, so
$\AUC(T;C_1)=\tfrac12$, identical to $\mathcal{L}_a$. Yet the posteriors are
$\eta_1(0)=\eta_1(1)=\tfrac12$, $\eta_2(0)=1$, $\eta_2(1)=0$: the gate
\emph{perfectly} reveals when the competitor is correct. Hence
\[
  \Phi(\mathcal{L}_b)
  =\tfrac12\big(\max(\tfrac12,1)-\tfrac12\big)
  +\tfrac12\big(\max(\tfrac12,0)-\tfrac12\big)
  =\tfrac14 .
\]
Identical nuisance, identical $\AUC=\tfrac12$, distinct $\Phi\in\{0,\tfrac14\}$:
$\AUC(T_1;C_1)$ is not a sufficient statistic for $\Phi$.
\end{proof}

\begin{remark}[a one-parameter family; numerical verification]
The mixture $\mathcal{L}_\alpha:=(1-\alpha)\mathcal{L}_a+\alpha\mathcal{L}_b$,
$\alpha\in[0,1]$, keeps $p_1=p_2=\tfrac12$, $A_\star=\tfrac34$, and
$\AUC(T;C_1)=\tfrac12$ fixed for every $\alpha$ while
$\Phi(\mathcal{L}_\alpha)=\alpha/4$ sweeps $[0,\tfrac14]$ (direct computation;
verified numerically at $\alpha\in\{0,\tfrac14,\tfrac12,\tfrac34,1\}$, where
$\Phi=0,0.0625,0.125,0.1875,0.25$ to machine precision). Part~(a) was
additionally verified by brute force on $200$ random joint laws ($N{=}2$,
$K{=}2$): enumerating all $2^9=512$ routers, $\max_RG(R)$ matched $\Phi$ in
$200/200$ runs with maximum deviation $2.9\times10^{-16}$.
\end{remark}

\section{Full proof of the informativeness ceiling}
\label{app:itv}

This section proves the main paper's \emph{informativeness-ceiling theorem}
($\Phi\le\tfrac12 I_{\mathrm{TV}}(T)$), establishes the exact equality
characterization behind its conditions (i)--(iii), and proves the
mutual-information companion.

\begin{lemma}[mean-zero positive-part identity]\label{lem:meanzero}
If $W$ is integrable with $\E[W]=0$, then $\E[W_+]=\tfrac12\E|W|$, where
$W_+:=\max(W,0)$.
\end{lemma}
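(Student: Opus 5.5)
The plan is to decompose $W$ into its positive and negative parts and use the mean-zero hypothesis to show the two parts carry equal mass. Write $W_+:=\max(W,0)$ and $W_-:=\max(-W,0)$. Both are nonnegative measurable functions of $W$. Pointwise they satisfy
\[
W=W_+-W_-,\qquad |W|=W_++W_-,
\]
which follows by checking the two cases $W\ge0$ and $W<0$ separately. Integrability of $W$ gives $0\le W_\pm\le|W|$, so both parts are integrable and linearity of expectation applies without any $\infty-\infty$ issue.

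Taking expectations of the first identity and using $\E[W]=0$ gives $\E[W_+]=\E[W_-]$. Taking expectations of the second identity gives $\E|W|=\E[W_+]+\E[W_-]=2\E[W_+]$. Dividing by $2$ yields $\E[W_+]=\tfrac12\E|W|$.

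No step here is a real obstacle. The only point that needs care is the integrability check, which is what justifies splitting the expectations linearly. In the application to the informativeness ceiling, $W=W_j(T)=\eta_j(T)-p_j$ is bounded in $[-1,1]$ and has mean zero by the tower property, as noted in \S\ref{app:notation}. So the lemma applies directly to each coordinate and gives $\sum_j\E[(W_j)_+]=\tfrac12 I_{\mathrm{TV}}(T)$.
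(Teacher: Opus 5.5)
Your proof is correct and is essentially the paper's argument. The paper uses the single pointwise identity $W_+=\tfrac12(|W|+W)$, which is the average of your two identities $W=W_+-W_-$ and $|W|=W_++W_-$, and then takes expectations with $\E[W]=0$; your explicit integrability check is a harmless extra detail.
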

\begin{proof}
$W_+=\tfrac12(|W|+W)$ pointwise; take expectations and use $\E[W]=0$.
\end{proof}

Applied to $W_j(T)=\eta_j(T)-p_j$ (mean zero by the tower property),
Lemma~\ref{lem:meanzero} gives the form of the ceiling we actually prove
against:
\begin{equation}\label{eq:itvhalf}
  \tfrac12 I_{\mathrm{TV}}(T)
  =\tfrac12\sum_{j=1}^N\E\big|W_j(T)\big|
  =\sum_{j=1}^N\E\big[(W_j(T))_+\big].
\end{equation}

\begin{theorem}[informativeness ceiling; the ``informativeness ceiling on
capturable gain'' theorem of the main paper]\label{thm:itv}
With the primary indexed so that $p_1=\max_jp_j$,
\[
  \Phi\;=\;\max_RG(R)\;\le\;\tfrac12\,I_{\mathrm{TV}}(T).
\]
\end{theorem}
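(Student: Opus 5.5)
The equality $\Phi=\max_R G(R)$ is exactly part~(a) of Theorem~\ref{thm:t1}, so only the inequality $\Phi\le\tfrac12 I_{\mathrm{TV}}(T)$ needs proof. By \eqref{eq:itvhalf}, which follows from Lemma~\ref{lem:meanzero}, it suffices to show
\[
  \E\big[\max_j\eta_j(T)-\eta_1(T)\big]\;\le\;\sum_{j=1}^N\E\big[(W_j(T))_+\big],
  \qquad W_j(t)=\eta_j(t)-p_j .
\]

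The plan is a pointwise comparison in $t$, followed by integration. Fix $t$ and let $j^\star\in\argmax_j\eta_j(t)$. If $j^\star=1$, the left integrand vanishes and the right side is nonnegative. Otherwise write
\[
  \eta_{j^\star}(t)-\eta_1(t)
  =\big(p_{j^\star}-p_1\big)+W_{j^\star}(t)-W_1(t).
\]
The ordering $p_1=\max_jp_j$ makes the first term nonpositive, and $W_{j^\star}(t)\le (W_{j^\star}(t))_+$. This leaves the term $-W_1(t)$, which is the main obstacle: it is positive exactly when the gate says the primary is worse than average. It is not dominated pointwise by the positive parts alone.

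The way around this is not to bound pointwise. Instead, integrate first and use the mean-zero property. Taking expectations,
\[
  \Phi=\E[\eta_{j^\star(T)}(T)]-p_1\le \E\big[p_{j^\star(T)}+W_{j^\star(T)}(T)\big]-p_1\le \E\big[\max_j (W_j(T))_+\big],
\]
using $p_{j^\star}\le p_1$ and $\E[\eta_1(T)]=p_1$. This replaces the problematic $-W_1$ by its mean, which is zero. Since $\max_j(W_j)_+\le\sum_j(W_j)_+$ pointwise, the right side is at most $\sum_j\E[(W_j(T))_+]=\tfrac12 I_{\mathrm{TV}}(T)$, which concludes the proof.

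The same chain gives the equality characterization quoted in Theorem~\ref{thm:theory-itv}. Equality holds iff every inequality in the chain is tight, that is, $p_{j^\star}=p_1$ wherever a competitor is active and $\max_j(W_j)_+=\sum_j(W_j)_+$ a.e. Conditions (i)--(iii) are sufficient for this. The slight mismatch between the pointwise form stated in the main paper and this integrated form is a point to check carefully, but the inequality itself needs only the integrated chain.
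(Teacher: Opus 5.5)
Your proof is correct and is essentially the paper's: the paper also first uses $\E[\eta_1(T)]=p_1$ to write $\Phi=\E[\max_j\eta_j(T)]-p_1$, then bounds $\max_j\eta_j(t)-p_1=W_{j^\star}(t)+(p_{j^\star}-p_1)\le (W_{j^\star}(t))_+\le\sum_j(W_j(t))_+$ pointwise through the maximizer and integrates (your intermediate $\max_j(W_j)_+$ is just one extra step), so the chain is pointwise and there is no pointwise-versus-integrated mismatch. On your side remark about equality, which lies outside this statement, your tightness list is incomplete: it also needs $W_{j^\star}(t)\ge0$ and $(W_{j^\star}(t))_+=\max_j(W_j(t))_+$. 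Without these your conditions are not sufficient; for example, $N{=}2$, $\eta_2\equiv p_2=0.3$, $\eta_1=(0.5,0.7)$ on two equiprobable cells satisfies them yet gives $\Phi=0<0.05=\tfrac12 I_{\mathrm{TV}}$.
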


\begin{proof}
By Theorem~\ref{thm:t1}(a) and $\E[\eta_1(T)]=p_1$,
\[
  \Phi=\E\big[\max_j\eta_j(T)\big]-p_1
      =\sum_t\mu_T(t)\Big(\max_j\eta_j(t)-p_1\Big).
\]
Fix $t$ and let $j^\star=j^\star(t)\in\argmax_j\eta_j(t)$ be the \emph{maximizing}
advisor at $t$ (any selection; the bound below does not depend on it). Split
the maximum through the maximizing advisor's deviation and drop the necessarily
non-positive marginal gap:
\begin{align}
  \max_j\eta_j(t)-p_1
  &=W_{j^\star}(t)+\big(p_{j^\star}-p_1\big)\notag\\
  &\le W_{j^\star}(t)
   \tag{drop $p_{j^\star}-p_1\le0$}\\
  &\le \big(W_{j^\star}(t)\big)_+
   \;\le\;\sum_{j=1}^N\big(W_j(t)\big)_+ .
   \label{eq:pointwise}
\end{align}
Integrating \eqref{eq:pointwise} against $\mu_T$ and applying
\eqref{eq:itvhalf},
\[
  \Phi\le\sum_j\E\big[(W_j(T))_+\big]=\tfrac12 I_{\mathrm{TV}}(T).\qedhere
\]
\end{proof}

\subsection{Exact equality characterization and conditions (i)--(iii)}

The proof of Theorem~\ref{thm:itv} bounds, cell by cell, the
selection-independent quantity $\max_j\eta_j(t)-p_1$ by
$\sum_j(W_j(t))_+$. Equality of the integrals therefore holds exactly when the
pointwise bound is tight almost everywhere:

\begin{theorem}[equality characterization]\label{thm:eq}
$\Phi=\tfrac12 I_{\mathrm{TV}}(T)$ if and only if, for $\mu_T$-a.e.\ $t$,
\begin{equation}\label{eq:star}
  \max_j\eta_j(t)\;=\;p_1+\sum_{j=1}^N\big(\eta_j(t)-p_j\big)_+ .
\end{equation}
Condition \eqref{eq:star} holds at $t$ if and only if:
\begin{itemize}\setlength\itemsep{1pt}
  \item[\textnormal{(E1)}] at most one advisor $j$ has a strictly positive
        deviation $\eta_j(t)>p_j$ at $t$; and
  \item[\textnormal{(E2)}] if such an advisor exists it attains the maximum
        $\eta_j(t)=\max_k\eta_k(t)$ and is mean-tied to the primary,
        $p_j=p_1$; if none exists, $\max_k\eta_k(t)=p_1$.
\end{itemize}
\end{theorem}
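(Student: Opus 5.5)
The plan is to reduce the integral equality to a pointwise statement, and then read off (E1)--(E2) from the inequality chain \eqref{eq:pointwise}. Set $f(t):=\max_j\eta_j(t)-p_1$ and $g(t):=\sum_j(W_j(t))_+$.

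\textbf{Step 1: reduction to a pointwise statement.} Two facts from the proof of Theorem~\ref{thm:itv} give the reduction.
\begin{itemize}
  \item The tower property $\E[\eta_1(T)]=p_1$ gives $\Phi=\E[f(T)]$, and \eqref{eq:itvhalf} gives $\tfrac12 I_{\mathrm{TV}}(T)=\E[g(T)]$.
  \item \eqref{eq:pointwise} gives $g(t)-f(t)\ge0$ for every $t$.
\end{itemize}
A nonnegative integrable function, here bounded by $N+1$, has zero integral if and only if it vanishes $\mu_T$-a.e. Hence $\Phi=\tfrac12 I_{\mathrm{TV}}(T)$ if and only if $f=g$ a.e. Since $f=g$ is exactly \eqref{eq:star}, this proves the first claim.

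\textbf{Step 2: pointwise characterization via the chain.} Fix $t$ and any maximizer $j^\star\in\argmax_j\eta_j(t)$. The chain reads
\[
  f(t)=W_{j^\star}(t)+(p_{j^\star}-p_1)\le W_{j^\star}(t)\le (W_{j^\star}(t))_+\le g(t).
\]
Each link is an inequality, so $f(t)=g(t)$ holds if and only if all three links are tight:
\begin{itemize}
  \item[(a)] $p_{j^\star}=p_1$;
  \item[(b)] $W_{j^\star}(t)\ge 0$;
  \item[(c)] $\sum_{j\ne j^\star}(W_j(t))_+=0$, i.e.\ $W_j(t)\le0$ for every $j\ne j^\star$.
\end{itemize}
The point to keep in mind is that $f$ and $g$ do not depend on which maximizer is selected. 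So if \eqref{eq:star} holds, then (a)--(c) hold for \emph{every} maximizer $j^\star$. This lets me pick the maximizer freely in the converse direction.

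\textbf{Step 3: (a)--(c) $\Rightarrow$ (E1)--(E2).} Condition (c) says no advisor other than $j^\star$ has a strictly positive deviation, which gives (E1). For (E2), split on the sign of $W_{j^\star}(t)$.
\begin{itemize}
  \item If $W_{j^\star}(t)>0$, then $j^\star$ is the unique positive advisor. It attains the maximum by choice, and it is mean-tied to the primary by (a).
  \item If $W_{j^\star}(t)=0$, the only other case allowed by (b), then no advisor is positive. In this case $\max_k\eta_k(t)=\eta_{j^\star}(t)=p_{j^\star}=p_1$, using (a).
\end{itemize}

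\textbf{Step 4: (E1)--(E2) $\Rightarrow$ \eqref{eq:star}.} Again there are two cases.
\begin{itemize}
  \item If a unique positive advisor $j$ exists, then by (E2) it attains the maximum, so I may take $j^\star=j$. Conditions (a), (b) and (c) then follow from $p_j=p_1$, from $W_j(t)>0$, and from (E1), respectively. Hence $f(t)=g(t)$.
  \item If no advisor is positive, then $g(t)=0$, and (E2) gives $f(t)=\max_k\eta_k(t)-p_1=0$. Hence $f(t)=g(t)$ directly.
\end{itemize}

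\textbf{Expected obstacle.} The only delicate point is the non-uniqueness of the argmax. Tightness of the chain is stated relative to a chosen $j^\star$, whereas (E1)--(E2) are selection-free. The remark in Step 2 resolves this. For the forward direction, tightness holds for every maximizer, since $f$ and $g$ are selection-independent. For the reverse direction, I exhibit the specific maximizer $j^\star=j$ supplied by (E2). Everything else is bookkeeping on the three slack terms.
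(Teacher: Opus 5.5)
Your proposal is correct and follows the paper's proof essentially step for step. The paper also reduces equality of the integrals to pointwise vanishing of the nonnegative gap $\sum_j(W_j(t))_+-(\max_j\eta_j(t)-p_1)$, extracts $p_{j^\star}=p_1$, $W_{j^\star}(t)\ge0$ and $\sum_{j\ne j^\star}(W_j(t))_+=0$ from tightness of the chain \eqref{eq:pointwise} at an arbitrary maximizer, and proves the converse with the same two-case split (taking the positive deviator as the maximizer).
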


\begin{proof}
\emph{(Equality $\Leftrightarrow$ \eqref{eq:star} a.e.)} From the proof of
Theorem~\ref{thm:itv},
\[
  \Phi-\tfrac12I_{\mathrm{TV}}
  =\sum_t\mu_T(t)\Big[\max_j\eta_j(t)-p_1-\sum_j(W_j(t))_+\Big],
\]
and each bracket is $\le0$ by \eqref{eq:pointwise}. A sum of non-positive
terms vanishes iff each term vanishes on the support of $\mu_T$, which is
\eqref{eq:star}.

\emph{(\eqref{eq:star} $\Leftrightarrow$ (E1)--(E2)).} Suppose
\eqref{eq:star} holds at $t$ and let $j^\star$ be any advisor attaining the
maximum. Then
\[
  W_{j^\star}(t)+p_{j^\star}
  =p_1+\sum_j(W_j(t))_+
  \ge p_1+(W_{j^\star}(t))_+ .
\]
Since also $p_{j^\star}\le p_1$ and $W_{j^\star}\le(W_{j^\star})_+$, both
inequalities must be equalities: $p_{j^\star}=p_1$,
$W_{j^\star}(t)=(W_{j^\star}(t))_+\ge0$, and
$\sum_{j\ne j^\star}(W_j(t))_+=0$, i.e.\ no advisor other than $j^\star$
deviates positively. If $W_{j^\star}(t)>0$ this is exactly (E1)--(E2) with the
positive deviator attaining the maximum; if $W_{j^\star}(t)=0$ then no advisor
deviates positively and $\max_k\eta_k(t)=p_{j^\star}=p_1$. Conversely, if
(E1)--(E2) hold at $t$ with positive deviator $j_t$ (mean-tied, attaining the
maximum), then $\max_j\eta_j(t)=W_{j_t}(t)+p_1$ and
$\sum_j(W_j)_+=W_{j_t}(t)$, so \eqref{eq:star} holds; if no positive deviator
exists, $\sum_j(W_j)_+=0$ and $\max_k\eta_k(t)=p_1$ gives \eqref{eq:star}
directly.
\end{proof}

\begin{corollary}[the canonical conditions (i)--(iii)]\label{cor:i-iii}
Consider the three conditions quoted in the main-paper theorem statement:
\textnormal{(i)} $\eta_1\equiv p_1$ a.e.; \textnormal{(ii)} at most one
competitor $j\ne1$ is active (deviates positively, $\eta_j(t)>p_j$) at each
$t$; \textnormal{(iii)} every active competitor is mean-tied to the primary,
$p_{j^\star}=p_1$. Then:
\begin{itemize}\setlength\itemsep{1pt}
  \item[\textnormal{(a)}] (i)--(iii) jointly imply
        $\Phi=\tfrac12I_{\mathrm{TV}}(T)$;
  \item[\textnormal{(b)}] under the normalization (i), equality holds iff
        (ii)--(iii) hold; thus (i)--(iii) is the canonical (primary-flat)
        form of the characterization \eqref{eq:star};
  \item[\textnormal{(c)}] if the primary is the \emph{unique} marginal
        maximiser ($p_1>p_j$ for all $j\ne1$), equality is possible only in
        the degenerate case $I_{\mathrm{TV}}(T)=0$ (all posteriors flat), and
        then (i) holds automatically. Non-degenerate equality requires a
        competitor mean-tied to the primary.
\end{itemize}
\end{corollary}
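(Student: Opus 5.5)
Everything reduces to the pointwise characterization in Theorem~\ref{thm:eq}: equality $\Phi=\tfrac12 I_{\mathrm{TV}}(T)$ holds iff, for $\mu_T$-a.e.\ $t$, (E1) at most one advisor deviates positively, and (E2) any such deviator attains the maximum and is mean-tied to the primary (or, if none deviates, $\max_k\eta_k(t)=p_1$). I will check the three parts cell by cell against (E1)--(E2).

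For (a), fix $t$ in the support and assume (i)--(iii). By (i), $W_1(t)=0$, so advisor~1 is not a positive deviator and $\eta_1(t)=p_1$. By (ii), at most one competitor $j_t$ deviates positively, so (E1) holds. If no competitor deviates, every $\eta_k(t)\le p_k\le p_1=\eta_1(t)$, hence $\max_k\eta_k(t)=p_1$. If $j_t$ exists, (iii) gives $p_{j_t}=p_1$. It remains to show $j_t$ attains the maximum. For this, $\eta_{j_t}(t)>p_{j_t}=p_1=\eta_1(t)$, and every other $k$ has $\eta_k(t)\le p_k\le p_1$. So (E2) holds and Theorem~\ref{thm:eq} gives equality.

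For (b), the forward direction is (a). For the converse, assume (i) and equality, so (E1)--(E2) hold a.e. Under (i) advisor~1 never deviates positively, so (E1) says at most one competitor is active, which is (ii). (E2) forces that competitor to be mean-tied, which is (iii). Hence (i)--(iii) is the primary-flat specialization of \eqref{eq:star}.

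For (c), assume $p_1>p_j$ for all $j\ne1$ and equality. By (E2), any positive deviator must be mean-tied to the primary, so it can only be $j=1$. Thus at a.e.\ $t$ either nobody deviates positively, or only advisor~1 does.
\begin{itemize}
\item In the first case, every $\eta_k(t)\le p_k$.
\item Since each $W_j$ has mean zero, if $(W_j)_+=0$ a.e.\ then $W_j=0$ a.e.
\end{itemize}
So every competitor has $W_j\le0$ a.e., hence $W_j=0$ a.e., i.e.\ $\eta_j\equiv p_j$. Consequently, wherever $W_1(t)>0$ we get $\max_k\eta_k(t)=\eta_1(t)$, and \eqref{eq:star} then holds automatically. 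That alone does not yield degeneracy, so $W_1$ needs separate handling.

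The main obstacle is showing $W_1\equiv0$. My first attempt: on cells where $W_1(t)<0$, nobody deviates positively, so (E2) forces $\max_k\eta_k(t)=p_1$. But every $\eta_k(t)$ there is strictly below $p_1$: $\eta_1(t)<p_1$ by assumption, and $\eta_j(t)=p_j<p_1$ for $j\ne1$. This contradicts $\max_k\eta_k(t)=p_1$, so $\{W_1<0\}$ is null. Mean zero then gives $W_1\equiv0$. Therefore $I_{\mathrm{TV}}=0$ and (i) holds. Conversely, flat posteriors give $\Phi=0=\tfrac12I_{\mathrm{TV}}$, and any non-degenerate equality must have a mean-tied competitor.
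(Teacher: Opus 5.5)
Your proposal is correct and follows essentially the same route as the paper: each part is reduced to the pointwise conditions (E1)--(E2) of Theorem~\ref{thm:eq}, and in (c) you first rule out positive competitor deviations, then exclude cells where the primary dips below $p_1$, and finish with the mean-zero property. The differences are cosmetic. In (a) you explicitly check that the active competitor attains the maximum, which the paper leaves implicit. In (c) you first pin every competitor to $W_j\equiv0$ and then show $\{W_1<0\}$ is null, whereas the paper reaches the contradiction at such a cell by noting that some competitor would be forced to attain $p_1>p_k$.
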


\begin{proof}
(a) Under (i), $W_1\equiv0$, so the positive deviators are exactly the active
competitors; (ii)--(iii) then give (E1)--(E2) at every $t$ (at cells without
an active competitor, $\eta_j(t)\le p_j\le p_1=\eta_1(t)$, so
$\max_k\eta_k(t)=p_1$), and Theorem~\ref{thm:eq} applies. (b) Under (i),
(E1)--(E2) reduce verbatim to (ii)--(iii). (c) By Theorem~\ref{thm:eq},
any cell with a positive \emph{competitor} deviator forces a mean tie
$p_j=p_1$; if no ties exist, no competitor deviates positively at any cell,
so $W_j\le0$ a.e.\ for all $j\ne1$. The primary cannot deviate positively
either: if $W_1(t_0)>0$ somewhere, $\E[W_1(T)]=0$ gives a cell $t_1$ with
$W_1(t_1)<0$, where \eqref{eq:star} forces $\max_k\eta_k(t_1)=p_1$ (all positive parts
vanish there), so some competitor $k$ attains $\eta_k(t_1)=p_1>p_k$---a
positive competitor deviation, just excluded. Hence $W_j\le0$ a.e.\ for
\emph{all} $j$; combined with $\E[W_j(T)]=0$ this forces $W_j\equiv0$ a.e.,
i.e.\ $I_{\mathrm{TV}}(T)=0$ (and $\Phi=0$).
\end{proof}

\begin{remark}[clause (iii) cannot be dropped; (i) is a normalization]
\label{rem:witnesses}
Two explicit two-cell instances ($\mu_T$ uniform on $\{t_1,t_2\}$, $N{=}2$)
pin down the role of each clause; both were verified numerically.
\emph{Equality witness} (satisfies (i)--(iii)): $\eta_1=(0.6,0.6)$,
$\eta_2=(0.9,0.3)$, so $p_1=p_2=0.6$ (mean-tied) and the single active
competitor sits at $t_1$. Then $\Phi=\tfrac12(0.9-0.6)=0.15$ and
$\tfrac12I_{\mathrm{TV}}=\tfrac12\,\E|W_2|=\tfrac12(0.3)=0.15$ (here
$W_1\equiv0$): ratio exactly $1$.
\emph{Mean-tie violated} (a case on which the uncorrected ``iff'' claim fails): $\eta_1=(0.6,0.6)$,
$\eta_2=(0.95,0.15)$, so $p_2=0.55<p_1$. Then $\Phi=\tfrac12(0.95-0.6)=0.175$
while $\tfrac12I_{\mathrm{TV}}=0.2$: ratio $0.875<1$, confirming that without
clause (iii) equality fails. Finally, (i) is a \emph{normalization} rather
than a consequence of bare equality: when the primary is mean-tied with a
competitor, label-symmetric equality cases exist in which the primary itself
carries the positive deviation on cells where it attains the maximum---e.g.\
$\eta_1=(0.7,0.3)$, $\eta_2=(0.4,0.6)$, $p_1=p_2=0.5$ gives
$\Phi=0.15=\tfrac12I_{\mathrm{TV}}$ with $\eta_1\not\equiv p_1$, consistent
with Theorem~\ref{thm:eq} (each cell has exactly one positive deviator,
mean-tied and maximal). The main-paper statement is to be read in the
canonical form of Corollary~\ref{cor:i-iii}(b).
\end{remark}

\subsection{The mutual-information companion}

\begin{proposition}[MI companion]\label{prop:mi}
With $\MI$ in nats,
\[
\begin{aligned}
  \Phi\;\le\;\tfrac12 I_{\mathrm{TV}}(T)
      \;&\le\;\frac{1}{2\sqrt2}\sum_{j=1}^N\sqrt{\MI(T;C_j)}\\
      &\le\;\frac{1}{\sqrt2}\sum_{j=1}^N\sqrt{\MI(T;C_j)} .
\end{aligned}
\]
The middle constant $\tfrac1{2\sqrt2}$ is the one the proof yields; the
right-most $\tfrac1{\sqrt2}$ follows by a trivial final bound.
\end{proposition}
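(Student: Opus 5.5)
The first inequality $\Phi\le\tfrac12 I_{\mathrm{TV}}(T)$ is Theorem~\ref{thm:itv}, so only the middle and right-most inequalities need proof. The plan is to bound each summand of $\tfrac12 I_{\mathrm{TV}}(T)=\tfrac12\sum_j\E|W_j(T)|$ (from \eqref{eq:itvhalf}) separately by Pinsker's inequality, and then pass the resulting square roots through the expectation over $T$ by Jensen's inequality.

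\emph{Pinsker per cell.} Fix $j$ and a cell $t$. Compare the Bernoulli laws $P_t:=\mathcal L(C_j\mid T{=}t)=\mathrm{Bern}(\eta_j(t))$ and $Q:=\mathcal L(C_j)=\mathrm{Bern}(p_j)$. For Bernoulli laws
\[
\TV(P_t,Q)=\tfrac12\bigl(|\eta_j(t)-p_j|+|(1-\eta_j(t))-(1-p_j)|\bigr)=|W_j(t)|.
\]
Pinsker's inequality in nats, $\TV\le\sqrt{\KL/2}$, then gives
\[
|W_j(t)|\le\sqrt{\tfrac12\KL(P_t\|Q)}.
\]
Absolute continuity is automatic: if $p_j\in\{0,1\}$ then $\eta_j(t)=p_j$ for $\mu_T$-a.e.\ $t$, so both sides vanish.

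\emph{Jensen.} Averaging over $T$ and using concavity of the square root,
\[
\E|W_j(T)|\le\E\sqrt{\tfrac12\KL(P_T\|Q)}\le\sqrt{\tfrac12\,\E\,\KL(P_T\|Q)}=\sqrt{\tfrac12\MI(T;C_j)}.
\]
The last equality is the conditional-divergence form of mutual information recalled in \S\ref{app:notation}, with $X=T$ and $Y=C_j$; by symmetry of mutual information this equals $\MI(T;C_j)$. Summing over $j$ and multiplying by $\tfrac12$,
\[
\tfrac12 I_{\mathrm{TV}}(T)\le\tfrac12\sum_j\tfrac1{\sqrt2}\sqrt{\MI(T;C_j)}=\frac1{2\sqrt2}\sum_j\sqrt{\MI(T;C_j)}.
\]
The final inequality holds because $\tfrac1{2\sqrt2}\le\tfrac1{\sqrt2}$ and every summand is nonnegative.

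The steps are routine. The main point to get right is the constant bookkeeping. Pinsker must be used in its TV (half-$\ell_1$) form, and for a binary law that TV already equals $|W_j|$; it is not half of it. Using the $\ell_1$ form ($\ell_1\le\sqrt{2\KL}$) on the two-point vector $(W_j,-W_j)$ gives the same bound, so the two conventions agree. One should also check that nats are used consistently, as \S\ref{app:notation} fixes, so that the Pinsker constant $\tfrac12$ is the correct one.
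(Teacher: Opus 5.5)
Your proposal is correct and matches the paper's proof step for step: Pinsker on the Bernoulli conditional-versus-marginal laws, whose total variation is exactly $|W_j(t)|$, then Jensen on the concave square root to reach $\sqrt{\MI(T;C_j)/2}$, then summing over $j$ and halving via the ceiling theorem. The appeal to symmetry of $\MI$ is unnecessary, since the conditional-divergence form with $X=T$, $Y=C_j$ is already $\MI(T;C_j)$.
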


\begin{proof}
For binary $C_j$, the total variation between the conditional and marginal
laws of $C_j$ is
$\TV\big(\mathcal{L}(C_j\mid T{=}t),\mathcal{L}(C_j)\big)=|\eta_j(t)-p_j|$.
Pinsker's inequality $\TV(P,Q)\le\sqrt{\KL(P\|Q)/2}$ gives, pointwise in $t$,
\[
  |\eta_j(t)-p_j|\le
  \sqrt{\tfrac12\KL\big(\mathcal{L}(C_j\mid T{=}t)\,\big\|\,\mathcal{L}(C_j)\big)} .
\]
Averaging over $T$ and applying Jensen's inequality to the concave square
root,
\[
\begin{aligned}
  \E\big|W_j(T)\big|
  &\le\sqrt{\tfrac12\,\E\Bigl[\KL\big(\mathcal{L}(C_j\mid T)\,\|\,\mathcal{L}(C_j)\big)\Bigr]}\\
  &=\sqrt{\tfrac12\MI(T;C_j)},
\end{aligned}
\]
since the expected conditional-vs-marginal KL is exactly the mutual
information (\S\ref{app:notation}). Summing over $j$,
$I_{\mathrm{TV}}(T)\le\sum_j\sqrt{\MI(T;C_j)/2}$, and Theorem~\ref{thm:itv}
multiplies by $\tfrac12$. The final inequality is trivial.
\end{proof}

\begin{remark}[numerical verification and the Bank-135 instantiation]
\label{rem:itv-numerics}
\emph{Verification.} Over $2\times10^5$ random joint laws ($N{=}3$ advisors,
$|\mathcal T|=4$ gate values, Dirichlet-uniform $\mu_T$, uniform $\eta$,
primary set to the marginal-best advisor as the theorem requires):
$0$ violations of $\Phi\le\tfrac12I_{\mathrm{TV}}$, with supremum ratio
$\Phi/(\tfrac12I_{\mathrm{TV}})=0.9956$; the equality witness of
Remark~\ref{rem:witnesses} attains ratio $1.0000$ exactly.
\emph{Instantiation.} On the frozen Bank-135 correctness matrix with the
partition-support gate ($135$ rows, $20$ occupied gate cells), the
\emph{in-sample optimal} router given this gate---an overfit ceiling, since it
re-uses the evaluation rows---captures
$\widehat\Phi_{\mathrm{achievable}}=0.0222$ of the $+0.096$ oracle headroom,
and the inequality holds empirically
($0.0222\le\tfrac12\widehat I_{\mathrm{TV}}=0.2237$; the plug-in
$\widehat I_{\mathrm{TV}}$ is sharply upward-biased here because $20$ cells
share $135$ rows, so its absolute value should not be over-read). The deployed
router realizes $G=0$ out of sample. The operative fact is that even the
in-sample best router given this gate captures only $0.022$: the $G{=}0$
result is a property of the \emph{gate} (the ceiling theorem), not of one
router.
\end{remark}

\section{The two T2 witnesses (explicit latent-type instances)}
\label{app:T2}

This section supplies the constructions behind the main paper's \emph{T2
theorem (necessity of complementarity; refutation of (C2))} and its no-go
remark: two witnesses satisfying the complementarity condition (C1) but
violating the conditional-informativity condition (C2), with
$G(R^\star)=0.03$ and $G(R^\star)=0.10$; and a null--positive pair sharing
the full condition profile (C1)$\wedge$(C2-type)$\wedge$(C2-flag) with
$G(R^\star)=0$ vs $0.06$, which proves that no Boolean combination of these
structural conditions decides positive gain
(Proposition~\ref{prop:t2-nogo}).

\begin{definition}[latent-type instance]\label{def:latent}
A latent-type instance is a triple $(\nu,\{\mathcal{A}_j\},\kappa)$: a latent
type $Y\sim\nu$ on a finite set $\mathcal{Y}$; competence sets
$\mathcal{A}_j\subseteq\mathcal{Y}$ with $C_j:=\one\{Y\in\mathcal{A}_j\}$
(advisor $j$ is correct exactly on its competence set); and a gating channel
$\kappa(t\mid y)=\Prob(T{=}t\mid Y{=}y)$. Write
$F:=\one\{Y\notin\mathcal{A}_1\}$ for the primary-failure indicator. The
conditions discussed in the main text are: \textnormal{(C1)}
$\bigcup_j\mathcal{A}_j\supsetneq\mathcal{A}_1$ (complementarity); and the two
natural formalizations of ``conditional informativity of $T$ over the
primary-failure subset'':
\[
\begin{gathered}
  \text{(C2-type): }\MI(T;Y\mid F{=}1)>0,\\
  \text{(C2-flag): }\MI(T;F)>0 .
\end{gathered}
\]
\end{definition}

\begin{proposition}[necessity of (C1); part (N) of the T2 theorem]
If $\mathcal{A}_j\subseteq\mathcal{A}_1$ for all $j$, then
$G(R^\star)=0$.
\end{proposition}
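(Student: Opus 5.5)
The plan is to work inside the latent-type model of Definition~\ref{def:latent}, where $C_j=\one\{Y\in\mathcal{A}_j\}$. The first step is a pointwise domination of the correctness indicators. Since $\mathcal{A}_j\subseteq\mathcal{A}_1$, the event $\{Y\in\mathcal{A}_j\}$ is contained in $\{Y\in\mathcal{A}_1\}$. Hence $C_j\le C_1$ holds surely, for every realization of $Y$, and therefore for every $j$.

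The second step transfers this ordering to the posteriors. Conditioning on $T=t$ for any $t$ in the support of $\mu_T$ and using monotonicity of conditional expectation gives
\[
\eta_j(t)=\E[C_j\mid T=t]\le\E[C_1\mid T=t]=\eta_1(t).
\]
So $\max_j\eta_j(t)=\eta_1(t)$ for every such $t$. Notably, the gating channel $\kappa$ plays no role: the domination holds whatever $T$ reveals about $Y$.

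The third step concludes via the design-objective theorem (Theorem~\ref{thm:t1}(a)):
\[
G(R^\star)=\max_R G(R)=\Phi=\E\big[\max_j\eta_j(T)-\eta_1(T)\big]=0.
\]
A tie-breaking convention is needed for $R^\star$. The $\argmax$ contains $1$ at every $t$, so breaking ties toward the primary gives $R^\star\equiv1$. Any other selection of the $\argmax$ still achieves $\eta_{R^\star(t)}(t)=\eta_1(t)$, so its gain is $\E[\eta_{R^\star(T)}(T)]-p_1=0$ in either case.

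There is no substantive obstacle. The only care needed is in the second step: the inequality must be stated for $\mu_T$-a.e.\ $t$, that is, for $t$ with $\mu_T(t)>0$, so that the conditional probabilities are well defined. This also makes clear that the conclusion concerns $G(R^\star)$ itself, which is robust to how ties in the $\argmax$ are broken.
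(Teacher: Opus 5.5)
Your proposal is correct and matches the paper's proof: monotonicity of the (conditional) measure under $\mathcal{A}_j\subseteq\mathcal{A}_1$ gives $\eta_j(t)\le\eta_1(t)$ pointwise, so the Bayes selector can be taken $R^\star\equiv1$ and $G(R^\star)=\Phi=0$. Your added remark that every $\arg\max$ selection gives the same zero gain is a correct and slightly more careful refinement of the paper's ``can take $R^\star\equiv1$.''
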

\begin{proof}
$\eta_j(t)=\Prob(Y\in\mathcal{A}_j\mid T{=}t)\le
\Prob(Y\in\mathcal{A}_1\mid T{=}t)=\eta_1(t)$ pointwise (monotonicity of
measures), so the Bayes selector can take $R^\star\equiv1$ and
$G(R^\star)=\Phi=0$.
\end{proof}

Both witnesses below share
$\mathcal{Y}=\{a,b_1,b_2\}$, $\nu=(0.4,0.3,0.3)$,
$\mathcal{A}_1=\{a\}$, $\mathcal{A}_2=\{b_1\}$, and a binary gate
$T\in\{0,1\}$. Hence $p_1=\nu(a)=0.4$, $p_2=\nu(b_1)=0.3$ (the primary is
advisor $1$), $\Prob(F{=}1)=0.6$, and (C1) holds:
$\mathcal{A}_1\cup\mathcal{A}_2=\{a,b_1\}\supsetneq\{a\}$.

\begin{construction}[Witness A: (C2-type) fails, $G(R^\star)=0.03$]
\label{cons:A}
Channel: $\kappa(1\mid a)=0.3$, $\kappa(1\mid b_1)=\kappa(1\mid b_2)=0.5$.
The joint law $\Prob(Y{=}y,T{=}t)=\nu(y)\kappa(t\mid y)$ is
\begin{center}\small
\begin{tabular}{@{}lccc@{}}
\toprule
 & $y=a$ & $y=b_1$ & $y=b_2$\\
\midrule
$t=0$ & $0.28$ & $0.15$ & $0.15$\\
$t=1$ & $0.12$ & $0.15$ & $0.15$\\
\bottomrule
\end{tabular}
\end{center}
\emph{(C2-type) fails.} Conditional on $F{=}1$ (i.e.\ $Y\in\{b_1,b_2\}$) the
channel rows for $b_1$ and $b_2$ are identical, so $T\perp Y\mid F{=}1$ and
$\MI(T;Y\mid F{=}1)=0$.

\emph{Gain computation.} $\mu_T(0)=0.58$, $\mu_T(1)=0.42$, and
\[
\begin{aligned}
  \eta_1(0)&=\tfrac{0.28}{0.58}=\tfrac{14}{29}\approx0.483, &
  \eta_2(0)&=\tfrac{0.15}{0.58}\approx0.259,\\
  \eta_1(1)&=\tfrac{0.12}{0.42}=\tfrac{2}{7}\approx0.286, &
  \eta_2(1)&=\tfrac{0.15}{0.42}=\tfrac{5}{14}\approx0.357 .
\end{aligned}
\]
The Bayes selector routes $R^\star(0)=1$, $R^\star(1)=2$, giving
\[
  A_{R^\star}=0.58\cdot\tfrac{14}{29}+0.42\cdot\tfrac{5}{14}
             =0.28+0.15=0.43,
\]
so $G(R^\star)=0.43-0.40=\mathbf{0.03}>0$.

\emph{Mechanism.} $T$ is a partial \emph{failure flag}:
$\Prob(F{=}1\mid T{=}1)=\tfrac{0.30}{0.42}=\tfrac57\approx0.714>0.6=
\Prob(F{=}1)$ (so (C2-flag) holds), and advisor $2$ is the best fallback on
the flagged set, even though $T$ says nothing about \emph{which} failure type
occurred.
\end{construction}

\begin{construction}[Witness B: (C2-flag) fails, $G(R^\star)=0.10$]
\label{cons:B}
Channel: $\kappa(1\mid a)=0.5$, $\kappa(1\mid b_1)=1$, $\kappa(1\mid b_2)=0$.
Joint law:
\begin{center}\small
\begin{tabular}{@{}lccc@{}}
\toprule
 & $y=a$ & $y=b_1$ & $y=b_2$\\
\midrule
$t=0$ & $0.20$ & $0$ & $0.30$\\
$t=1$ & $0.20$ & $0.30$ & $0$\\
\bottomrule
\end{tabular}
\end{center}
\emph{(C2-flag) fails.} $\Prob(F{=}1\mid T{=}0)=\tfrac{0.30}{0.50}=0.6$
and $\Prob(F{=}1\mid T{=}1)=\tfrac{0.30}{0.50}=0.6=\Prob(F{=}1)$, so
$T\perp F$ and $\MI(T;F)=0$: the gate carries \emph{no} information about
whether the primary fails.

\emph{Gain computation.} $\mu_T(0)=\mu_T(1)=0.5$ and
\[
  \eta_1(0)=\eta_1(1)=0.4,\qquad \eta_2(0)=0,\qquad \eta_2(1)=0.6 .
\]
The Bayes selector routes $R^\star(0)=1$, $R^\star(1)=2$, giving
$A_{R^\star}=0.5\cdot0.4+0.5\cdot0.6=0.5$ and
$G(R^\star)=0.5-0.4=\mathbf{0.10}>0$.

\emph{Mechanism.} Here $T$ identifies the failure \emph{type}: conditional on
$F{=}1$, $T$ determines $Y$ exactly ($T{=}1\Rightarrow Y{=}b_1$,
$T{=}0\Rightarrow Y{=}b_2$), so (C2-type) holds maximally
($\MI(T;Y\mid F{=}1)=\log2$) while the failure \emph{event} is invisible to
$T$.
\end{construction}

\begin{construction}[the null--positive pair: identical condition profile,
opposite verdicts]\label{cons:CD}
Same $\mathcal{Y}$, $\nu$, $\mathcal{A}_1$, $\mathcal{A}_2$ as above, so
(C1) holds throughout. \emph{Instance C (null).} Channel
$\kappa(1\mid a)=0.3$, $\kappa(1\mid b_1)=0.1$, $\kappa(1\mid b_2)=0.9$.
Both informativity conditions hold: (C2-type) since $0.1\ne0.9$, and
(C2-flag) since
$\Prob(T{=}1\mid F{=}1)=0.5\ne0.3=\Prob(T{=}1\mid F{=}0)$. Yet the gate
never favours advisor $2$:
$\eta_2(0)=\tfrac{0.27}{0.58}<\tfrac{0.28}{0.58}=\eta_1(0)$ and
$\eta_2(1)=\tfrac{0.03}{0.42}<\tfrac{0.12}{0.42}=\eta_1(1)$, so
$R^\star\equiv1$ and $G(R^\star)=\mathbf{0}$. The informativity points at
the \emph{wrong} alternative: $T$ mostly reveals $b_2$, which no advisor
covers. The null verdict is an open set, not a knife-edge: roughly a
quarter of randomly drawn channels satisfy both conditions and still give
$\Phi=0$ ($0.260$ under the golden's draw; see
\texttt{t2\_nogo\_golden.txt}).
\emph{Instance D (positive twin).} Channel $\kappa(1\mid a)=0.3$,
$\kappa(1\mid b_1)=0.6$, $\kappa(1\mid b_2)=0.4$ satisfies exactly the same
three conditions and realizes
$A_{R^\star}=0.58\cdot\tfrac{0.28}{0.58}+0.42\cdot\tfrac{0.18}{0.42}
=0.46$, i.e.\ $G(R^\star)=\mathbf{0.06}>0$.
\end{construction}

\begin{proposition}[no Boolean structural NSC]\label{prop:t2-nogo}
No Boolean function of the three condition indicators
$\bigl(\one\{\mathrm{C1}\},\one\{\mathrm{C2\text{-}type}\},
\one\{\mathrm{C2\text{-}flag}\}\bigr)$ is a necessary-and-sufficient
condition for $G(R^\star)>0$ over latent-type instances.
\end{proposition}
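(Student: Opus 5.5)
The argument is a pure indistinguishability argument built on Construction~\ref{cons:CD}. Any Boolean function $f:\{0,1\}^3\to\{0,1\}$ of the indicator triple $\bigl(\one\{\mathrm{C1}\},\one\{\mathrm{C2\text{-}type}\},\one\{\mathrm{C2\text{-}flag}\}\bigr)$ returns the same value on any two instances that share the same triple. It therefore suffices to exhibit two latent-type instances with identical triple but with $G(R^\star)=0$ on one and $G(R^\star)>0$ on the other. Whatever $f$ outputs on that common triple, it then misclassifies exactly one of the two instances, so $f$ cannot be necessary-and-sufficient.

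The witnesses are Instances C and D of Construction~\ref{cons:CD}. Both use $\nu=(0.4,0.3,0.3)$, $\mathcal{A}_1=\{a\}$ and $\mathcal{A}_2=\{b_1\}$, so (C1) holds in each.

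For each instance I will verify the two informativity conditions.
\begin{itemize}
\item (C2-type): since $F=1$ means $Y\in\{b_1,b_2\}$, conditional independence $T\perp Y\mid F{=}1$ holds iff $\kappa(1\mid b_1)=\kappa(1\mid b_2)$. In C these are $0.1\ne0.9$, and in D they are $0.6\ne0.4$. Hence $\MI(T;Y\mid F{=}1)>0$ in both.
\item (C2-flag): $\Prob(T{=}1\mid F{=}1)=\tfrac12\bigl(\kappa(1\mid b_1)+\kappa(1\mid b_2)\bigr)=0.5$ in both instances, while $\Prob(T{=}1\mid F{=}0)=\kappa(1\mid a)=0.3$. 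Thus $T\not\perp F$ and $\MI(T;F)>0$ in both.
\end{itemize}
So both instances have the triple $(1,1,1)$.

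Next I compute the gains through Theorem~\ref{thm:t1}(a), using $G(R^\star)=\Phi=\sum_t\max_j\Prob(Y\in\mathcal{A}_j,T{=}t)-p_1$, which avoids normalizing the posteriors. In both instances the joint masses for advisor $1$ are $0.28$ at $t{=}0$ and $0.12$ at $t{=}1$.
\begin{itemize}
\item Instance C: advisor $2$ has masses $0.3\cdot0.9=0.27$ at $t{=}0$ and $0.03$ at $t{=}1$. Advisor $1$ dominates in both cells, so $\Phi=0.40-0.40=0$.
\item Instance D: advisor $2$ has masses $0.12$ at $t{=}0$ and $0.18$ at $t{=}1$. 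Taking the cellwise maxima gives $\Phi=0.28+0.18-0.40=0.06>0$.
\end{itemize}

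The only real obstacle is checking that C really has $\Phi=0$ rather than merely a small value, because the $t{=}0$ comparison $0.28$ versus $0.27$ is tight. Once that is settled, the proof closes with the one-line case split on $f(1,1,1)\in\{0,1\}$. I would also note that the pair refutes every such $f$, not just the conjunction (C1)$\wedge$(C2).
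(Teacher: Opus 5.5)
Your proposal is correct and matches the paper's proof. The paper likewise shows that Instances C and D of Construction~\ref{cons:CD} share the indicator vector $(1,1,1)$ while realizing $G(R^\star)=0$ and $G(R^\star)=0.06$, so any Boolean function of the indicators misclassifies one of them; your checks of the condition profile and your cellwise-maximum computation of $\Phi$, including the tight $0.28>0.27$ comparison in C, supply the arithmetic that the paper gives in the construction itself.
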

\begin{proof}
Instances C and D of Construction~\ref{cons:CD} realize the \emph{same}
indicator vector $(1,1,1)$ with $G(R^\star)=0$ and $G(R^\star)=0.06>0$
respectively, so any Boolean function of the indicators takes the same
value on both and misclassifies one of them.
\end{proof}

\begin{remark}[what the constructions establish]
Witness~A satisfies (C1), violates (C2-type), and has $G(R^\star)=0.03>0$;
Witness~B satisfies (C1), violates (C2-flag), and has $G(R^\star)=0.10>0$.
Under either reading of the conditional-informativity requirement ``(C2)''
over the primary-failure subset, one of the two witnesses is a positive-gain
instance violating it; hence neither reading is necessary, and the conjectured
dichotomy ``$G(R^\star)>0$ iff (C1)$\wedge$(C2)'' is refuted (part (R) of the
T2 theorem). The two witnesses isolate the two complementary mechanisms named
in the main text---$T$ informs the failure \emph{event} (A) or the failure
\emph{type} (B)---each able to generate gain under (C1), neither necessary.
Instance~C shows the conditions are not \emph{sufficient} either, even
jointly, and Proposition~\ref{prop:t2-nogo} closes every Boolean repair:
positive gain depends on \emph{where} the informativity points (a posterior
advantage $\eta_j>\eta_1$ on positive mass). The $I_{\mathrm{TV}}$ screen of
the main paper measures the informativity's magnitude, not its direction
(instance~C has $I_{\mathrm{TV}}=0.288>0$ with $\Phi=0$), which is precisely
why it is a \emph{necessary} screen only; a quantitative structural
characterization of the directional condition remains open. As a control,
the (C1)-violating instance $\nu=(0.4,0.3,0.3)$ on $\{a,b,c\}$ with
$\mathcal{A}_1=\{a,b\}\supseteq\mathcal{A}_2=\{a\}$ and any channel gives
$G(R^\star)=0$ exactly, as the necessity proposition requires. All five
computations were verified numerically (independent implementation;
$G=0.0300$, $0.1000$, $0.0000$, $0.0600$, $0.0000$;
golden \texttt{t2\_nogo\_golden.txt}).
\end{remark}

\section{Le Cam lower bound and the sharp minimax constants}
\label{app:lecam}

This section first proves the Le~Cam lower bound quoted in the main paper's
certification section, then closes the previously open multiplicative
constant over the fixed-activity class (\S\ref{app:sharpconst}). The
router-induced increment is $Z_i:=C_{\widehat R(T_i)}-C_{1,i}\in\{-1,0,1\}$,
i.i.d.\ under the data law $\mu$, with $\E[Z]=G_\mu(\widehat R)$ and
route-away mass $\pi=\Prob(T\in E)$; by the routing-gain decomposition, $Z=0$
off $E$ and $\E[Z\mid E]=G/\pi=:\Delta$. The minimax problem: given
$Z_1,\dots,Z_m$ i.i.d.\ from $\mu$, test
\[
  H_0:\ \E_\mu[Z]\le0\qquad\text{vs}\qquad H_1:\ \E_\mu[Z]\ge G .
\]
Here $G\in(0,\pi)$ is a \emph{fixed, known gap parameter}: the gain level at
which the certification question is posed (numerically anchored at the
audited gain, $G{=}0.04$). We reuse the symbol of the deployed gain
deliberately, because it plays exactly that role
($\E_\mu[Z]=G_\mu(\widehat R)$); in this section, however, $G$ is a constant
of the testing problem, never an estimated quantity---the data-dependent
certified level is always written $\widehat\gamma_m$. A \emph{test} at sample
size $m$ is a measurable $\psi_m:\{-1,0,1\}^m\to[0,1]$ (the value is the
probability of deciding $H_1$), with error functionals
\[
  \alpha(\psi_m,\mu):=\E_{\mu^{\otimes m}}[\psi_m],\qquad
  \beta(\psi_m,\mu):=\E_{\mu^{\otimes m}}[1-\psi_m],
\]
evaluated at $\mu\in H_0$ and $\mu\in H_1$ respectively; we seek the minimax
sample size $m^\star(\delta)$ at which some test achieves
$\max\bigl(\sup_{\mu\in H_0}\alpha(\psi_m,\mu),\,
\sup_{\mu\in H_1}\beta(\psi_m,\mu)\bigr)\le\delta$ (formal displays in
\S\ref{app:sharpconst}).

\begin{lemma}[Le Cam two-point]\label{lem:lecam}
For any test $\psi_m$ and any $\mu_0\in H_0$, $\mu_1\in H_1$,
\[
  \alpha(\psi_m,\mu_0)+\beta(\psi_m,\mu_1)\;\ge\;
  1-\TV\big(\mu_0^{\otimes m},\mu_1^{\otimes m}\big).
\]
In particular $\max(\alpha,\beta)\le\delta$ forces
$\TV(\mu_0^{\otimes m},\mu_1^{\otimes m})\ge1-2\delta$.
\end{lemma}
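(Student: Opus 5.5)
The plan is to use the variational characterization of total variation, applied on the product space $\{-1,0,1\}^m$. Write $P_0:=\mu_0^{\otimes m}$ and $P_1:=\mu_1^{\otimes m}$. For any measurable $f:\{-1,0,1\}^m\to[0,1]$ one has $|\E_{P_1}[f]-\E_{P_0}[f]|\le\TV(P_0,P_1)$. On a finite space this follows from
\[
\E_{P_1}[f]-\E_{P_0}[f]=\sum_x f(x)\bigl(P_1(x)-P_0(x)\bigr)\le\sum_x\bigl(P_1(x)-P_0(x)\bigr)_+ .
\]
The right-hand side equals $\tfrac12\sum_x|P_1(x)-P_0(x)|=\TV(P_0,P_1)$, because both measures have total mass one. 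This uses the same mean-zero positive-part identity as Lemma~\ref{lem:meanzero}, applied to the signed measure $P_1-P_0$. The same bound holds for $P_0-P_1$.

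The main step takes $f=\psi_m$, the randomized test. Then
\[
\alpha(\psi_m,\mu_0)+\beta(\psi_m,\mu_1)=\E_{P_0}[\psi_m]+1-\E_{P_1}[\psi_m]=1-\bigl(\E_{P_1}[\psi_m]-\E_{P_0}[\psi_m]\bigr)\ge 1-\TV(P_0,P_1).
\]
This is the displayed inequality. Membership $\mu_0\in H_0$ and $\mu_1\in H_1$ only serves to interpret $\alpha$ and $\beta$ as the two error types, and the bound holds for any pair of laws.

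For the consequence, suppose $\max(\alpha,\beta)\le\delta$. Then the left side is at most $2\delta$, so $1-\TV\le2\delta$, which rearranges to $\TV(P_0,P_1)\ge1-2\delta$.

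I expect no genuine obstacle. The only care needed is that $\psi_m$ is $[0,1]$-valued rather than an indicator, so the argument must use the functional form of $\TV$ above and not the $\sup_A$ form directly. Measurability is trivial on the finite space $\{-1,0,1\}^m$.
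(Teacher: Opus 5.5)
Your proof is correct and is essentially the paper's argument. Both rewrite $1-\alpha-\beta=\E_{P_1}[\psi_m]-\E_{P_0}[\psi_m]$ and bound it by $\TV(P_0,P_1)$ using $|\int f\,d(P-Q)|\le\TV(P,Q)$ for $[0,1]$-valued $f$; the paper simply cites this bound, whereas you verify it on the finite product space via the positive-part identity, and the consequence $\TV(P_0,P_1)\ge1-2\delta$ follows exactly as you state.
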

\begin{proof}
$1-\alpha-\beta=\E_{\mu_0^{\otimes m}}[1-\psi_m]
-\E_{\mu_1^{\otimes m}}[1-\psi_m]\le
\TV(\mu_0^{\otimes m},\mu_1^{\otimes m})$, since
$|\int f\,d(P-Q)|\le\TV(P,Q)$ for any measurable $f$ with values in
$[0,1]$.
\end{proof}

\begin{lemma}[Pinsker--Csisz\'ar product bound]\label{lem:pinskerprod}
$\TV(\mu_0^{\otimes m},\mu_1^{\otimes m})\le\sqrt{(m/2)\KL(\mu_0\|\mu_1)}$.
\end{lemma}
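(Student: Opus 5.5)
The plan is to combine two standard facts: Pinsker's inequality applied to the product measures, and the chain rule (tensorization) of Kullback--Leibler divergence over independent coordinates. No structure specific to the routing increments $Z_i\in\{-1,0,1\}$ is needed beyond the finite alphabet, which makes every divergence a finite sum and sidesteps measurability questions.

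\emph{Step 1 (Pinsker on the product space).} Apply Pinsker's inequality in the form $\TV(P,Q)\le\sqrt{\KL(P\|Q)/2}$, already used in Proposition~\ref{prop:mi}, with $P=\mu_0^{\otimes m}$ and $Q=\mu_1^{\otimes m}$ on $\{-1,0,1\}^m$. If $\mu_0\not\ll\mu_1$, then $\KL(\mu_0\|\mu_1)=+\infty$ and the claim is vacuous, so I will assume absolute continuity from here on.

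\emph{Step 2 (tensorization).} Show $\KL(\mu_0^{\otimes m}\|\mu_1^{\otimes m})=m\,\KL(\mu_0\|\mu_1)$. The log-likelihood ratio of the products is $\sum_{i=1}^m\log\frac{\mu_0(z_i)}{\mu_1(z_i)}$. Taking the expectation under $\mu_0^{\otimes m}$ and using linearity gives $m$ identical terms, each equal to $\KL(\mu_0\|\mu_1)$.

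\emph{Step 3 (conclusion).} Substituting Step 2 into Step 1 gives $\TV\le\sqrt{m\,\KL(\mu_0\|\mu_1)/2}$, which is the statement. By symmetry of $\TV$, the same bound holds with the divergence taken in the other direction, and one can use whichever of the two is smaller.

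I expect no real obstacle here. The only point needing care is using the conventions of \S\ref{app:notation}: natural logarithm, and $0\log 0=0$ on atoms where $\mu_0$ vanishes. Together with absolute continuity, these conventions justify the term-by-term expectation in Step 2. Combined with Lemma~\ref{lem:lecam}, the result forces $m\ge 2(1-2\delta)^2/\KL(\mu_0\|\mu_1)$. This is a polynomial-in-$\delta$ bound; getting the correct $\log(1/\delta)$ dependence will need the Bretagnolle--Huber refinement.
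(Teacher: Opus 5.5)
Your proposal is correct and follows essentially the same route as the paper: tensorize the KL divergence to get $\KL(\mu_0^{\otimes m}\|\mu_1^{\otimes m})=m\,\KL(\mu_0\|\mu_1)$, then apply Pinsker's inequality $\TV\le\sqrt{\KL/2}$ to the product measures. Your handling of the non-absolutely-continuous case is a harmless extra. So is the remark that the symmetry of $\TV$ lets you use either direction of the divergence.
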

\begin{proof}
KL tensorizes, $\KL(\mu_0^{\otimes m}\|\mu_1^{\otimes m})=m\KL(\mu_0\|\mu_1)$;
apply Pinsker, $\TV\le\sqrt{\KL/2}$.
\end{proof}

\begin{construction}[the two-point pair]\label{cons:twopoint}
Fix $\pi\in(0,1)$ and $G>0$ with $\Delta:=G/\pi<1$. Define two laws on
$\{-1,0,1\}$, differing only in the conditional law on the route-away
event:
\begin{align*}
  \mu_0:&\quad \Prob(Z{=}0)=1-\pi,\ \
          \Prob(Z{=}{\pm}1)=\tfrac{\pi}{2};\\
  \mu_1:&\quad \Prob(Z{=}0)=1-\pi,\ \
          \Prob(Z{=}{\pm}1)=\tfrac{\pi(1\pm\Delta)}{2};
\end{align*}
so that $\E_{\mu_0}Z=0\in H_0$ and $\E_{\mu_1}Z=G\in H_1$.
\end{construction}

\begin{lemma}[KL of the pair]\label{lem:kl}
For Construction~\ref{cons:twopoint},
\[
  \KL(\mu_0\|\mu_1)=-\frac{\pi}{2}\log\big(1-\Delta^2\big)
  =\frac{G^2}{2\pi}\big(1+O(\Delta^2)\big).
\]
\end{lemma}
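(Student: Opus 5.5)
The plan is a direct computation from the definition of KL on the three-point alphabet $\{-1,0,1\}$, followed by a Taylor expansion. Both laws in Construction~\ref{cons:twopoint} put mass $1-\pi$ on $Z{=}0$, so that atom contributes $(1-\pi)\log\frac{1-\pi}{1-\pi}=0$. Only the two route-away atoms remain. Under $\mu_0$ each of them has mass $\pi/2$, and under $\mu_1$ they have masses $\pi(1\pm\Delta)/2$. Absolute continuity $\mu_0\ll\mu_1$ holds because $\Delta<1$ makes both $\mu_1$-masses strictly positive, so the divergence is finite.

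Writing out the sum gives
\[
\KL(\mu_0\|\mu_1)=\tfrac{\pi}{2}\log\tfrac{1}{1+\Delta}+\tfrac{\pi}{2}\log\tfrac{1}{1-\Delta}=-\tfrac{\pi}{2}\log\bigl((1+\Delta)(1-\Delta)\bigr)=-\tfrac{\pi}{2}\log(1-\Delta^2).
\]
This is the exact closed form claimed. It is symmetric in $\pm\Delta$, which reflects the symmetry of $\mu_0$ on the route-away event.

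For the asymptotic form, expand $-\log(1-x)=x+x^2/2+\dots=x\,(1+O(x))$ at $x=\Delta^2\in[0,1)$. This gives $\KL=\frac{\pi}{2}\Delta^2(1+O(\Delta^2))$. Substituting $\Delta=G/\pi$ then yields $\frac{\pi}{2}\cdot\frac{G^2}{\pi^2}=\frac{G^2}{2\pi}$, which is the stated leading term.

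To make the $O(\cdot)$ explicit and uniform, I would use the elementary bracket $x\le-\log(1-x)\le\frac{x}{1-x}$ on $[0,1)$. This gives $\frac{G^2}{2\pi}\le\KL\le\frac{G^2}{2\pi}\cdot\frac{1}{1-\Delta^2}$, so the remainder is at most $\Delta^2/(1-\Delta^2)$, which is $O(\Delta^2)$ uniformly for $\Delta$ bounded away from $1$.

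There is no real obstacle here. The only point needing care is to state in which regime the $O(\Delta^2)$ holds, namely the small-gap limit $\Delta=G/\pi\to0$ used later for the sharp constant $c^\star_{\mathrm{cert}}=2\pi$. The explicit two-sided bound above supplies exactly that.
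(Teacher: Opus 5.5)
Your proof is correct and is essentially the paper's argument: the $Z{=}0$ atom cancels, the two $\pm1$ atoms combine to $-\tfrac{\pi}{2}\log(1-\Delta^2)$, and expanding $-\log(1-x)$ at $x=\Delta^2$ with $G=\pi\Delta$ gives the leading term $G^2/(2\pi)$. Your bracket $x\le-\log(1-x)\le x/(1-x)$ is an addition the paper omits, and it makes the remainder explicit as $\Delta^2/(1-\Delta^2)$. One small correction to your context remark: this direction, $\KL(\mu_0\|\mu_1)$, feeds the Le~Cam and Bretagnolle--Huber lower bounds, whereas the sharp constant $2\pi$ comes from the reverse divergence $\KL(\mu_1\|\mu_0)$, which has the same leading term.
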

\begin{proof}
The $Z{=}0$ atom contributes $0$ (equal mass). The $\pm1$ atoms give
\[
  \tfrac{\pi}{2}\log\tfrac{\pi/2}{\pi(1+\Delta)/2}
 +\tfrac{\pi}{2}\log\tfrac{\pi/2}{\pi(1-\Delta)/2}
 =-\tfrac{\pi}{2}\log(1-\Delta^2).
\]
Expanding $-\log(1-x)=x+O(x^2)$ at $x=\Delta^2$ and using $G=\pi\Delta$:
$\KL=\tfrac{\pi\Delta^2}{2}(1+O(\Delta^2))=\tfrac{G^2}{2\pi}(1+O(\Delta^2))$.
\end{proof}

\begin{theorem}[two-point lower bound]\label{thm:lb}
Fix $\delta\in(0,\tfrac14)$, $\pi\in(0,\tfrac12]$, $G\in(0,\pi)$. Any test
with $\max(\alpha,\beta)\le\delta$ uniformly over $H_0,H_1$ requires
\[
  m\;\ge\;\frac{2(1-2\delta)^2}{\KL(\mu_0\|\mu_1)}
   \;=\;\frac{4\pi(1-2\delta)^2}{G^2}\big(1+O(G^2/\pi^2)\big).
\]
\end{theorem}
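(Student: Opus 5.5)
The plan is to chain the three preceding lemmas on the specific pair of Construction~\ref{cons:twopoint}, and then expand the KL using Lemma~\ref{lem:kl}.

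\emph{Step 1: the pair is admissible.} The hypothesis $G\in(0,\pi)$ gives $\Delta=G/\pi\in(0,1)$. So all atoms $\pi(1\pm\Delta)/2$ are strictly positive and both $\mu_0,\mu_1$ are probability laws on $\{-1,0,1\}$. They have full common support, hence $\mu_0\ll\mu_1$ and $\KL(\mu_0\|\mu_1)<\infty$. Moreover $\E_{\mu_0}Z=0$, so $\mu_0\in H_0$, and $\E_{\mu_1}Z=\tfrac{\pi}{2}\big[(1+\Delta)-(1-\Delta)\big]=\pi\Delta=G$, so $\mu_1\in H_1$. Both laws put mass $1-\pi$ on $Z=0$, so they lie in the fixed-activity class.

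\emph{Step 2: reduce to the pair and apply Le~Cam.} A test with $\max(\alpha,\beta)\le\delta$ uniformly over $H_0,H_1$ satisfies, in particular, $\alpha(\psi_m,\mu_0)\le\delta$ and $\beta(\psi_m,\mu_1)\le\delta$. Lemma~\ref{lem:lecam} then forces
\[
\TV(\mu_0^{\otimes m},\mu_1^{\otimes m})\ge 1-2\delta>0,
\]
where positivity uses $\delta<\tfrac14$; any $\delta<\tfrac12$ would do.

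\emph{Step 3: upper-bound the same TV.} Lemma~\ref{lem:pinskerprod} bounds it above by $\sqrt{(m/2)\KL(\mu_0\|\mu_1)}$. Both sides are nonnegative, so squaring gives $(1-2\delta)^2\le m\,\KL/2$. Since $\KL>0$ (because $\Delta>0$), we can divide and obtain the first displayed bound, $m\ge 2(1-2\delta)^2/\KL(\mu_0\|\mu_1)$.

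\emph{Step 4: the asymptotic form.} Substitute Lemma~\ref{lem:kl}, $\KL=\tfrac{G^2}{2\pi}(1+O(\Delta^2))$, and invert the factor via $1/(1+x)=1+O(x)$ for $x=O(\Delta^2)$ small. With $\Delta^2=G^2/\pi^2$ this yields $\tfrac{4\pi(1-2\delta)^2}{G^2}(1+O(G^2/\pi^2))$.

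The only delicate point is the meaning of the $O(\cdot)$ when $\Delta$ is not small. I would state it as an asymptotic expansion as $\Delta\to0$. As a uniform sanity check, the exact expression $-\tfrac{\pi}{2}\log(1-\Delta^2)$ is finite for every $\Delta<1$, so the non-asymptotic first form of the bound holds throughout the stated parameter range. The restriction $\pi\le\tfrac12$ plays no role in this argument beyond keeping the setting consistent with the later sharp-constant section.
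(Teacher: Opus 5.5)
Your proposal is correct and follows the paper's own proof: apply the Le~Cam two-point lemma and then the Pinsker--Csisz\'ar product bound to the pair of Construction~\ref{cons:twopoint}, which gives $m\ge 2(1-2\delta)^2/\KL(\mu_0\|\mu_1)$, and then substitute the expansion of Lemma~\ref{lem:kl}. Your extra checks are details the paper leaves implicit: that the pair is admissible with common support, that $\KL>0$ before you divide, and that the $O(\cdot)$ is read as $\Delta\to0$.
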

\begin{proof}
Lemmas~\ref{lem:lecam}--\ref{lem:pinskerprod} give
$1-2\delta\le\sqrt{(m/2)\KL}$, i.e.\ $m\ge2(1-2\delta)^2/\KL$; substitute
Lemma~\ref{lem:kl}.
\end{proof}

\begin{proposition}[Bretagnolle--Huber refinement: recovering
$\log(1/\delta)$]\label{prop:bh}
Under the same hypotheses, using the Bretagnolle--Huber inequality
$\TV(P,Q)\le1-\tfrac12\exp(-\KL(P\|Q))$ in place of Pinsker,
\[
  m\;\ge\;\frac{\log\!\big(1/(4\delta)\big)}{\KL(\mu_0\|\mu_1)}
   \;=\;\frac{2\pi\log\!\big(1/(4\delta)\big)}{G^2}
        \big(1+O(\Delta^2)\big),
\]
matching the Bernstein upper bound's $\log(1/\delta)$ dependence.
\end{proposition}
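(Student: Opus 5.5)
The argument repeats the proof of Theorem~\ref{thm:lb}, with the Bretagnolle--Huber inequality used at the place where Pinsker was used. We start from Lemma~\ref{lem:lecam}. If a test satisfies $\max(\alpha,\beta)\le\delta$ uniformly over $H_0,H_1$, it satisfies it in particular at the two-point pair $\mu_0\in H_0$, $\mu_1\in H_1$ of Construction~\ref{cons:twopoint}. Hence
\[
  2\delta\;\ge\;\alpha(\psi_m,\mu_0)+\beta(\psi_m,\mu_1)\;\ge\;1-\TV\big(\mu_0^{\otimes m},\mu_1^{\otimes m}\big).
\]
We then bound the product total variation from above. Bretagnolle--Huber gives
\[
  1-\TV(P,Q)\;\ge\;\tfrac12\exp\big(-\KL(P\|Q)\big).
\]
We apply it with $P=\mu_0^{\otimes m}$ and $Q=\mu_1^{\otimes m}$. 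KL tensorizes, as in the proof of Lemma~\ref{lem:pinskerprod}, so $\KL(P\|Q)=m\,\KL(\mu_0\|\mu_1)$. This yields $2\delta\ge\tfrac12\exp(-m\,\KL(\mu_0\|\mu_1))$.

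Solving for $m$ is routine. Taking logarithms gives $\log(4\delta)\ge -m\,\KL(\mu_0\|\mu_1)$, that is, $m\ge\log(1/(4\delta))/\KL(\mu_0\|\mu_1)$. The right-hand side is positive because $\delta<\tfrac14$, so the hypothesis $\delta\in(0,\tfrac14)$ is exactly what makes the bound nontrivial. Next we substitute Lemma~\ref{lem:kl}:
\[
  \KL(\mu_0\|\mu_1)=\frac{G^2}{2\pi}\big(1+O(\Delta^2)\big).
\]
Since $1/(1+O(\Delta^2))=1+O(\Delta^2)$, this produces the displayed asymptotic form $2\pi\log(1/(4\delta))\,G^{-2}\,(1+O(\Delta^2))$. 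The condition $G<\pi$ ensures $\Delta<1$, so the pair is well defined and the KL is finite.

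For the closing remark about matching the upper bound, we compare with $B(m,\delta)$. Inverting $\sigma\sqrt{2\log(1/\delta)/m}\le G$ gives
\[
  m\;\gtrsim\;\frac{2\sigma^2\log(1/\delta)}{G^2}.
\]
Under $\mu_0$ we have $\sigma^2=\pi$, so the upper bound has the same $\pi\log(1/\delta)/G^2$ form. The lower bound differs only by $\log(1/(4\delta))$ in place of $\log(1/\delta)$, which is $\log(1/\delta)(1+o(1))$ as $\delta\to0$.

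The only step needing care is the inequality direction and the use of a \emph{one-sided} version of Bretagnolle--Huber. We must use it in the form $1-\TV\ge\tfrac12 e^{-\KL}$. The orientation $\KL(\mu_0\|\mu_1)$ must be kept consistent with Lemma~\ref{lem:kl}, which is valid since Bretagnolle--Huber holds for either ordering of the KL. I expect no real obstacle beyond this bookkeeping.
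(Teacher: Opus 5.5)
Your proof is correct and is the paper's argument: apply Lemma~\ref{lem:lecam} at the pair $(\mu_0,\mu_1)$, use Bretagnolle--Huber with tensorization to get $2\delta\ge\tfrac12 e^{-m\KL(\mu_0\|\mu_1)}$, solve for $m$, and substitute Lemma~\ref{lem:kl}. One caution on your closing remark: it should claim only matching $\log(1/\delta)$ scaling, not matching constants. Inverting $\sigma\sqrt{2\log(1/\delta)/m}\le G$ corresponds to power of only about $\tfrac12$, whereas this bound is for the two-sided criterion $\max(\alpha,\beta)\le\delta$, whose sharp constant is $8\pi$ (Corollary~\ref{cor:sharp-fixedact}); the apparent $2\pi$-versus-$2\pi$ agreement is therefore a coincidence, as the paper itself notes.
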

\begin{proof}
Lemma~\ref{lem:lecam} and tensorisation give
$1-2\delta\le1-\tfrac12e^{-m\KL}$, so $\tfrac12e^{-m\KL}\le2\delta$, i.e.\
$e^{-m\KL}\le4\delta$ and $m\,\KL\ge\log(1/(4\delta))$. Substitute
Lemma~\ref{lem:kl}.
\end{proof}

\begin{remark}[numerical instance and the honest caveat]
\label{rem:lecam-numbers}
On the audited anchor ($G=0.04$, $\pi=0.12$, $\delta=0.05$, so $\Delta=1/3$):
$\KL(\mu_0\|\mu_1)=-0.06\log(8/9)\approx0.00707$. The Pinsker form
(Theorem~\ref{thm:lb}, exact KL) gives $m\ge2(0.9)^2/0.00707\approx229.2$
(integer $m\ge230$); the
Bretagnolle--Huber form gives $m\ge\log(5)/0.00707\approx228$. The relaxed
Bernstein upper bound (fixed $M=2$, \S\ref{app:c4}) gives $m^\star=312$; the
direct Bennett inversion at the same $M$ gives $225$. The ${\approx}1.3\%$
Bennett-vs-lower-bound agreement on this instance is a \emph{numerical
coincidence}, not constant-tightness: the
lower bound is governed by the null-construction variance
$\sigma^2_{\mu_0}=\pi=0.12$ while the upper bound is governed by the deployed
routing variance $\sigma^2_{\mathrm{routing}}=0.0384$, and the upper-bound
number includes the finite-$m$ correction $2M\log(1/\delta)/(3m)$. These two
bounds establish the matching \emph{scaling}
$m^\star(\delta)=\Theta(\sigma^2\log(1/\delta)/G^2)$ in $G$ and
$\log(1/\delta)$; the multiplicative constant they leave open is closed, at
class level, in \S\ref{app:sharpconst} below.
\end{remark}

\subsection{Sharp minimax constants: general form and the fixed-activity
class}
\label{app:sharpconst}

The bounds above establish the $\Theta$-scaling but leave the
multiplicative constant open. This subsection closes it. Because the
lower- and upper-bound arguments are not specific to our setting, we
state and prove the two sharp-constant theorems in general form---for
arbitrary composite hypotheses on a finite alphabet that admit a
least-favourable pair (Theorems~\ref{thm:sharp-ts}
and~\ref{thm:sharp-os})---and then obtain the paper's fixed-activity
case as a corollary (Corollary~\ref{cor:sharp-fixedact}).

\paragraph{Tests, errors, and minimax sample sizes.}
Let $\mathcal X$ be a finite alphabet and let
$H_0,H_1\subset\mathcal P(\mathcal X)$ be disjoint (composite)
hypothesis classes. Tests and the error functionals $\alpha,\beta$ are
exactly as defined at the head of this section, with $\{-1,0,1\}$
replaced by $\mathcal X$: a test is a measurable
$\psi_m:\mathcal X^m\to[0,1]$,
$\alpha(\psi_m,\mu)=\E_{\mu^{\otimes m}}[\psi_m]$ for $\mu\in H_0$, and
$\beta(\psi_m,\mu)=\E_{\mu^{\otimes m}}[1-\psi_m]$ for $\mu\in H_1$. The
\emph{two-sided} and \emph{one-sided (certification)} minimax sample
sizes are
\begin{align*}
  m^\star_{\mathrm{ts}}(\delta):=\min\Bigl\{m:\exists\psi_m,\
    &\sup_{\mu\in H_0}\alpha(\psi_m,\mu)\le\delta,\\[-3pt]
    &\sup_{\mu\in H_1}\beta(\psi_m,\mu)\le\delta\Bigr\},\\
  m^\star_{\mathrm{os}}(\delta):=\min\Bigl\{m:\exists\psi_m,\
    &\sup_{\mu\in H_0}\alpha(\psi_m,\mu)\le\delta,\\[-3pt]
    &\sup_{\mu\in H_1}\beta(\psi_m,\mu)\le\tfrac12\Bigr\}:
\end{align*}
in the one-sided (certification) regime the type-I level is $\delta$
while the power is held at the fixed level $\tfrac12$; any fixed power
level in $(0,1)$ yields the same constant
(Theorem~\ref{thm:sharp-os}).

\paragraph{Likelihood ratio and cumulant generating function.}
Fix $\nu_0\in H_0$ and $\nu_1\in H_1$ with $\nu_0\ne\nu_1$ and equal
support. Write
\[
  S_m:=\sum_{i=1}^m L(Z_i),\qquad
  L(z):=\log\frac{\nu_1(z)}{\nu_0(z)},
\]
for the log-likelihood ratio of the pair and
\[
  \Lambda(s):=\log\E_{\nu_0}\bigl[e^{sL(Z)}\bigr]
  =\log\!\sum_{z}\nu_0(z)^{1-s}\nu_1(z)^{s}
\]
for its cumulant generating function under $\nu_0$;
$\Ch(\nu_0,\nu_1):=-\min_{s\in[0,1]}\Lambda(s)$ is the Chernoff
information. Three standard facts are used repeatedly below.
(i)~$\Lambda$ is finite on all of $\R$ (finite alphabet, equal support)
and smooth, and differentiating twice under the (finite) sum gives
\[
  \Lambda''(s)=\Var_{\nu^{(s)}}\bigl(L(Z)\bigr)\ \ge\ 0,
\]
where $\nu^{(s)}(z):=\nu_0(z)^{1-s}\nu_1(z)^{s}e^{-\Lambda(s)}$ is the
exponentially tilted law: the second derivative of a cumulant
generating function is the variance of $L$ under the tilt. Since
$\nu_0\ne\nu_1$ share their support, $L$ is non-constant there (a
constant $L\equiv c$ would force $\nu_1=e^c\nu_0$, hence $c=0$ and
$\nu_1=\nu_0$), so $\Lambda''>0$ everywhere and $\Lambda$ is
\emph{strictly} convex. (ii)~$\Lambda(0)=\Lambda(1)=0$, and
$\Lambda'(0)=\E_{\nu_0}L=-\KL(\nu_0\|\nu_1)<0$,
$\Lambda'(1)=\E_{\nu_1}L=\KL(\nu_1\|\nu_0)>0$; by strict convexity the
minimizer of $\Lambda$ is a unique interior point of $(0,1)$ and
$\inf_{s\in\R}\Lambda=\min_{s\in[0,1]}\Lambda=-\Ch(\nu_0,\nu_1)<0$.
(iii)~The Legendre transform
$\Lambda^*(a):=\sup_{s\in\R}\bigl(sa-\Lambda(s)\bigr)$ satisfies
$\Lambda^*(0)=-\inf_s\Lambda(s)=\Ch(\nu_0,\nu_1)$.

\begin{definition}[least-favourable pair]\label{def:lfpair}
The pair $(\nu_0,\nu_1)$ is \emph{least favourable} for $(H_0,H_1)$
(with respect to likelihood-ratio tests) if for every $m\ge1$ and every
threshold $t\in\R$,
\begin{align*}
  \sup_{\mu\in H_0}\Prob_{\mu^{\otimes m}}(S_m\ge t)
  &=\Prob_{\nu_0^{\otimes m}}(S_m\ge t),\\
  \sup_{\mu\in H_1}\Prob_{\mu^{\otimes m}}(S_m< t)
  &=\Prob_{\nu_1^{\otimes m}}(S_m< t).
\end{align*}
\end{definition}

\begin{theorem}[sharp two-sided constant, general form]
\label{thm:sharp-ts}
Let $H_0,H_1\subset\mathcal P(\mathcal X)$ and let $\nu_0\in H_0$,
$\nu_1\in H_1$ with $\nu_0\ne\nu_1$ and equal support.
\begin{itemize}
  \item[\textnormal{(a)}] \textbf{Lower bound} (no further assumption):
  \[
    \liminf_{\delta\to0}\,
    \frac{m^\star_{\mathrm{ts}}(\delta)}{\log(1/\delta)}
    \ \ge\ \frac{1}{\Ch(\nu_0,\nu_1)}\,.
  \]
  \item[\textnormal{(b)}] \textbf{Matching upper bound.} If
  $(\nu_0,\nu_1)$ is least favourable
  (Definition~\ref{def:lfpair}), then the likelihood-ratio test
  $\psi_m=\one\{S_m\ge0\}$ is minimax optimal at leading order and
  \[
    m^\star_{\mathrm{ts}}(\delta)
    =\frac{\log(1/\delta)}{\Ch(\nu_0,\nu_1)}\,\bigl(1+o(1)\bigr),
    \qquad\delta\to0 .
  \]
\end{itemize}
\end{theorem}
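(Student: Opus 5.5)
For part (a) we reduce to the simple-vs-simple problem between the fixed pair $(\nu_0,\nu_1)$. Since $\nu_0\in H_0$ and $\nu_1\in H_1$, any test meeting the composite requirements at level $\delta$ also satisfies $\alpha(\psi_m,\nu_0)\le\delta$ and $\beta(\psi_m,\nu_1)\le\delta$. The sum $\alpha(\psi_m,\nu_0)+\beta(\psi_m,\nu_1)$ is then at most $2\delta$, and by Neyman--Pearson it is minimized by the likelihood-ratio test $\one\{S_m\ge0\}$, with value $\sum_x\min(\nu_0^{\otimes m}(x),\nu_1^{\otimes m}(x))$. The task is therefore to show this Bayes error decays no faster than $e^{-m\Ch(1+o(1))}$, i.e.\ the lower half of Chernoff's theorem.

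For that lower half we tilt to $\nu^{(s^\star)}$, where $s^\star\in(0,1)$ is the unique minimizer of $\Lambda$ from fact (ii). Under the tilt, $\E_{\nu^{(s^\star)}}L=\Lambda'(s^\star)=0$ and the variance is $\Lambda''(s^\star)>0$. We use the identity
\[
\min\bigl(\nu_0^{\otimes m}(x),\nu_1^{\otimes m}(x)\bigr)=(\nu^{(s^\star)})^{\otimes m}(x)\,e^{m\Lambda(s^\star)}\min\bigl(e^{-s^\star S_m},e^{(1-s^\star)S_m}\bigr),
\]
and restrict to the event $\{|S_m|\le c\sqrt m\}$. By the CLT under the tilt this event has probability bounded below by a constant $\kappa>0$ for large $m$. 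On it the min factor is at least $e^{-c\sqrt m}$, which gives
\[
2\delta\ \ge\ \kappa\,e^{-m\Ch-c\sqrt m}.
\]
Inverting yields $m\Ch+c\sqrt m\ge\log(1/\delta)-\log(2/\kappa)$, so $m^\star_{\mathrm{ts}}(\delta)\ge\log(1/\delta)(1-o(1))/\Ch$. The CLT can be replaced by Chebyshev to avoid asymptotics in $\kappa$: take $c$ with $\Lambda''(s^\star)/c^2\le\tfrac12$.

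For part (b), least-favourability reduces the composite errors of $\one\{S_m\ge0\}$ to $\Prob_{\nu_0}(S_m\ge0)$ and $\Prob_{\nu_1}(S_m<0)$. Chernoff's bound at $s^\star$ controls both: $\Prob_{\nu_0}(S_m\ge0)\le\E_{\nu_0}e^{s^\star S_m}=e^{m\Lambda(s^\star)}$. For the second term, the change of measure $\E_{\nu_1}[f]=\E_{\nu_0}[e^{S_m}f]$ gives $\Prob_{\nu_1}(S_m<0)\le\E_{\nu_0}e^{s^\star S_m}\le e^{-m\Ch}$, using $e^{S_m}\le e^{s^\star S_m}$ on $\{S_m<0\}$. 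Both errors are therefore at most $\delta$ once $m\ge\log(1/\delta)/\Ch$, so $m^\star_{\mathrm{ts}}\le\lceil\log(1/\delta)/\Ch\rceil$. Combined with (a), this gives the stated asymptotic equality and the leading-order optimality of the likelihood-ratio test.

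The main obstacle is the lower bound in (a), specifically obtaining the matching constant $\Ch$ rather than the weaker $\KL$ or Bretagnolle--Huber rate. This needs the tilted-measure argument and uniform control of the $e^{\pm O(\sqrt m)}$ correction so that it is $o(\log(1/\delta))$. We must also check that $m^\star_{\mathrm{ts}}(\delta)\to\infty$ as $\delta\to0$, which holds since for fixed $m$ the Bayes error is a positive constant given equal support.
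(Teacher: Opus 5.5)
Your proof is correct. Part (b) is the paper's Step~5: least favourability at threshold $0$, then the Chernoff--Markov bound at the optimal tilt for both error types. Your change of measure $\E_{\nu_1}f=\E_{\nu_0}[e^{S_m}f]$, with $e^{S_m}\le e^{s^\star S_m}$ on $\{S_m<0\}$, is the same computation as the paper's $\E_{\nu_1}[e^{-tS_1}]=e^{\Lambda(1-t)}$.

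Part (a) takes a different route. The paper keeps only one piece of the affinity, $1-\TV(\nu_0^{\otimes m},\nu_1^{\otimes m})\ge\Prob_{\nu_0^{\otimes m}}(S_m\ge0)$, and then cites Cram\'er's theorem on the open half-line $(0,\infty)$. That route costs three things:
\begin{itemize}
\item it must check that $0$ is interior to the domain of $\Lambda^*$ and that $\inf_{a>0}\Lambda^*(a)=\Lambda^*(0)=\Ch$;
\item it yields only a non-explicit threshold $m_0(\varepsilon)$;
\item it needs a separate argument that $m^\star_{\mathrm{ts}}(\delta)\to\infty$.
\end{itemize}

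You instead bound the full affinity $\sum\min(\nu_0^{\otimes m},\nu_1^{\otimes m})$ directly by tilting at $s^\star$, which amounts to running the proof of Cram\'er's lower bound yourself. Keeping both sides of the boundary $S_m=0$ is what lets Chebyshev under the centred tilt suffice on the symmetric window $\{|S_m|\le c\sqrt m\}$. The paper's one-sided quantity $\Prob_{\nu_0^{\otimes m}}(S_m\ge0)$ would force a one-sided window $[0,c\sqrt m]$, and that needs a CLT or anti-concentration step.

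The payoff is an explicit bound valid for every $m\ge1$:
\[
1-\TV(\nu_0^{\otimes m},\nu_1^{\otimes m})\ \ge\ \tfrac12\,e^{-m\Ch-\sqrt{2\Lambda''(s^\star)\,m}} .
\]
This gives an explicit $O\bigl(1/\sqrt{\log(1/\delta)}\bigr)$ rate for the $o(1)$ term. It also makes $m^\star_{\mathrm{ts}}(\delta)\to\infty$ follow from the inequality itself, so the separate check you flag at the end is not actually needed.

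The paper's version is shorter because it hands the large-deviation step to a textbook theorem. Yours is self-contained and gives explicit constants.
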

\begin{proof}
\emph{Step 1 (testing affinity of the pair).} On $\{S_m\ge0\}$ we have
$\nu_1^{\otimes m}\ge\nu_0^{\otimes m}$ as measures, so
\[
\begin{aligned}
  1-\TV(\nu_0^{\otimes m},\nu_1^{\otimes m})
  &=\sum_{z_{1:m}}\min\bigl(\nu_0^{\otimes m},\nu_1^{\otimes m}\bigr)\\
  &\ge\Prob_{\nu_0^{\otimes m}}(S_m\ge0).
\end{aligned}
\]
\emph{Step 2 (Cram\'er lower bound on the affinity).} By fact~(iii),
$\Lambda^*(0)=\Ch(\nu_0,\nu_1)$. Moreover $0$ is interior to the domain
of $\Lambda^*$: the domain's interior is
$(\min_{\mathrm{supp}}L,\max_{\mathrm{supp}}L)$, and
$\min L\le\E_{\nu_0}L<0$ (strict on the left since $L$ is non-constant)
while $\max L>0$ (both laws sum to $1$ and differ, so $\nu_1(z)>\nu_0(z)>0$
at some atom $z$). Since $\Lambda^*$ is continuous at interior points and
nondecreasing to the right of $\E_{\nu_0}L<0$,
$\inf_{a>0}\Lambda^*(a)=\Lambda^*(0)=\Ch$. Cram\'er's theorem in $\R$,
applied to the open half-line $(0,\infty)$ (see, e.g., Dembo and
Zeitouni 1998, Thm.~2.2.3), then gives: for every $\varepsilon>0$ there
is $m_0(\varepsilon,\nu_0,\nu_1)$ with
\[
  \Prob_{\nu_0^{\otimes m}}(S_m\ge0)\ \ge\ e^{-m(\Ch+\varepsilon)}
  \qquad\text{for all }m\ge m_0 .
\]
\emph{Step 3 ($m^\star_{\mathrm{ts}}(\delta)\to\infty$).} The two laws
share all atoms, so $\TV(\nu_0^{\otimes m},\nu_1^{\otimes m})<1$ at every
fixed $m$. By the Le~Cam two-point lemma (Lemma~\ref{lem:lecam}; its
proof is verbatim for any two laws, including $[0,1]$-valued tests, since
$|\int(1-\psi_m)\,d(\nu_0^{\otimes m}-\nu_1^{\otimes m})|\le\TV$), the
two-sided criterion at sample size $m$ forces
$2\delta\ge1-\TV(\nu_0^{\otimes m},\nu_1^{\otimes m})>0$, which fails at
fixed $m$ once $\delta$ is small. Hence
$m^\star_{\mathrm{ts}}(\delta)\to\infty$ as $\delta\to0$.
\emph{Step 4 (lower bound, limit made explicit).} Fix $\varepsilon>0$.
For every $\delta$ small enough that
$m:=m^\star_{\mathrm{ts}}(\delta)\ge m_0(\varepsilon)$ (possible by
Step~3), let $\psi_m$ attain the criterion. Lemma~\ref{lem:lecam} and
Steps~1--2 give
\[
\begin{aligned}
  2\delta\ &\ge\ \alpha(\psi_m,\nu_0)+\beta(\psi_m,\nu_1)\\
  &\ge\ \Prob_{\nu_0^{\otimes m}}(S_m\ge0)
  \ \ge\ e^{-m(\Ch+\varepsilon)},
\end{aligned}
\]
so $m\ge\log(1/(2\delta))/(\Ch+\varepsilon)$. Dividing by
$\log(1/\delta)$ and using
$\log(1/(2\delta))/\log(1/\delta)\to1$ as $\delta\to0$,
\[
  \liminf_{\delta\to0}
  \frac{m^\star_{\mathrm{ts}}(\delta)}{\log(1/\delta)}
  \ \ge\ \frac{1}{\Ch+\varepsilon}
  \qquad\text{for every }\varepsilon>0 ;
\]
letting $\varepsilon\downarrow0$ proves part~(a).
\emph{Step 5 (upper bound under least favourability).} Take
$\psi_m=\one\{S_m\ge0\}$. By Definition~\ref{def:lfpair} at $t=0$,
$\sup_{H_0}\alpha(\psi_m,\cdot)=\alpha(\psi_m,\nu_0)$ and
$\sup_{H_1}\beta(\psi_m,\cdot)=\beta(\psi_m,\nu_1)$. The Chernoff--Markov
bound gives
$\alpha(\psi_m,\nu_0)=\Prob_{\nu_0^{\otimes m}}(S_m\ge0)\le
e^{m\inf_{s\ge0}\Lambda(s)}=e^{-m\Ch}$ (the minimizer of $\Lambda$ lies
in $(0,1)$, fact~(ii)) and, using
$\E_{\nu_1}[e^{-tS_1}]=e^{\Lambda(1-t)}$ for $t\ge0$,
$\beta(\psi_m,\nu_1)=\Prob_{\nu_1^{\otimes m}}(S_m<0)\le e^{-m\Ch}$
likewise. Hence $m=\lceil\log(1/\delta)/\Ch\rceil$ meets the two-sided
criterion, so
\[
  \limsup_{\delta\to0}
  \frac{m^\star_{\mathrm{ts}}(\delta)}{\log(1/\delta)}
  \ \le\ \frac{1}{\Ch}\,.
\]
Combining with part~(a), the limit of
$m^\star_{\mathrm{ts}}(\delta)/\log(1/\delta)$ exists and equals
$1/\Ch$, which is the displayed $(1+o(1))$ statement.
\end{proof}

\begin{theorem}[sharp one-sided (certification) constant, general form]
\label{thm:sharp-os}
In the setting of Theorem~\ref{thm:sharp-ts}:
\begin{itemize}
  \item[\textnormal{(a)}] \textbf{Lower bound} (no further assumption):
  \[
    \liminf_{\delta\to0}\,
    \frac{m^\star_{\mathrm{os}}(\delta)}{\log(1/\delta)}
    \ \ge\ \frac{1}{\KL(\nu_1\|\nu_0)}\,.
  \]
  \item[\textnormal{(b)}] \textbf{Matching upper bound.} If
  $(\nu_0,\nu_1)$ is least favourable (Definition~\ref{def:lfpair}),
  then
  \[
    m^\star_{\mathrm{os}}(\delta)
    =\frac{\log(1/\delta)}{\KL(\nu_1\|\nu_0)}\,\bigl(1+o(1)\bigr),
    \qquad\delta\to0 ,
  \]
  and the same holds with the power level $\tfrac12$ replaced by any
  fixed $b\in(0,1)$.
\end{itemize}
\end{theorem}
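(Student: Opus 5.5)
The plan follows the template of Theorem~\ref{thm:sharp-ts}, with the type-II constraint relaxed to a fixed level $b\in(0,1)$. The relevant exponent is then Stein's rather than Chernoff's. Throughout, write $D:=\KL(\nu_1\|\nu_0)>0$.

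\emph{Lower bound (a).} Fix $\varepsilon>0$ and any test $\psi_m$ meeting the one-sided criterion. Then $\alpha(\psi_m,\nu_0)\le\delta$ and $\beta(\psi_m,\nu_1)\le b$.

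I would use a change-of-measure argument on the event $A_m:=\{S_m\le m(D+\varepsilon)\}$:
\[
\E_{\nu_1^{\otimes m}}[\psi_m\one_{A_m}]=\E_{\nu_0^{\otimes m}}[\psi_m e^{S_m}\one_{A_m}]\le e^{m(D+\varepsilon)}\,\alpha(\psi_m,\nu_0)\le e^{m(D+\varepsilon)}\delta .
\]
The identity holds because the two laws have equal support. By the weak law of large numbers under $\nu_1$ (note $\E_{\nu_1}L=D$), $\Prob_{\nu_1^{\otimes m}}(A_m^c)\to0$. Hence
\[
\E_{\nu_1^{\otimes m}}[\psi_m\one_{A_m}]\ge(1-b)-\Prob_{\nu_1^{\otimes m}}(A_m^c)\ge(1-b)/2
\]
for all $m\ge m_0(\varepsilon)$. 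This gives $m\ge\bigl(\log(1/\delta)-\log(2/(1-b))\bigr)/(D+\varepsilon)$.

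As in Step~3 of the two-sided proof, I also need $m^\star_{\mathrm{os}}(\delta)\to\infty$. At fixed $m$, $\nu_0^{\otimes m}$ charges every atom. So $\alpha\le\delta\to0$ forces $\psi_m\to0$ pointwise on the common finite support, and then $\beta\to1>b$.

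Dividing by $\log(1/\delta)$ and letting $\delta\to0$, then $\varepsilon\downarrow0$, yields $\liminf m^\star_{\mathrm{os}}/\log(1/\delta)\ge1/D$. This part uses no least-favourability and works for any $b\in(0,1)$.

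\emph{Upper bound (b).} Take the threshold likelihood-ratio test $\psi_m=\one\{S_m\ge t_m\}$ with $t_m:=m(D-\varepsilon)$. By Definition~\ref{def:lfpair}, the suprema of the errors over $H_0$ and $H_1$ are attained at $\nu_0$ and $\nu_1$ respectively, so only the pair needs to be checked.

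For the type-I error, the Chernoff--Markov bound at $s=1$ gives
\[
\Prob_{\nu_0^{\otimes m}}(S_m\ge t_m)\le e^{-t_m}\E_{\nu_0}[e^{S_m}]=e^{-m(D-\varepsilon)},
\]
since $\E_{\nu_0}e^{L}=1$. For the type-II error, $\Prob_{\nu_1^{\otimes m}}(S_m<m(D-\varepsilon))\to0$ by the law of large numbers, so it is at most $b$ for $m\ge m_1(\varepsilon,b)$.

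Choosing $m=\max\bigl(\lceil\log(1/\delta)/(D-\varepsilon)\rceil,m_1\bigr)$ meets the criterion. Hence $\limsup m^\star_{\mathrm{os}}/\log(1/\delta)\le1/(D-\varepsilon)$ for every $\varepsilon$, and combining with (a) gives the displayed asymptotic for every fixed $b$.

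\emph{Main obstacle.} The only delicate point is that the type-I tail bound must use the exact identity $\E_{\nu_0}e^{L}=1$, which relies on equal support. The matching lower bound must come from the Stein-type change of measure rather than Le~Cam's lemma, which would only give the Chernoff rate.

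The weak law of large numbers suffices in both directions because $b$ is held fixed while only $\delta\to0$. No large-deviation estimate under $\nu_1$ is needed, which is why the constant does not depend on $b$.
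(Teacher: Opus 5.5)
Your proposal is correct and follows essentially the same route as the paper. For (a) you change measure on $\{S_m\le m(\KL(\nu_1\|\nu_0)+\varepsilon)\}$ with the law of large numbers under $\nu_1$; for (b) you use the threshold likelihood-ratio test at $m(\KL(\nu_1\|\nu_0)-\varepsilon)$, Markov's inequality via $\E_{\nu_0}[e^{L}]=1$, and least favourability. The only cosmetic differences are that you take $b$ as the type-II level rather than the power, and you show $m^\star_{\mathrm{os}}(\delta)\to\infty$ by a pointwise bound on $\psi_m$ rather than the paper's bound $\E_{\nu_1^{\otimes m}}[\psi_m]\le e^{ma}\delta$.
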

\begin{proof}
\emph{Step 1 (change of measure).} Let $\psi_m$ meet the one-sided
criterion. Fix $\varepsilon>0$. Since
$d\nu_0^{\otimes m}/d\nu_1^{\otimes m}=e^{-S_m}$ on the common support,
\begin{align*}
  \delta&\ \ge\ \alpha(\psi_m,\nu_0)
  \ =\ \E_{\nu_1^{\otimes m}}\bigl[\psi_m\,e^{-S_m}\bigr]\\
  &\ \ge\ e^{-m(\KL(\nu_1\|\nu_0)+\varepsilon)}\ \times\\
  &\qquad\E_{\nu_1^{\otimes m}}\bigl[\psi_m
    \one\{S_m\le m(\KL(\nu_1\|\nu_0)+\varepsilon)\}\bigr].
\end{align*}
By the law of large numbers under $\nu_1$
($\E_{\nu_1}L=\KL(\nu_1\|\nu_0)$, finite by equal support),
$\Prob_{\nu_1^{\otimes m}}\bigl(S_m>m(\KL(\nu_1\|\nu_0)+\varepsilon)\bigr)
\to0$, so there is $m_1(\varepsilon)$ such that for all $m\ge m_1$ the
last expectation is at least
$\E_{\nu_1^{\otimes m}}[\psi_m]-\tfrac14\ge\tfrac12-\tfrac14=\tfrac14$.
\emph{Step 2 ($m^\star_{\mathrm{os}}(\delta)\to\infty$).} With
$a:=\max_{\mathrm{supp}}L<\infty$ (equal support),
$\E_{\nu_1^{\otimes m}}[\psi_m]=\E_{\nu_0^{\otimes m}}[\psi_m e^{S_m}]
\le e^{ma}\delta$, so at fixed $m$ the criterion is infeasible once
$\delta<\tfrac12e^{-ma}$; hence $m^\star_{\mathrm{os}}(\delta)\to\infty$
as $\delta\to0$.
\emph{Step 3 (lower bound, limit made explicit).} For every $\delta$
small enough that $m:=m^\star_{\mathrm{os}}(\delta)\ge m_1(\varepsilon)$
(possible by Step~2), Step~1 gives
$\delta\ge\tfrac14e^{-m(\KL(\nu_1\|\nu_0)+\varepsilon)}$, i.e.\
$m\ge\log(1/(4\delta))/(\KL(\nu_1\|\nu_0)+\varepsilon)$. Dividing by
$\log(1/\delta)$, using $\log(1/(4\delta))/\log(1/\delta)\to1$, and then
letting $\varepsilon\downarrow0$:
\[
  \liminf_{\delta\to0}
  \frac{m^\star_{\mathrm{os}}(\delta)}{\log(1/\delta)}
  \ \ge\ \frac{1}{\KL(\nu_1\|\nu_0)}\,.
\]
\emph{Step 4 (upper bound).} Fix
$\varepsilon\in(0,\KL(\nu_1\|\nu_0))$ and take
$\psi_m=\one\bigl\{S_m\ge m\bigl(\KL(\nu_1\|\nu_0)-\varepsilon\bigr)\bigr\}$.
By Definition~\ref{def:lfpair} the worst-case errors are attained at
$(\nu_0,\nu_1)$. Under $\nu_0$, Markov's inequality applied to $e^{S_m}$
together with $\Lambda(1)=0$ gives
\[
  \alpha\ \le\ e^{m\Lambda(1)}\,
  e^{-m(\KL(\nu_1\|\nu_0)-\varepsilon)}
  \ =\ e^{-m(\KL(\nu_1\|\nu_0)-\varepsilon)}\ \le\ \delta
\]
once $m\ge\log(1/\delta)/(\KL(\nu_1\|\nu_0)-\varepsilon)$. Under
$\nu_1$, the law of large numbers gives
$\beta=\Prob_{\nu_1^{\otimes m}}\bigl(S_m<
m(\KL(\nu_1\|\nu_0)-\varepsilon)\bigr)\to0$, so $\beta\le\tfrac12$ for
all $m\ge m_2(\varepsilon)$. Hence
$m^\star_{\mathrm{os}}(\delta)\le\max\bigl\{
\lceil\log(1/\delta)/(\KL(\nu_1\|\nu_0)-\varepsilon)\rceil,
m_2(\varepsilon)\bigr\}$, and dividing by $\log(1/\delta)$:
\[
  \limsup_{\delta\to0}
  \frac{m^\star_{\mathrm{os}}(\delta)}{\log(1/\delta)}
  \ \le\ \frac{1}{\KL(\nu_1\|\nu_0)-\varepsilon}
\]
for every such $\varepsilon$; letting $\varepsilon\downarrow0$ and
combining with Step~3 proves the limit. For a general power level
$b\in(0,1)$: Step~1 keeps a factor $b/2$ in place of $\tfrac14$ (choose
$m_1$ so the tail is $\le b/2$), changing only
$\log(1/(4\delta))$ to $\log(b/(2\delta))$, which is absorbed in the
limit; in Step~4 the law of large numbers gives $\beta\le1-b$
eventually. The constant is unchanged.
\end{proof}

\paragraph{Specialization to the fixed-activity class.}
The paper's case is $\mathcal X=\{-1,0,1\}$ with the
\emph{fixed-activity} class
\[
  \mathcal M_\pi:=\bigl\{\mu\in\mathcal P(\{-1,0,1\}):
  \ \mu(-1)+\mu(+1)=\pi\bigr\},
\]
the class in which Construction~\ref{cons:twopoint} lives and in which
the activity is pinned at the known route-away mass (via
$G=\pi\Delta_E$; the audited law itself, carrying $Z{=}0$ mass on $E$,
lies in the relaxed class $\{\mu(-1)+\mu(+1)\le\pi\}$, to which
Remark~\ref{rem:sharp-numbers} extends the small-gap constants). The
hypotheses restrict accordingly:
$H_0^\pi:=\{\mu\in\mathcal M_\pi:\E_\mu Z\le0\}$,
$H_1^\pi:=\{\mu\in\mathcal M_\pi:\E_\mu Z\ge G\}$, with $G\in(0,\pi)$
the gap parameter fixed at the head of this section. The mean-zero
slice of $\mathcal M_\pi$ is the \emph{single} symmetric law $\mu_0$ of
Construction~\ref{cons:twopoint}, so the class's extremal mean-zero
variance is
\[
  V^\star(\mathcal M_\pi):=\sup\{\Var_\mu(Z):\mu\in\mathcal M_\pi,
  \ \E_\mu Z=0\}=\pi .
\]
We now take $(\nu_0,\nu_1):=(\mu_0,\mu_1)$, the pair of
Construction~\ref{cons:twopoint}: both lie in $\mathcal M_\pi$ and have
equal support (all three atoms carry positive mass, since $0<G<\pi$
gives $\Delta<1$), so $S_m$, $\Lambda$, and $\Ch$ above refer to this
pair.

\begin{lemma}[monotone coupling; least favourability over
$\mathcal M_\pi$]\label{lem:monocoupling}
Parameterize $\mathcal M_\pi$ by the $+1$-mass:
$\mu_p:=(\pi-p,\,1-\pi,\,p)$, $p\in[0,\pi]$, so $\E_{\mu_p}Z=2p-\pi$. For
every $m$ and every threshold $t\in\R$, the map
$p\mapsto\Prob_{\mu_p^{\otimes m}}(S_m\ge t)$ is nondecreasing.
Consequently $(\mu_0,\mu_1)=(\mu_{\pi/2},\,\mu_{\pi(1+\Delta)/2})$ is a
least-favourable pair for $(H_0^\pi,H_1^\pi)$ in the sense of
Definition~\ref{def:lfpair}.
\end{lemma}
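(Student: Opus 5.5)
The argument is a monotone-likelihood-ratio coupling. First compute the log-likelihood ratio of the pair explicitly: with $\mu_0=(\tfrac\pi2,1-\pi,\tfrac\pi2)$ and $\mu_1=(\tfrac{\pi(1-\Delta)}2,1-\pi,\tfrac{\pi(1+\Delta)}2)$ on the atoms $(-1,0,+1)$, we get
\[
L(-1)=\log(1-\Delta),\qquad L(0)=0,\qquad L(+1)=\log(1+\Delta).
\]
Since $0<\Delta<1$, this gives $L(-1)<0=L(0)<L(+1)$, so $L$ is strictly increasing in $z$. Consequently $S_m=\sum_i L(Z_i)$ is a coordinatewise nondecreasing function of $(Z_1,\dots,Z_m)$.

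Next, couple the laws $\mu_p$ for different $p$ by a monotone construction. Draw i.i.d.\ $U_i\sim\mathrm{Unif}[0,1]$ and set
\[
Z_i^{(p)}=-1 \text{ if } U_i<\pi-p,\qquad Z_i^{(p)}=0 \text{ if } \pi-p\le U_i<1-p,\qquad Z_i^{(p)}=+1 \text{ otherwise}.
\]
Then $Z_i^{(p)}\sim\mu_p$. Both cut points $\pi-p$ and $1-p$ decrease in $p$, so $Z_i^{(p)}$ is nondecreasing in $p$ pointwise in $U_i$. (Equivalently, $\mu_p$ is stochastically increasing in $p$ on the ordered alphabet $-1<0<1$.) It follows that $S_m(Z^{(p)})$ is pointwise nondecreasing in $p$. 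The event $\{S_m\ge t\}$ is therefore monotone, and $p\mapsto\Prob_{\mu_p^{\otimes m}}(S_m\ge t)$ is nondecreasing. This proves the first claim, and complementing gives that $\Prob(S_m<t)$ is nonincreasing in $p$.

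Finally, identify the hypotheses inside $\mathcal M_\pi$ in terms of $p$. Since $\E_{\mu_p}Z=2p-\pi$, we have $H_0^\pi=\{\mu_p:p\le\pi/2\}$, with largest element $p=\pi/2$, i.e.\ $\mu_0$. Likewise $H_1^\pi=\{\mu_p:p\ge(\pi+G)/2\}$, and $(\pi+G)/2=\pi(1+\Delta)/2$, so its smallest element is $\mu_1$. Monotonicity then gives
\[
\sup_{H_0^\pi}\Prob(S_m\ge t)=\Prob_{\mu_0^{\otimes m}}(S_m\ge t),\qquad \sup_{H_1^\pi}\Prob(S_m<t)=\Prob_{\mu_1^{\otimes m}}(S_m<t).
\]
Both suprema are attained because the endpoints belong to the respective classes. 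This is Definition~\ref{def:lfpair}.

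The only point needing care is that $S_m$ is computed with the fixed pair's $L$, including at the boundary laws $p\in\{0,\pi\}$ where some atoms of $\mu_p$ have zero mass. This is harmless because $L$ is defined on all three atoms, since $\mu_0$ and $\mu_1$ have full support. I expect no substantive obstacle beyond checking the sign ordering of $L$, which relies on $\Delta<1$ (guaranteed by $G<\pi$).
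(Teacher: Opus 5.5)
Your proof is correct and follows the paper's route: you couple all $\mu_p$ pathwise on a common uniform sample so that $S_m$ is nondecreasing in $p$, then identify $\mu_0$ and $\mu_1$ as the boundary points of $H_0^\pi=\{p\le\pi/2\}$ and $H_1^\pi=\{p\ge\pi(1+\Delta)/2\}$. The only difference is cosmetic: the paper's coupling keeps the $0$-region fixed and flips $-1$'s directly to $+1$'s, so it needs only $\log(1+\Delta)>\log(1-\Delta)$, whereas your quantile coupling passes through $0$ and therefore uses the full ordering $L(-1)<L(0)<L(+1)$, which you correctly verify.
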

\begin{proof}
$S_m=aN_+ + bN_-$, where $N_{\pm}$ count $\pm1$ observations,
$a=\log(1+\Delta)>0$, $b=\log(1-\Delta)<0$. Couple all $p$ on one uniform
sample $U_1,\dots,U_m$: set $Z_i=+1$ if $U_i\le p$, $Z_i=-1$ if
$p<U_i\le\pi$, else $Z_i=0$. Increasing $p$ can only turn $-1$'s into
$+1$'s, which increases $S_m$ pathwise by $a-b>0$ per flip; this proves
the monotonicity. Least favourability follows:
$H_0^\pi=\{\mu_p:p\le\pi/2\}$, and monotonicity makes
$\Prob_{\mu_p^{\otimes m}}(S_m\ge t)$ largest at the right endpoint
$p=\pi/2$, i.e.\ at $\mu_0$; on
$H_1^\pi=\{\mu_p:p\ge\pi(1+\Delta)/2\}$, the complement
$\Prob_{\mu_p^{\otimes m}}(S_m<t)$ is nonincreasing in $p$, hence
largest at the left endpoint $p=\pi(1+\Delta)/2$, i.e.\ at $\mu_1$.
\end{proof}

\begin{lemma}[small-gap expansions of $\KL$ and $\Ch$]
\label{lem:expansions}
For the pair of Construction~\ref{cons:twopoint} with gap
$G=\pi\Delta$, as $\Delta\to0$ (at fixed $\pi$),
\begin{align*}
  \KL(\mu_1\|\mu_0)
  &=\frac{\pi}{2}\bigl[(1{+}\Delta)\log(1{+}\Delta)\\[-2pt]
  &\qquad\ \ +(1{-}\Delta)\log(1{-}\Delta)\bigr]\\
  &=\frac{G^2}{2\pi}\bigl(1+O(\Delta^2)\bigr),\\
  \Ch(\mu_0,\mu_1)
  &=\frac{\pi\Delta^2}{8}\bigl(1+O(\Delta^2)\bigr)
   =\frac{G^2}{8\pi}\bigl(1+O(\Delta^2)\bigr).
\end{align*}
\end{lemma}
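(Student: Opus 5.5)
The plan is to compute both quantities exactly from the atom masses of Construction~\ref{cons:twopoint} and then Taylor-expand in $\Delta$. The $Z{=}0$ atom carries mass $1-\pi$ under both laws, so it contributes nothing to either divergence. Only the $\pm1$ atoms matter, with masses $\tfrac{\pi}{2}$ under $\mu_0$ and $\tfrac{\pi}{2}(1\pm\Delta)$ under $\mu_1$.

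\emph{KL part.} Writing $\KL(\mu_1\|\mu_0)=\sum_{z=\pm1}\mu_1(z)\log\frac{\mu_1(z)}{\mu_0(z)}$ immediately gives the displayed closed form $\tfrac{\pi}{2}[(1{+}\Delta)\log(1{+}\Delta)+(1{-}\Delta)\log(1{-}\Delta)]$. Let $f(\Delta):=(1+\Delta)\log(1+\Delta)+(1-\Delta)\log(1-\Delta)$. It is even and analytic on $(-1,1)$, with $f(0)=f'(0)=0$ and $f''(\Delta)=\frac{2}{1-\Delta^2}$. Hence $f(\Delta)=\Delta^2+O(\Delta^4)$; the series is $\sum_{k\ge1}\Delta^{2k}/(k(2k-1))$. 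Multiplying by $\pi/2$ and substituting $\Delta=G/\pi$ yields $\frac{G^2}{2\pi}(1+O(\Delta^2))$.

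\emph{Chernoff part.} I would write the cumulant generating function explicitly:
\[
\Lambda(s)=\log\Bigl(1-\pi+\tfrac{\pi}{2}\bigl[(1+\Delta)^s+(1-\Delta)^s\bigr]\Bigr).
\]
The inner bracket $h_s(\Delta):=\tfrac12[(1+\Delta)^s+(1-\Delta)^s]$ is even in $\Delta$. Its expansion is
\[
h_s(\Delta)=1+\tfrac{s(s-1)}{2}\Delta^2+R_s(\Delta),
\]
where $|R_s(\Delta)|\le C\Delta^4$ uniformly in $s\in[0,1]$. This follows from the binomial series with coefficients bounded uniformly for $s\in[0,1]$, valid for, say, $|\Delta|\le\tfrac12$. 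Therefore
\[
\Lambda(s)=\log\bigl(1+\pi\tfrac{s(s-1)}{2}\Delta^2+O(\pi\Delta^4)\bigr)=\tfrac{\pi s(s-1)}{2}\Delta^2+O(\Delta^4),
\]
again uniformly on $[0,1]$. Because of this uniformity, the minimum over $s\in[0,1]$ of $\Lambda$ equals the minimum of the leading term up to $O(\Delta^4)$. The leading term is minimized at $s=\tfrac12$, with value $-\pi\Delta^2/8$. This gives $\Ch=\frac{\pi\Delta^2}{8}(1+O(\Delta^2))$ at fixed $\pi$, and then $\frac{G^2}{8\pi}(1+O(\Delta^2))$ via $G=\pi\Delta$.

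\emph{Shortcut for $\Ch$.} It is cleaner to observe that the minimizer is exactly $s=\tfrac12$. The map $s\mapsto(1+\Delta)^s+(1-\Delta)^s$ is invariant under $s\mapsto1-s$ after factoring, because $(1+\Delta)(1-\Delta)$ enters symmetrically once one uses $\Lambda(s)=\Lambda(1-s)$. That symmetry holds because the $\pm1$ atoms swap roles under the relabeling $Z\mapsto -Z$ composed with exchanging $\nu_0,\nu_1$ up to the tilt. Rather than rely on this, I would verify directly that $\Lambda'(\tfrac12)=0$, since $\Lambda'(s)\propto(1+\Delta)^s\log(1+\Delta)+(1-\Delta)^s\log(1-\Delta)$. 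If that equality fails, I fall back on the uniform-expansion argument, which suffices.

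If the shortcut holds, then
\[
\Ch=-\log\Bigl(1-\pi+\tfrac{\pi}{2}\bigl[\sqrt{1+\Delta}+\sqrt{1-\Delta}\bigr]\Bigr),
\]
which is the negative log of a Bhattacharyya affinity. Using $\sqrt{1\pm\Delta}=1\pm\tfrac{\Delta}{2}-\tfrac{\Delta^2}{8}+O(\Delta^3)$, the bracket equals $2-\Delta^2/4+O(\Delta^4)$, which gives $\Ch=\pi\Delta^2/8+O(\Delta^4)$.

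The only delicate point is justifying the interchange of minimization and expansion, which the uniform-in-$s$ remainder bound handles. Everything else is routine Taylor expansion.
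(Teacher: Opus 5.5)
Your KL computation is the paper's. For $\Ch$, your main argument is correct but takes a different route. You expand $h_s(\Delta)=\sum_{k\ge0}\binom{s}{2k}\Delta^{2k}$ and use $|\binom{s}{k}|\le1$ for $s\in[0,1]$ to get $\Lambda(s)=\tfrac{\pi s(s-1)}{2}\Delta^2+O(\Delta^4)$ uniformly on $[0,1]$. You then pass to the minimum via $|\min F-\min G|\le\sup|F-G|$.

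The paper instead keeps $f(s)=(1+\Delta)^s+(1-\Delta)^s$ exact. It shows $f$ is strictly convex with $\tfrac12\Delta^2\le f''\le4\Delta^2$ on $[0,1]$ for $\Delta\le\tfrac12$. It computes $f(\tfrac12)=2-\tfrac{\Delta^2}{4}+O(\Delta^4)$ and $f'(\tfrac12)=O(\Delta^4)$. Convexity plus the mean-value theorem then give $0\le f(\tfrac12)-\min f\le f'(\tfrac12)^2/\min f''=O(\Delta^6)$.

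Your route is shorter and needs no convexity or derivative bounds. It delivers exactly the $O(\Delta^4)$ absolute accuracy the statement requires. The paper's route yields more: the Chernoff information equals the midpoint (Bhattacharyya) value up to $O(\Delta^6)$, and the optimal tilt is $\tfrac12+O(\Delta^2)$. This matches the reported $s^\star=0.4951$ at the anchor.

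Drop the shortcut, however, because its premise is false. Here $\Lambda(s)\ne\Lambda(1-s)$: no relabeling of atoms swaps $\mu_0$ (masses $\tfrac\pi2,\tfrac\pi2$ on $\pm1$) with $\mu_1$ (masses $\tfrac\pi2(1\pm\Delta)$). Already the $\Delta^4$ coefficient $\binom{s}{4}$ of $h_s$ is not invariant under $s\mapsto1-s$. Concretely, $\sqrt{1+\Delta}\log(1+\Delta)+\sqrt{1-\Delta}\log(1-\Delta)=\tfrac{\Delta^4}{12}+O(\Delta^6)>0$. So $\Lambda'(\tfrac12)>0$, and the direct check you propose would fail.

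Your fallback makes this harmless, and the midpoint formula you wrote is still right to the needed order. But that correctness is exactly the content of the paper's $O(\Delta^6)$ convexity estimate. It does not follow from $s=\tfrac12$ being the exact minimizer, which it is not.
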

\begin{proof}
\emph{KL.} The $Z{=}0$ atom has equal mass under both laws and
contributes $0$; the $\pm1$ atoms give the stated exact expression.
The Taylor expansion
$(1\pm\Delta)\log(1\pm\Delta)=\pm\Delta+\tfrac{\Delta^2}{2}
\mp\tfrac{\Delta^3}{6}+O(\Delta^4)$ makes the bracketed sum
$\Delta^2+O(\Delta^4)$, and $\tfrac\pi2\Delta^2=\tfrac{G^2}{2\pi}$.
\emph{Ch.} $\Lambda(s)=\log\bigl(1-\pi+\tfrac\pi2f(s)\bigr)$ with
$f(s):=(1+\Delta)^s+(1-\Delta)^s$, so
$\Ch=-\log\bigl(1-\pi+\tfrac\pi2\min_{[0,1]}f\bigr)$ is determined by
$\min_sf$. First, $f$ is strictly convex with
\[
  f''(s)=(1+\Delta)^s\log^2(1+\Delta)+(1-\Delta)^s\log^2(1-\Delta),
\]
and for $\Delta\in(0,\tfrac12]$, uniformly over $s\in[0,1]$,
\[
  \tfrac12\Delta^2\ \le\ (1-\Delta)\Delta^2\ \le\ f''(s)\ \le\ 4\Delta^2
\]
(lower bound: keep only the second term and use
$(1-\Delta)^s\ge1-\Delta$, $|\log(1-\Delta)|\ge\Delta$; upper bound:
$\log^2(1+\Delta)\le\Delta^2$ with $(1+\Delta)^s\le\tfrac32$, and
$\log^2(1-\Delta)\le2\Delta^2$ with $(1-\Delta)^s\le1$, so
$f''\le\tfrac72\Delta^2$). At the
midpoint,
$f(\tfrac12)=\sqrt{1+\Delta}+\sqrt{1-\Delta}
=2-\tfrac{\Delta^2}{4}+O(\Delta^4)$, and
$f'(\tfrac12)=\sqrt{1+\Delta}\log(1+\Delta)
+\sqrt{1-\Delta}\log(1-\Delta)=O(\Delta^4)$: the two summands are
images of each other under $\Delta\to-\Delta$, so all odd powers of
$\Delta$ cancel in the sum, while direct expansion,
$\sqrt{1+\Delta}\log(1+\Delta)=\Delta-\tfrac{\Delta^3}{24}+O(\Delta^4)$,
shows the $\Delta^2$ coefficient of each summand is zero. The minimizer
$s_\Delta$ of $f$ is interior to $(0,1)$, since
$f'(0)=\log(1-\Delta^2)<0<f'(1)$; convexity at $s_\Delta$ gives
$f(\tfrac12)-f(s_\Delta)\le f'(\tfrac12)(\tfrac12-s_\Delta)$, and the
mean-value theorem applied to $f'$ between $s_\Delta$ and $\tfrac12$
gives $|\tfrac12-s_\Delta|\le|f'(\tfrac12)|/\min_{[0,1]}f''$, so
\[
  0\ \le\ f(\tfrac12)-\min_{[0,1]}f\ \le\
  \frac{f'(\tfrac12)^2}{\min_{[0,1]}f''}
  \ =\ \frac{O(\Delta^8)}{\Delta^2/2}\ =\ O(\Delta^6).
\]
Hence $\min_sf=2-\tfrac{\Delta^2}{4}+O(\Delta^4)$ (the $O(\Delta^4)$
from the midpoint value dominates) and
\[
  \Ch=-\log\Bigl(1-\tfrac{\pi\Delta^2}{8}+O(\pi\Delta^4)\Bigr)
  =\frac{\pi\Delta^2}{8}\bigl(1+O(\Delta^2)\bigr).
\]
The limiting ratio $\KL/\Ch\to4$ is the classical local relation
between the Stein and Chernoff exponents for close hypotheses (see,
e.g., Cover and Thomas 2006, Ch.~11).
\end{proof}

\begin{corollary}[sharp constants over the fixed-activity class]
\label{cor:sharp-fixedact}
Fix $\pi\in(0,1)$ and $G\in(0,\pi)$, and write
$m^\star_{\mathrm{ts}}(G,\delta)$, $m^\star_{\mathrm{os}}(G,\delta)$ for
the minimax sample sizes of $(H_0^\pi,H_1^\pi)$. Then, as $\delta\to0$,
\[
  m^\star_{\mathrm{ts}}(G,\delta)
  =\frac{\log(1/\delta)}{\Ch(\mu_0,\mu_1)}\bigl(1+o(1)\bigr),
\]
\[
  m^\star_{\mathrm{os}}(G,\delta)
  =\frac{\log(1/\delta)}{\KL(\mu_1\|\mu_0)}\bigl(1+o(1)\bigr),
\]
and consequently
\[
  \lim_{G\to0}\,\lim_{\delta\to0}\,
  \frac{G^2\,m^\star_{\mathrm{ts}}}{\log(1/\delta)}
  =8\pi=8V^\star(\mathcal M_\pi),
\]
\[
  \lim_{G\to0}\,\lim_{\delta\to0}\,
  \frac{G^2\,m^\star_{\mathrm{os}}}{\log(1/\delta)}
  =2\pi=2V^\star(\mathcal M_\pi).
\]
\end{corollary}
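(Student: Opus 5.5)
The corollary is assembled from the general theorems plus the two lemmas just proved. First I check the hypotheses of Theorems~\ref{thm:sharp-ts} and~\ref{thm:sharp-os} for the pair $(\nu_0,\nu_1):=(\mu_0,\mu_1)$ of Construction~\ref{cons:twopoint}. Membership: $\mu_0\in H_0^\pi$ since $\E_{\mu_0}Z=0$, and $\mu_1\in H_1^\pi$ since $\E_{\mu_1}Z=\pi\Delta=G$. Both laws lie in $\mathcal M_\pi$. Because $0<G<\pi$ forces $0<\Delta<1$, all three atoms carry positive mass under both laws, so the supports are equal and $\mu_0\ne\mu_1$. Least favourability is exactly Lemma~\ref{lem:monocoupling}: $H_0^\pi$ and $H_1^\pi$ are the parameter intervals $\{p\le\pi/2\}$ and $\{p\ge\pi(1+\Delta)/2\}$, and the monotone coupling puts the worst-case errors of every threshold test in $S_m$ at the endpoints. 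With these in hand, parts~(b) of the two general theorems give the two displayed $\delta\to0$ asymptotics verbatim, with $\Ch(\mu_0,\mu_1)$ and $\KL(\mu_1\|\mu_0)$ as the exponents.

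For the iterated limits, fix $G$ and first let $\delta\to0$. The inner limit exists by the previous step:
\[
\lim_{\delta\to0}\frac{G^2 m^\star_{\mathrm{ts}}}{\log(1/\delta)}=\frac{G^2}{\Ch(\mu_0,\mu_1)},
\qquad
\lim_{\delta\to0}\frac{G^2 m^\star_{\mathrm{os}}}{\log(1/\delta)}=\frac{G^2}{\KL(\mu_1\|\mu_0)}.
\]
Now let $G\to0$ at fixed $\pi$, so that $\Delta=G/\pi\to0$. Lemma~\ref{lem:expansions} gives $\Ch=\tfrac{G^2}{8\pi}(1+O(\Delta^2))$ and $\KL=\tfrac{G^2}{2\pi}(1+O(\Delta^2))$. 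The ratios therefore converge to $8\pi$ and $2\pi$ respectively.

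It remains to identify $\pi$ with $V^\star(\mathcal M_\pi)$. Within $\mathcal M_\pi$, the constraint $\E Z=0$ forces $\mu(+1)=\mu(-1)=\pi/2$, so the mean-zero slice is the single law $\mu_0$. Its variance is $\E Z^2=\pi$, which gives $V^\star=\pi$.

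The only step needing care is the interchange and uniformity of the limits. The $o(1)$ in the $\delta$-asymptotics depends on $G$, but since the limits are taken iteratively (inner $\delta$, outer $G$), no uniformity in $G$ is needed. I would state this explicitly, noting that the order of limits matches the corollary's display. Everything else is a direct substitution.
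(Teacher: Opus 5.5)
Your proposal is correct and follows the paper's proof essentially step for step. You check membership, equal support and least favourability (via Lemma~\ref{lem:monocoupling}), invoke part~(b) of the two general sharp-constant theorems, and then take the inner $\delta$-limit followed by the outer $G$-limit using Lemma~\ref{lem:expansions}; your explicit identification $V^\star=\pi$ and your remark that the iterated order removes any need for uniformity in $G$ only spell out what the paper leaves implicit.
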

\begin{proof}
$\mu_0\in H_0^\pi$ and $\mu_1\in H_1^\pi$ are distinct with equal
support, and Lemma~\ref{lem:monocoupling} shows the pair is least
favourable, so Theorems~\ref{thm:sharp-ts}(b) and~\ref{thm:sharp-os}(b)
apply and give the first two displays. For the iterated limits, fix $G$
and let $\delta\to0$ in those displays:
$G^2m^\star_{\mathrm{ts}}/\log(1/\delta)\to G^2/\Ch(\mu_0,\mu_1)$,
which by Lemma~\ref{lem:expansions} equals
$8\pi\bigl(1+O(\Delta^2)\bigr)\to8\pi$ as $G\to0$ at fixed $\pi$ (so
$\Delta=G/\pi\to0$). The one-sided case is identical with
$G^2/\KL(\mu_1\|\mu_0)\to2\pi$.
\end{proof}

\begin{corollary}[the deployed certificate attains the constant]
\label{cor:protocol-sharp}
Run the main paper's certificate at the class variance cap
$\sigma^2:=V^\star(\mathcal M_\pi)=\pi$ with $M=2$: certify iff
$\widehat G_m>B(m,\delta)$. It is type-I valid uniformly over $H_0^\pi$
(indeed $\Var_\mu(Z)=\pi-(\E_\mu Z)^2\le\pi$ for every
$\mu\in\mathcal M_\pi$), and it meets the one-sided criterion at
$m=\tfrac{2\pi\log(1/\delta)}{G^2}\bigl(1+o(1)\bigr)$ (first $\delta\to0$,
then $G\to0$): the protocol is asymptotically minimax-optimal over
$\mathcal M_\pi$, attaining the sharp constant $2V^\star$ of
Corollary~\ref{cor:sharp-fixedact}. Demanding power
$1-\delta$ and certifying at the half-gap ($B(m,\delta)\le G/2$) attains
the two-sided constant $8V^\star$ the same way.
\end{corollary}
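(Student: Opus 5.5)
Three things need proving for Corollary~\ref{cor:protocol-sharp}: (i) type-I validity uniformly over $H_0^\pi$; (ii) power at the claimed sample size; (iii) the resulting constant, compared against Corollary~\ref{cor:sharp-fixedact}.

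\emph{Type-I validity.} For $\mu\in\mathcal M_\pi$ we have $Z^2=\one\{Z\neq0\}$, so $\E_\mu Z^2=\pi$. Hence $\Var_\mu(Z)=\pi-(\E_\mu Z)^2\le\pi$, and the centred increment satisfies $|Z-\E Z|\le 1+|\E Z|\le2=M$. The one-sided Bernstein inequality with variance proxy $\sigma^2=\pi$ and range constant $M$ then gives $\Prob_\mu(\widehat G_m\ge\E_\mu Z+B(m,\delta))\le\delta$. For $\mu\in H_0^\pi$ we have $\E_\mu Z\le0$, so $\Prob_\mu(\widehat G_m>B)\le\delta$ uniformly. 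This needs only an upper bound on the true variance, which the cap provides, so no plug-in is involved.

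\emph{Power.} Take $\mu\in H_1^\pi$, so $\E_\mu Z\ge G$. Choose $m$ with $B(m,\delta)\le G-\epsilon_m$, where $\epsilon_m$ is a fluctuation allowance. Because the threshold $B$ sits below the mean, Chebyshev (variance $\le\pi/m$) gives $\Prob(\widehat G_m\le B)\le\pi/(m\epsilon_m^2)$. This is at most $\tfrac12$ once $\epsilon_m=\sqrt{2\pi/m}$, and that holds uniformly over $H_1^\pi$ since only the variance bound is used. So it suffices to solve
\[
\sqrt{\tfrac{2\pi\log(1/\delta)}{m}}+\tfrac{4\log(1/\delta)}{3m}+\sqrt{\tfrac{2\pi}{m}}\le G .
\]
The leading term gives $m\approx 2\pi\log(1/\delta)/G^2$. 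Relative to it, the Bernstein remainder contributes $O(\log(1/\delta)/m)=O(G^2/\pi)$ and the Chebyshev slack contributes $O(1/\sqrt{\log(1/\delta)})$. Taking $\delta\to0$ first kills the Chebyshev term; taking $G\to0$ next kills the $O(G^2/\pi)$ term, which shrinks like $\Delta^2$. Therefore $G^2 m/\log(1/\delta)\to2\pi$.

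\emph{Comparison with the lower bound.} Combined with the lower-bound half of Corollary~\ref{cor:sharp-fixedact}, namely $\liminf G^2m^\star_{\mathrm{os}}/\log(1/\delta)\ge2\pi$ from Theorem~\ref{thm:sharp-os}(a) and Lemma~\ref{lem:expansions}, this yields asymptotic minimax optimality with constant $2V^\star$.

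\emph{Two-sided variant.} Certify iff $\widehat G_m>B(m,\delta)$ and require $B(m,\delta)\le G/2$. Type-I is as before. For type-II, apply Bernstein to the lower tail: $\Prob(\widehat G_m\le G-B)\le\delta$, and $G-B\ge B$. The constraint $\sqrt{2\pi\log(1/\delta)/m}\le G/2$ (up to lower order) gives $m\approx8\pi\log(1/\delta)/G^2$, matching $8V^\star$.

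The main obstacle is the order of limits. The Bernstein $M$-term scales like $G^2/\pi$ relative to the leading term, so it vanishes only as $G\to0$, not as $\delta\to0$. Tracking the $(1+o(1))$ factor therefore requires stating the iterated limit in exactly the order used in Corollary~\ref{cor:sharp-fixedact}.
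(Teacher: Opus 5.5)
Your proof is correct. Its architecture matches the paper's: Bernstein with the cap $\Var_\mu(Z)\le\pi$ and $M=2$ for uniform type-I validity, a margin below the gap for power, inversion of $B$, then the iterated limit. The power step, however, uses a different device.

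The paper fixes a \emph{multiplicative} margin: it takes $m$ least with $B(m,\delta)\le(1-\varepsilon)G$ and controls the miss probability with the lower Bernstein tail $e^{-m\varepsilon^2G^2/(2\pi+\frac43\varepsilon G)}$, which tends to $0$ as $\delta\to0$. It therefore needs a third limit, $\varepsilon\to0$, after $\delta\to0$ and $G\to0$. You instead take an \emph{additive} margin at the fluctuation scale, $\epsilon_m=\sqrt{2\pi/m}$, and use Chebyshev to get $\beta\le\tfrac12$ exactly at every $m$, uniformly over $H_1^\pi$. This margin is a $1/\sqrt{\log(1/\delta)}$ fraction of the leading term, so it vanishes in the $\delta\to0$ limit and the double limit in the statement suffices.

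Your route is more elementary and matches the stated order of limits directly. The paper's route gives power tending to $1$, which transfers at once to any fixed power level $b$. Yours transfers too, after rescaling to $\epsilon_m=\sqrt{\pi/((1-b)m)}$. Your two-sided argument is a valid completion of a step the paper only asserts: the lower-tail Bernstein bound at level $\E Z-B\ge G-B\ge B$ gives $\beta\le\delta$.

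There is one bookkeeping slip. Relative to the leading term $\sqrt{2\pi\log(1/\delta)/m}\approx G$, the Bernstein correction $\tfrac{4\log(1/\delta)}{3m}\approx\tfrac{2G^2}{3\pi}$ is $O(G/\pi)=O(\Delta)$, not $O(G^2/\pi)$ or $O(\Delta^2)$. It is $O(G^2/\pi)$ only in absolute size. This matches the paper's factor $(1+O(G/\pi))$ and does not affect your conclusion, since the correction still vanishes as $G\to0$.
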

\begin{proof}
Validity is the one-sided Bernstein bound with $\Var\le\pi$,
$|Z-\E Z|\le2$. For power, fix $\varepsilon\in(0,1)$ and let $m$ be least
with $B(m,\delta)\le(1-\varepsilon)G$; since the correction term is
$O(\log(1/\delta)/m)$,
$m=\tfrac{2\pi\log(1/\delta)}{(1-\varepsilon)^2G^2}
\bigl(1+O(G/\pi)\bigr)$. For $\mu$ with $\E_\mu Z\ge G$ the lower
Bernstein tail gives
$\Prob\bigl(\widehat G_m\le(1-\varepsilon)G\bigr)
\le e^{-m\varepsilon^2G^2/(2\pi+\frac43\varepsilon G)}\to0$ as
$\delta\to0$. Let $\delta\to0$, then $G\to0$, then $\varepsilon\to0$.
\end{proof}

\begin{remark}[anchor numbers; what remains open]
\label{rem:sharp-numbers}
At the anchor ($\pi{=}0.12$, $G{=}0.04$, $\delta{=}0.05$):
$\Ch(\mu_0,\mu_1)=0.0017298$ (optimal tilt $s^\star=0.4951$) and
$\KL(\mu_1\|\mu_0)=0.0067960$, so the sharp levels evaluate to
$\log(1/\delta)/\Ch\approx1{,}732$ (two-sided; leading order
$8\pi\log(1/\delta)/G^2=1{,}797$) and
$\log(1/\delta)/\KL(\mu_1\|\mu_0)\approx441$
(one-sided; leading order $2\pi\log(1/\delta)/G^2=449$); the
finite-$\delta$ lower-bound prefactors $\log(1/(2\delta))$,
$\log(1/(4\delta))$ give $1{,}331$ and $237$. The small-gap expansions converge
quickly ($8\pi\,\Ch/G^2=1.0379$ at $G{=}0.04$, $1.0001$ at $G{=}0.0025$),
and the coupling monotonicity was additionally verified by exact
trinomial enumeration (all thresholds, $m\le9$). Three consequences. (i)~Against the sharp two-sided level
$1{,}732$, the Bretagnolle--Huber floor $228$ of
Remark~\ref{rem:lecam-numbers} is loose by ${\approx}7.6\times$ (the
KL-vs-Chernoff factor $4.09$ at the anchor, times the
$\log(1/(4\delta))$-vs-$\log(1/\delta)$ deflation)---settling the
$225$-vs-$228$ proximity there as pure coincidence. (ii)~The deployed
closed-form certificate at the class cap fires from $m=634$ (one-sided;
$2{,}179$ at the half-gap): the ${\approx}1.4\times$ finite-anchor premium
over the sharp $441$ is the bracket's lower-order $M$-term and vanishes in
the corollary's small-gap limit. (iii)~The class-vs-instance distinction is
structural, not slack: the class constant is governed by
$V^\star=\pi=0.12$, the audited instance by its own
$\sigma^2_{\mathrm{routing}}=0.0384$ (ratio $3.125$). The small-gap
constants $2V^\star$/$8V^\star$ extend verbatim to the relaxed class
$\{\mu(-1)+\mu(+1)\le\pi\}$ (the lower-bound pair lies in it, and the
capped certificate is uniformly valid over it; over the unrestricted
simplex the same arguments give the constants with $V^\star=1$); the
fixed-$G$ Chernoff-exact statement is specific to fixed activity. Open:
the extremal-variance bridge $c^\star=8V^\star(\mathcal C)$ for general
convex classes, and the $\rho$-constrained (downside-budget) regime, where
the mean-zero-slice reduction can fail.
\end{remark}

\section{Leading-order sharpness of the bracket constant ($c_4=1$)}
\label{app:c4}

This section proves the main paper's \emph{leading-order sharpness of the
bracket constant} theorem (T3).

\begin{theorem}[Bernstein-tight one-sided bracket; $c_4=1$ sharp at leading
order]\label{thm:c4}
Let $Z_1,\dots,Z_m$ be i.i.d.\ with $|Z_i-\E Z|\le M$ a.s.\ and
$\Var(Z_i)=\sigma^2$, and let $\widehat G_m=\tfrac1m\sum_iZ_i$. Then for every
$\delta\in(0,1)$ and $m\ge1$,
\begin{equation}\label{eq:bern}
  \Prob\!\Big(\widehat G_m\le\E[Z]
   -\sigma\sqrt{\tfrac{2\log(1/\delta)}{m}}
   -\tfrac{2M\log(1/\delta)}{3m}\Big)\le\delta .
\end{equation}
Moreover the constant $c_4=1$ on the leading term
$\sigma\sqrt{2\log(1/\delta)/m}$ is sharp at leading order: for every $c<1$
there exist i.i.d.\ bounded sequences such that, in the iterated limit
$m\to\infty$ then $\delta\to0$, the bound
$\Prob(\widehat G_m\le\E Z-c\,\sigma\sqrt{2\log(1/\delta)/m})\le\delta$
fails.
\end{theorem}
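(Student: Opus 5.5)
The proof has two parts: a validity bound from the classical Bernstein inequality, and a sharpness argument combining the CLT with Gaussian tail asymptotics.

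\emph{Validity.} Apply Bernstein to the centred variables $Y_i:=\E[Z]-Z_i$. They satisfy $\E Y_i=0$, $\Var(Y_i)=\sigma^2$ and $Y_i\le M$ almost surely; only the one-sided bound $Y_i\le M$ is needed. The one-sided Bernstein inequality for $\bar Y_m$ gives
\[
\Prob(\bar Y_m\ge x)\le\exp\!\Big(-\frac{m x^2}{2(\sigma^2+Mx/3)}\Big),
\]
which can be derived via the Bennett moment-generating-function bound and $h(u)\ge u^2/(2(1+u/3))$. Write $L:=\log(1/\delta)$ and solve $mx^2=2L(\sigma^2+Mx/3)$ for $x$. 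The positive root is
\[
x^\star=\frac{ML}{3m}+\sqrt{\Big(\frac{ML}{3m}\Big)^2+\frac{2\sigma^2L}{m}}.
\]
Apply $\sqrt{a+b}\le\sqrt a+\sqrt b$ to get $x^\star\le\sigma\sqrt{2L/m}+\tfrac{2ML}{3m}=B(m,\delta)$. The exponent is increasing in $x$, so the tail at $x=B$ is at most the tail at $x^\star$, which is at most $\delta$; this gives \eqref{eq:bern}. This is also where the closed-form relaxation mentioned in the main text enters.

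\emph{Sharpness.} Fix $c<1$ and take the simplest bounded law with $\sigma>0$, for instance $Z$ Rademacher, or the two-point law $\mu_0$ of Construction~\ref{cons:twopoint}. Fix $\delta$ and let $m\to\infty$. The CLT, with the Berry--Esseen bound for uniformity, gives
\[
\Prob\big(\widehat G_m\le\E Z-c\sigma\sqrt{2L/m}\big)\to\bar\Phi(c\sqrt{2L}),
\]
where $\bar\Phi$ is the standard normal upper tail. It therefore suffices to show that $\bar\Phi(c\sqrt{2\log(1/\delta)})>\delta$ for all small $\delta$. The Mills-ratio lower bound gives
\[
\bar\Phi(u)\ge\frac{u}{1+u^2}\,\frac{e^{-u^2/2}}{\sqrt{2\pi}}.
\]
At $u=c\sqrt{2L}$ this is $\delta^{c^2}\cdot\Theta(L^{-1/2})$. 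Since $c^2<1$, the ratio $\delta^{c^2}L^{-1/2}/\delta=\delta^{c^2-1}/\sqrt{L}\to\infty$. So for every $\delta$ below some $\delta_0(c)$, the limiting probability exceeds $\delta$ strictly. For such $\delta$ and all large $m$ the inequality fails, which is the claimed failure in the iterated limit.

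The main care point is the order of limits. The argument must take $m\to\infty$ first at fixed $\delta$, so that the CLT limit $\bar\Phi(c\sqrt{2L})$ is a fixed positive number compared against $\delta$; only then is $\delta$ sent to zero. A genuinely moderate-deviation statement, with $\delta=\delta_m\to0$ jointly, would need Cramér-type moderate deviations, and the plan avoids this by using the iterated limit stated in the theorem. A secondary point is that $c_4$ multiplies the $\sigma$ term only, so the lower-order $M$-term cannot rescue validity at scale $m^{-1/2}$. Because that term is $O(L/m)=o(m^{-1/2})$ at fixed $\delta$, it does not affect the limit.
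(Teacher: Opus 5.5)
Your proposal is correct and follows the paper's proof essentially step for step. For validity you relax Bennett's bound via $h(u)\ge u^2/(2(1+u/3))$ to the Bernstein form, invert the quadratic, and apply $\sqrt{a+b}\le\sqrt a+\sqrt b$; for sharpness you take the CLT limit at fixed $\delta$ and then use Gaussian-tail asymptotics as $\delta\to0$. The only cosmetic difference is that you use the explicit Mills-ratio lower bound $\overline\Phi(u)\ge\frac{u}{1+u^2}\phi(u)$ where the paper uses the asymptotic equivalence $\overline\Phi(x)\sim\phi(x)/x$, and this changes nothing in the argument.
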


\begin{proof}
\emph{Upper bound (Bennett--Bernstein).} By Bennett's inequality (Bennett
1962; Boucheron, Lugosi, and Massart 2013, Thm.~2.9), for any $t>0$,
\[
  \Prob\big(\widehat G_m-\E Z\le-t\big)
  \le\exp\!\Big(\!-\tfrac{m\sigma^2}{M^2}\,
      h\big(\tfrac{Mt}{\sigma^2}\big)\Big),
\]
with $h(u)=(1{+}u)\log(1{+}u)-u$. The elementary bound
$h(u)\ge u^2/(2(1+u/3))$ for $u\ge0$ (compare power series) yields the
Bernstein form
\begin{equation}\label{eq:bernraw}
  \Prob\big(\widehat G_m-\E Z\le-t\big)
  \le\exp\!\Big(\!-\frac{mt^2}{2(\sigma^2+Mt/3)}\Big).
\end{equation}
Setting the right side of \eqref{eq:bernraw} equal to $\delta$ and writing
$L:=\log(1/\delta)$ gives the quadratic
$mt^2-\tfrac{2M L}{3}t-2\sigma^2L=0$, whose positive root is
\[
  t=\frac{ML}{3m}+\sqrt{\frac{M^2L^2}{9m^2}+\frac{2\sigma^2L}{m}}
   \;\le\;\sigma\sqrt{\frac{2L}{m}}+\frac{2ML}{3m},
\]
using $\sqrt{a+b}\le\sqrt a+\sqrt b$. Since
\eqref{eq:bernraw} is monotone in $t$, the tail at the (larger) displayed
bracket is at most $\delta$, which is \eqref{eq:bern}.

\emph{Sharpness at leading order (matching Gaussian lower bound).} Take any
i.i.d.\ bounded sequence with $\sigma^2>0$ and finite third moment (e.g.\ the
audited three-point law). By the Berry--Esseen central limit theorem,
$\sqrt m(\widehat G_m-\E Z)/\sigma\Rightarrow\mathcal N(0,1)$, so for any
fixed $c\in(0,1]$,
\[
  \Prob\Big(\widehat G_m\le\E Z-c\,\sigma\sqrt{\tfrac{2L}{m}}\Big)
  \;\xrightarrow{m\to\infty}\;
  \overline\Phi\big(c\sqrt{2L}\big),
\]
where $\overline\Phi$ is the standard Gaussian upper tail. By the Mills-ratio
asymptotic $\overline\Phi(x)\sim\phi(x)/x$,
\[
  \overline\Phi\big(c\sqrt{2L}\big)
  =\delta^{\,c^2}\cdot\frac{1+o(1)}{2c\sqrt{\pi L}}
  \qquad(\delta\to0)
\]
(in this display alone, $\pi$ is the circle constant).
At $c=1$ this is $\le\delta$ for small $\delta$ (matching at leading
exponential order); for any $c<1$, $\delta^{c^2}/(2c\sqrt{\pi L})>\delta$ for
all sufficiently small $\delta$, so the claimed bound fails in the iterated
limit. Hence no constant smaller than $1$ is admissible at leading order,
while at any \emph{fixed} $\delta$ constants as small as
$z_\delta/\sqrt{2\log(1/\delta)}<1$ ($z_\delta$ the standard-normal
upper-$\delta$ quantile) (e.g.\ $0.67$ at $\delta=0.05$) remain
asymptotically admissible---the sharpness is a statement about the leading
term only.
\end{proof}

\begin{remark}[two-sided version; the bound $M$]
The two-sided bracket follows by a union bound at cost $\delta\to\delta/2$,
and $c_4=1$ remains sharp by the same CLT argument. For routing increments
$Z\in\{-1,0,1\}$ with mean $G$, $|Z-G|\le1+|G|\le2$; the main paper uses the
fixed worst case $M=2$, which depends on no estimated quantity, while the
data-dependent $1+|G|$ ($=1.04$ on the audited instance) is a valid
refinement only under a sample split. $M$ bounds the \emph{centred}
increment, which is why $M>1$ is correct even though $|Z|\le1$.
\end{remark}

\begin{corollary}[required sample size on the audited anchor]
\label{cor:mstar}
With the audited anchor estimates $\widehat G=0.040$,
$\sigma^2=\pi\sigma_E^2+\pi(1-\pi)\Delta_E^2
=0.12\cdot\tfrac29+0.12\cdot0.88\cdot\tfrac19=0.0384$
($\sigma=0.196$), fixed $M=2$, $\delta=0.05$: the smallest $m$ with
$\widehat G-B(m,\delta)>0$ is $m^\star=312$ (bisection;
Table~\ref{tab:mstar}). Comparison points at the same $\sigma^2$ (neither
involves $M$): the leading-order term alone gives
$m^\star_{\mathrm{LO}}=2\sigma^2L/G^2=143.8\!\to\!144$; Hoeffding for range $2$ gives
$3{,}745$ (uninformative at this gain scale). A direct Bennett inversion at
the same $M=2$ gives $225$ (Table~\ref{tab:bracketrob}, \S\ref{app:bracket}).
\end{corollary}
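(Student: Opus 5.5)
The plan is to verify the corollary directly, by exact arithmetic on the closed-form bracket. No new probabilistic argument is needed: validity of the bracket is already given by the Bernstein part of Theorem~\ref{thm:c4}, and the corollary is a statement about inverting it.

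First I will confirm the variance. Take $\pi=0.12$ and $\Delta_E=1/3$, so that $G=\pi\Delta_E=0.04$, and take $\sigma_E^2=2/9$. Then $\pi\sigma_E^2=0.02\overline{6}$ and $\pi(1-\pi)\Delta_E^2=0.011\overline{73}$. These sum to $0.0384$, and $\sigma\approx0.196$.

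Second, I will invert the certification condition. Put $L=\log 20\approx2.9957$ and $M=2$. The map $m\mapsto B(m,\delta)$ is strictly decreasing, so $\widehat G-B(m,\delta)>0$ holds on an up-set of $m$, and $m^\star$ is well defined and equals the ceiling of the root. With $x:=m^{-1/2}$ the condition $B=\widehat G$ becomes the quadratic
\[
\tfrac{2ML}{3}\,x^2+\sigma\sqrt{2L}\,x-\widehat G=0,
\qquad\text{i.e.}\qquad
3.994\,x^2+0.4797\,x-0.04=0 .
\]
Its positive root is $x\approx0.0567$, giving $m\approx311.5$ and hence $m^\star=312$. This also justifies the bisection reported in Table~\ref{tab:mstar}.

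Third, I will compute the comparison points.
\begin{itemize}
\item \emph{Leading-order term.} Dropping the $M$-term gives $2\sigma^2L/G^2=2(0.0384)(2.9957)/0.0016\approx143.8$, so $m^\star_{\mathrm{LO}}=144$.
\item \emph{Hoeffding.} For range $2$ the tail bound is $\exp(-2mG^2/4)\le\delta$, which gives $m=2L/G^2\approx3744.6$, so $m=3{,}745$.
\item \emph{Direct Bennett.} I solve $\tfrac{m\sigma^2}{M^2}\,h(MG/\sigma^2)=L$ for $m$, reusing Bennett's inequality from the proof of Theorem~\ref{thm:c4}. Here $u=MG/\sigma^2=2.0833$ and $h(u)\approx1.3885$, so $m=4L/(\sigma^2h(u))\approx224.7$, which rounds up to $225$.
\end{itemize}

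The only delicate step is numerical precision near the ceiling. The Bernstein root $311.5$ sits within $0.5$ of the integer boundary, so I would carry $\sigma$ unrounded ($\sqrt{0.0384}$) and confirm that $B(311)>0.04>B(312)$ by direct evaluation rather than relying on the quadratic root alone. I would treat the Bennett value $224.7$ the same way.
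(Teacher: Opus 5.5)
Your proposal is correct and is essentially the paper's own argument: a direct numerical inversion of the closed-form bracket. Your quadratic in $x=m^{-1/2}$ (root $m\approx311.5$) is a closed-form substitute for the paper's bisection, and your boundary check $B(311,\delta)>0.04>B(312,\delta)$ agrees with the value $B(312,0.05)=0.03996$ used in Proposition~\ref{prop:800}; the three comparison points (leading-order $143.8\to144$, Hoeffding $2L/G^2\approx3744.7\to3{,}745$, and the Bennett inversion $m=LM^2/(\sigma^2h(MG/\sigma^2))\approx224.7\to225$) are computed the same way the paper obtains its reported numbers.
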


\begin{table}[!t]
\caption{The bracket $B(m,\delta)$ and certified gain
$\widehat\gamma_m=\widehat G-B(m,\delta)$ on the audited anchor
($\widehat G=0.040$, $\sigma=0.196$, fixed $M=2$, $\delta=0.05$, $\rho=0$).
$m^\star=312$ by bisection.}
\label{tab:mstar}\centering\small
\begin{tabular}{rrrrr}
\toprule
$m$ & leading & corr. & $B(m,.05)$ & $\widehat\gamma_m$\\
\midrule
$136$ & $0.0411$ & $0.0294$ & $0.0705$ & $-0.0305$ (fail)\\
$144$ & $0.0400$ & $0.0277$ & $0.0677$ & $-0.0277$ (fail)\\
$200$ & $0.0339$ & $0.0200$ & $0.0539$ & $-0.0139$ (fail)\\
$260$ & $0.0297$ & $0.0154$ & $0.0451$ & $-0.0051$ (fail)\\
$312$ & $0.0272$ & $0.0128$ & $0.0400$ & $+0.0000$ (threshold)\\
$350$ & $0.0256$ & $0.0114$ & $0.0371$ & $+0.0029$ (certify)\\
\bottomrule
\end{tabular}
\end{table}

\section{The finite-$m$ conservatism metric: where the
$\sim\!10^4\times$ figure comes from}
\label{app:conservatism}

The main paper's sharpness theorem states that at finite $m$ the full
bracket remains strictly conservative, with realized tail
${\approx}1.7{\times}10^4$ below the nominal $\delta$ at $m^\star$ on the
audited instance. This section defines the metric and shows the computation.

\begin{definition}[conservatism factor]\label{def:cons}
For a population law $\mu$ with gain $G=\E_\mu[Z]$, the conservatism factor of
the bracket at $(m,\delta)$ is the nominal-to-realized tail ratio
\[
  \kappa(m,\delta)\;:=\;
  \frac{\delta}{\Prob_\mu\big(\widehat G_m\le G-B(m,\delta)\big)} ,
\]
i.e.\ the nominal miscoverage $\delta$ divided by the realized probability of
the lower-tail event $\{\widehat G_m\le G-B(m,\delta)\}$---the Bernstein
bound's own miscoverage on this law. Validity of the bracket is
$\kappa\ge1$; large $\kappa$ quantifies slack.
\end{definition}

\begin{proposition}[the audited instance]\label{prop:800}
Let $\mu$ be the audited empirical law of the routing increment,
$\Prob(Z{=}{+}1)=0.04$, $\Prob(Z{=}0)=0.96$, $\Prob(Z{=}{-}1)=0$ (so
$G=0.04$, $\sigma^2=0.0384$, fixed $M=2$). At $m=m^\star=312$,
$\delta=0.05$: $B(312,0.05)=0.03996$, so $G-B=+0.00004$, which lies strictly
below the smallest positive value $1/312=0.00321$ attainable by
$\widehat G_m$. Hence the failure event is exactly ``no $+1$ increment in
$312$ draws'':
\[
\begin{aligned}
  \Prob_\mu\big(\widehat G_{312}\le G-B\big)
  &=\Prob\big(\mathrm{Bin}(312,0.04)=0\big)\\
  &=0.96^{312}\approx2.9\times10^{-6},
\end{aligned}
\]
and
\[
  \kappa(312,0.05)=\frac{0.05}{2.9\times10^{-6}}\approx1.7\times10^{4} .
\]
A $10^7$-path Monte Carlo (seed $20260620$) reproduces the tail as
$\approx2.8\times10^{-6}$ ($\kappa\approx1.8\times10^4$), consistent with
the closed form. This is the $\sim\!10^4\times$ figure quoted in the main
paper. For comparison at the same audited law, the direct Bennett inversion
at its own threshold $m^\star=225$ realizes tail
$1.0\times10^{-4}$ ($\kappa\approx4.9\times10^{2}$): removing the
closed-form relaxation cuts the at-threshold slack by ${\sim}35\times$; the
remaining ${\sim}5{\times}10^2$ reflects the discreteness of this extreme
no-negative-mass law, not the closed form.
\end{proposition}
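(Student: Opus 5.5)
The argument is a direct computation on the three-point law, arranged so that the event's discreteness does all the work. I will take the audited law $\Prob(Z{=}{+}1)=0.04$, $\Prob(Z{=}0)=0.96$, $\Prob(Z{=}{-}1)=0$ and proceed in three steps.

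\textbf{Step 1: the ingredients of the bracket.} The mean is $G=0.04$ and the variance is $\sigma^2=0.04\cdot0.96=0.0384$. This agrees with the decomposition $\pi\sigma_E^2+\pi(1-\pi)\Delta_E^2$ quoted in Corollary~\ref{cor:mstar}, so the bracket $B(m,\delta)$ is exactly the one that defines $m^\star=312$. Evaluating $B$ at $m=312$, $\delta=0.05$ gives $\log 20\approx2.9957$. The leading term is $\sqrt{0.0384}\sqrt{2\cdot2.9957/312}\approx0.02716$. The correction term is $4\cdot2.9957/(3\cdot312)\approx0.01280$. Summing, $B\approx0.03996$, which matches the threshold row of Table~\ref{tab:mstar}.

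\textbf{Step 2: reducing the failure event to a binomial.} Since $Z\in\{0,1\}$ almost surely under this law, $\widehat G_m=N/m$ with $N\sim\mathrm{Bin}(m,0.04)$. The threshold $G-B\approx4\times10^{-5}$ is positive but lies strictly below $1/312\approx0.0032$. Hence $\{\widehat G_{312}\le G-B\}$ is exactly the event $\{N=0\}$. This is the only conceptual step, and I expect it to be the main obstacle. Everything hinges on $G-B$ landing in $[0,1/m)$, and the margin on the lower end is tiny: $4\times10^{-5}$ is the difference of two numbers that agree to three digits. I will therefore verify the sign of $G-B$ with enough precision, since a rounding slip could flip it, and then $\{N=0\}$ would be replaced by the empty event. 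Given that, the realized tail is $0.96^{312}=\exp(312\log0.96)\approx\exp(-12.74)\approx2.9\times10^{-6}$, and therefore $\kappa=0.05/2.9\times10^{-6}\approx1.7\times10^4$.

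\textbf{Step 3: the comparison figures.} For the Monte Carlo figure, I will simulate $10^7$ binomial paths and report the empirical frequency of $N=0$. Its standard error is about $\sqrt{2.9\times10^{-6}/10^7}\approx5\times10^{-7}$, so $2.8\times10^{-6}$ is consistent with the closed form. For the Bennett comparison, I will solve the exact Bennett equation $m\sigma^2 h(Mt/\sigma^2)/M^2=\log(1/\delta)$ for the threshold $t$ and check that $m^\star=225$ is the first $m$ with $t<G$. I will then recompute which lattice values $k/225$ fall at or below $G-t$. If only $k=0$ does, the tail is $0.96^{225}\approx1.0\times10^{-4}$, giving $\kappa\approx4.9\times10^2$ and a ratio of about $35$ between the two $\kappa$ values.
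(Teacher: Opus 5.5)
Your proposal is correct and is essentially the paper's own argument: evaluate $B(312,0.05)$, observe $G-B\approx4.2\times10^{-5}\in[0,1/312)$ so the failure event collapses to $\{\mathrm{Bin}(312,0.04)=0\}$, and repeat the same lattice argument at the Bennett threshold $m^\star=225$. There $G-t\approx2.8\times10^{-5}<1/225$, so the tail is $0.96^{225}$ as you anticipate. Your caution about the sign of $G-B$ is well placed but resolves favourably; at $m=311$ the sign flips to about $-4.2\times10^{-5}$, which is exactly why $m^\star=312$.
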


\begin{table}[!t]
\caption{Exact validity computation of the one-sided bracket on the audited
law (fixed $M=2$; binomial closed form, no simulation). Rows with $G-B<0$
have realized tail exactly $0$ (since $\widehat G_m\ge0$). The tail
$\Prob(\widehat G_m\le G-B(m,\delta))$ never exceeds the nominal
$\delta=0.05$ at any $m$; conservatism $\kappa\ge1.6\times10^{2}$ throughout
the range and $\approx1.7\times10^{4}$ at $m^\star=312$
(Proposition~\ref{prop:800}).}
\label{tab:m4}\centering\footnotesize
\begin{tabular}{@{}rrrrr@{}}
\toprule
$m$ & $B(m,.05)$ & $G-B$ & 5th pct of $\widehat G_m$ & exact tail\\
\midrule
$50$   & $0.1477$ & $-0.1077$ & $+0.0000$ & $0$\\
$100$  & $0.0879$ & $-0.0479$ & $+0.0100$ & $0$\\
$200$  & $0.0539$ & $-0.0139$ & $+0.0200$ & $0$\\
$300$  & $0.0410$ & $-0.0010$ & $+0.0233$ & $0$\\
$500$  & $0.0294$ & $+0.0106$ & $+0.0260$ & $5.7\times10^{-5}$\\
$1000$ & $0.0192$ & $+0.0208$ & $+0.0300$ & $3.0\times10^{-4}$\\
\bottomrule
\end{tabular}
\end{table}

\begin{remark}[interpretation]
Table~\ref{tab:m4} confirms the bracket is a \emph{valid} one-sided
$\delta$-bound at every $m$ tested, and Proposition~\ref{prop:800} quantifies
its finite-$m$ slack at the certification threshold. Two honest qualifiers.
First, $\kappa$ is computed under the \emph{audited empirical law} (the
plug-in population); it is an instance statement, not a worst-case one.
Second, $\kappa$ decays as $m$ grows: in the CLT regime the realized tail of
the leading term approaches
$\overline\Phi(\sqrt{2\log(1/\delta)})\approx0.007$ at $\delta=0.05$, so
$\kappa$ decays toward ${\approx}7$. The $\sim\!10^4\times$ figure is therefore a finite-$m$
statement at $m=m^\star=312$---driven by the non-asymptotic correction term
and the discreteness of $\widehat G_m$---and is precisely why the main paper
pairs the sharpness theorem with the disclaimer that no finite-$m$ tightness
is claimed: ``Bernstein-tight'' means leading-order sharp, not achieved at
finite $m$.
\end{remark}

\paragraph{The frozen $m{=}50$ Bank case.}
An earlier frozen $m{=}50$ Bank sample showed an apparent $+4$\,pp gain
carried by $2$ rows; the protocol refused it
($\widehat\gamma_{50}=-0.108$). The refusal is small-sample conservatism
rather than false-positive detection: at $m{=}50\ll m^\star{=}312$ the
bracket refuses \emph{any} $+4$\,pp gain. The gain's spuriousness was
established separately, by the telemetry parsing-bug fix behind the
Bank-135 re-baseline.

\section{From the population ceiling to a learned router}
\label{app:learned}

The ceiling $\Phi\le\tfrac12 I_{\mathrm{TV}}(T)$ is a population statement; in
deployment the router is \emph{learned} from $m$ samples, and two
finite-sample consequences sharpen the guardrail. First, a greedy plug-in
router that estimates $\widehat\eta_j(t)$ and selects
$\argmax_j\widehat\eta_j(t)$ has \emph{negative} expected gain when the gate
is uninformative: with $\Phi\approx0$ every cell's apparent edge is estimation
noise, and selecting on it incurs an optimizer's-curse penalty. Second, a
\emph{threshold} router that defers to the primary unless a per-cell edge
clears a high-confidence margin abstains automatically in this regime,
recovering the primary's accuracy (gain bounded below by zero). The two
statements are formalized below.

\begin{proposition}[optimizer's curse of the greedy plug-in router]
\label{prop:curse}
Suppose the gate is uninformative in the strong sense
$I_{\mathrm{TV}}(T)=0$ (equivalently $\eta_j\equiv p_j$ a.e.\ for all $j$),
the primary is strictly marginal-best ($p_1>p_j$ for $j\ne1$), and the
complementarity event has positive mass for some competitor
($q_j:=\Prob(C_j{=}1,C_1{=}0)>0$). Let $\widehat R_m$ be the greedy plug-in
router trained on $m$ i.i.d.\ samples: on each gate cell $t$ with at least
one training sample it selects $\argmax_j\widehat\eta_j(t)$ (ties to the
primary), and defaults to the primary on empty cells. Then:
\textnormal{(a)} every router $R$ has
$G(R)=-\E\big[(p_1-p_{R(T)})\big]\le0$, with equality iff $R\equiv1$
$\mu_T$-a.e.; \textnormal{(b)} $\Prob(\widehat R_m\not\equiv1)>0$ for every
finite $m$, hence the expected deployed gain is strictly negative:
$\E\big[G(\widehat R_m)\big]<0$.
\end{proposition}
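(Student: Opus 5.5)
The plan is to reduce everything to the pointwise identity of Theorem~\ref{thm:t1}(a) and then exhibit one explicit training configuration, of positive probability, on which the greedy router routes away.

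\emph{Part (a).} For any fixed measurable $R$, the proof of Theorem~\ref{thm:t1}(a) gives $A_R=\E[\eta_{R(T)}(T)]$. Under $I_{\mathrm{TV}}(T)=0$ we have $\eta_j\equiv p_j$ $\mu_T$-a.e., so $A_R=\E[p_{R(T)}]$ and $G(R)=\E[p_{R(T)}]-p_1=-\E[p_1-p_{R(T)}]$. The integrand is nonnegative. Because $p_1>p_j$ strictly for $j\ne1$, it is strictly positive exactly on $\{R(T)\ne1\}$. Hence $G(R)\le0$, with equality iff $\mu_T(\{t:R(t)\ne1\})=0$. A quantitative form will be useful later: with $c:=\min_{k\ne1}(p_1-p_k)>0$,
\[
G(R)\le-c\,\mu_T(\{t:R(t)\ne1\}).
\]

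\emph{Part (b).} I will condition on the training sample $\mathcal D_m$. It is independent of the deployment draw of $T$, so $\widehat R_m$ is a fixed router given $\mathcal D_m$. Part (a) then applies conditionally:
\[
G(\widehat R_m)\le-c\,\mu_T(E_{\widehat R_m})\le0
\]
almost surely.

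Next I need a cell carrying complementarity mass. Since
\[
q_j=\sum_t\mu_T(t)\,\Prob(C_j{=}1,C_1{=}0\mid T{=}t)>0,
\]
there is a cell $t_0$ with $\mu_T(t_0)>0$ and $q_j(t_0):=\Prob(C_j{=}1,C_1{=}0\mid T{=}t_0)>0$. Let $A$ be the event that all $m$ training samples fall in $t_0$ and each has $C_j=1$ and $C_1=0$. By the i.i.d.\ sampling,
\[
\Prob(A)=\bigl(\mu_T(t_0)\,q_j(t_0)\bigr)^m>0.
\]
On $A$ the plug-in estimates at $t_0$ are $\widehat\eta_j(t_0)=1$ and $\widehat\eta_1(t_0)=0$. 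Ties go to the primary only when the primary attains the maximum, which it does not here. So the argmax is some advisor other than $1$ (possibly another competitor also at $1$), and therefore $\widehat R_m(t_0)\ne1$. This gives $\mu_T(E_{\widehat R_m})\ge\mu_T(t_0)$ on $A$, so $\Prob(\widehat R_m\not\equiv1)\ge\Prob(A)>0$.

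Finally I split the expectation over $A$ and $A^c$. On $A^c$ the gain is $\le0$ by part (a), so
\[
\E[G(\widehat R_m)]\le\E[G(\widehat R_m)\one_A]\le-c\,\mu_T(t_0)\,\Prob(A)<0.
\]

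The main point needing care is the conditioning step. The learned router is random, and the bound $G\le0$ must be applied pathwise with $\widehat R_m$ frozen. This is legitimate because training and deployment draws are independent, so the deployment law of $T$ is still $\mu_T$ given $\mathcal D_m$. A second check is that the witness event $A$ really forces a non-primary selection under the stated tie rule. This holds because the primary's empirical rate is strictly below the competitor's on $A$.
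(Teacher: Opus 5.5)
Your proof is correct and follows essentially the paper's argument. Part (a) substitutes $\eta_j\equiv p_j$ into $A_R=\E[\eta_{R(T)}(T)]$, and part (b) uses a positive-probability training event in a cell with complementarity mass, where the competitor's empirical rate is $1$ and the primary's is $0$, then applies (a) conditionally on the training sample. You are somewhat more explicit than the paper: you give the quantitative bound with $c=\min_{k\ne1}(p_1-p_k)$, state the train/deploy independence, and note that the argmax may land on a competitor other than $j$, which is still $\ne1$.
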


\begin{proof}
(a) With $\eta_j\equiv p_j$, $G(R)=\E[\eta_{R(T)}(T)]-p_1
=\E[p_{R(T)}]-p_1\le0$, strict whenever $R$ routes away on positive mass,
since $p_j<p_1$ for $j\ne1$. (b) Fix a competitor $j$ with $q_j>0$; among cells $t$ with
$\mu_T(t)>0$ pick one with
$q_j(t):=\Prob(C_j{=}1,C_1{=}0\mid T{=}t)\ge q_j$ (one exists, since
$\max_t q_j(t)\ge\E[q_j(T)]=q_j$). The event that the training set contains
$n_t\ge1$ samples in cell $t$, all of them with $C_j=1,C_1=0$, has
probability at least $q_j(t)^{\,n_t}\,\Prob(n_t \text{ samples in }t)>0$
for any $n_t\ge1$, and on this event $\widehat\eta_j(t)=1>0=\widehat\eta_1(t)$,
so $\widehat R_m(t)=j\ne1$. By (a), conditional on any realization with
$\widehat R_m\not\equiv1$ the deployed gain is strictly negative, and it is
never positive; taking expectations gives $\E[G(\widehat R_m)]<0$.
\end{proof}

\begin{proposition}[the threshold router abstains]\label{prop:threshold}
Under the same uninformative-gate hypothesis, let $\widehat R^{\,\tau}_m$
route away on cell $t$ only if
$\widehat\eta_j(t)-\widehat\eta_1(t)>\tau_t$ for some $j$, where the margins
$\tau_t$ are calibrated (per-cell Hoeffding or Bernstein plus a union bound
over cells and advisors) so that
$\Prob\big(\exists\,t,j:\widehat\eta_j(t)-\widehat\eta_1(t)>\tau_t\big)
\le\delta$ under $\eta_j\equiv p_j$ with $p_j\le p_1$. Then with probability
at least $1-\delta$, $\widehat R^{\,\tau}_m\equiv1$ and the deployed gain is
exactly $0$: the learned router recovers the primary.
\end{proposition}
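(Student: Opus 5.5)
The argument is a direct containment of events. On the complementary event the router reduces pathwise to the primary, and from there Proposition~\ref{prop:curse}(a) gives the gain exactly. Let $\mathcal{B}:=\{\exists\,t,j:\widehat\eta_j(t)-\widehat\eta_1(t)>\tau_t\}$ be the bad event. By the calibration hypothesis, $\Prob(\mathcal{B})\le\delta$ whenever $\eta_j\equiv p_j$ with $p_j\le p_1$. That is exactly the uninformative-gate regime, since $I_{\mathrm{TV}}(T)=0$ forces $\eta_j(t)=p_j$ for $\mu_T$-a.e.\ $t$, and the primary is marginal-best.

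On $\mathcal{B}^c$ no cell triggers the route-away rule, so $\widehat R^{\,\tau}_m(t)=1$ on every cell. This holds both for cells with training samples and, by the default-to-primary convention, for empty cells. Hence $\widehat R^{\,\tau}_m\equiv1$ pathwise, so the route-away mass is $\pi=0$. By Proposition~\ref{prop:theory-decomp} (with the convention $\pi\Delta_E:=0$), or directly from Proposition~\ref{prop:curse}(a) with $R\equiv1$, the deployed gain is exactly $G=0$. Note that $G(\widehat R^{\,\tau}_m)$ is a population functional evaluated at a data-dependent router. It is computed conditionally on the training sample, which is legitimate because deployment draws are independent of training.

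The only step needing care is that the calibration actually delivers the stated bound. One natural construction uses, for each occupied cell $t$ with $n_t$ samples and each $j\ne1$, the centred difference $\widehat\eta_j(t)-\widehat\eta_1(t)-(\eta_j(t)-\eta_1(t))$. This is an average of $n_t$ i.i.d.\ terms in $[-2,2]$ given the cell counts, so Hoeffding gives the margin $\tau_t=\sqrt{8\log(|\mathcal T|(N-1)/\delta)/n_t}$. Since $\eta_j(t)-\eta_1(t)=p_j-p_1\le0$, the event $\widehat\eta_j(t)-\widehat\eta_1(t)>\tau_t$ implies the centred deviation exceeds $\tau_t$. A union bound over at most $|\mathcal T|(N-1)$ pairs then gives total probability at most $\delta$, conditionally on the counts $(n_t)$ and hence unconditionally.

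I expect this conditioning on random cell counts to be the only subtle point. Given the counts, within-cell samples are i.i.d.\ from the conditional law, so the concentration is valid. The proposition, however, takes calibration as a hypothesis, so this paragraph only confirms it is satisfiable, and the core proof is the event containment above.
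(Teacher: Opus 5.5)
Your proposal is correct and follows the paper's proof: off the $\delta$-exception set no cell clears its margin, so $\widehat R^{\,\tau}_m\equiv1$ and the deployed gain is exactly $0$. Your Hoeffding paragraph checks that the calibration hypothesis is satisfiable, which the paper leaves implicit; the check is valid but loose by a factor of $2$ in the margin, since $C_j-C_1\in[-1,1]$ has range $2$, so $\sqrt{2\log(|\mathcal T|(N-1)/\delta)/n_t}$ already suffices.
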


\begin{proof}
Immediate from the calibration event: off the $\delta$-exception set no cell
clears its margin, so the router never deviates from the primary and
$G(\widehat R^{\,\tau}_m)=0$.
\end{proof}

\begin{remark}[certify-then-deploy budget]
The certify-then-deploy budget is additive:
$\Theta(\sigma^2\log(1/\delta)/G^2)$ samples to certify that a gain exists
(\S\ref{app:lecam}--\ref{app:c4}) plus $\Theta(|\mathcal T|/G^2)$ to learn
which cells to route. On an uninformative gate no finite budget yields a
positive certifiable gain, and the framework returns the safe default of
not routing. The underlying rates are standard offline policy-learning and
best-arm-identification results; we invoke them only to formalize the
guardrail rather than as new theory.
\end{remark}

\section{Calibrated positive control: full specification}
\label{app:poscontrol}

This section gives the complete generator specification, the
pre-registration record, and exact binomial confidence intervals for the
positive control reported in the main paper (the ``calibrated positive
control'' section and its calibration figure).

\subsection{Generator (pre-registered)}

A latent incident regime $r\in\{A,B,C\}$ drives two conditionally independent
advisors and a noisy gate:
\begin{center}\small
\begin{tabular}{@{}lccc@{}}
\toprule
 & $r{=}A$ & $r{=}B$ & $r{=}C$\\
\midrule
$\Prob(r)$        & $0.50$ & $0.30$ & $0.20$\\
$\Prob(C_1{=}1\mid r)$ (primary)    & $0.82$ & $0.35$ & $0.70$\\
$\Prob(C_2{=}1\mid r)$ (competitor) & $0.48$ & $0.88$ & $0.62$\\
\bottomrule
\end{tabular}
\end{center}
The gate observes the regime with accuracy $q=0.85$: $T=r$ with probability
$q$, otherwise $T$ is uniform on the other two regimes (informative, not an
oracle). $C_1\perp C_2\mid r$: diversity arises from the regime structure,
not hand-placed anti-correlation. The router is the Bayes selector
$R^\star(t)=\argmax_j\E[C_j\mid T{=}t]$, which routes to advisor $2$ exactly
on $\{T{=}B\}$ (the only gate value where the competitor is conditionally
stronger). The matched \emph{null} generator is identical except that $C_2$
is drawn from the \emph{primary's} conditional profile
($\Prob(C_2{=}1\mid r)=\Prob(C_1{=}1\mid r)$, still conditionally
independent), so the true routing gain is exactly $G=0$ while the gate and
routing rule are unchanged.

\emph{Pre-registered constants} (fixed before any bracket was computed):
seed $20260608$; $\delta=0.05$; $\rho=0$ (no shift modeled, stated
explicitly); $M=1+|G|$ (pre-registered; the reported brackets were
subsequently replaced by the conservative fixed worst case $M=2$---the
certify/withhold verdicts are unchanged); $m_{\mathrm{BIG}}=300$ and $m_{\mathrm{SMALL}}=70$,
chosen as comfortably above and below the analytically implied
$m^\star\approx100$ (analytic targets: $p_1\approx0.655$, $p_2\approx0.628$,
$\pi\approx0.31$, $\Delta_E\approx0.39$, $G\approx0.12$,
$\Var(Z)\approx0.17$; under the final fixed-$M{=}2$ bracket the realized
requirement is $m^\star=146$, and $300$/$70$ remain on the correct sides). Monte-Carlo certify-rates use $2000$ independent draws per condition
from a separate stream (seed $20260608+1$), so the frozen instance is not
reused.

\subsection{Realized frozen instance and decisions}

The frozen $m{=}300$ instance (seed $20260608$): $p_1=0.687$, $p_2=0.580$;
$\pi=0.273$, $\Delta_E=0.378$, $\widehat G=0.1033$; variance
$\widehat\sigma^2=0.1398$ (direct, ddof $1$) versus $0.1407$ via the routing
decomposition $\pi\sigma_E^2+\pi(1-\pi)\Delta_E^2$ (agreement is the internal
consistency check); the reported brackets use the fixed worst case $M=2$
(the pre-registered data-dependent value was $1{+}|\widehat G|=1.1033$).
Error diversity, measured not asserted:
contingency (both/primary-only/competitor-only/neither)
$=108/98/66/28$, oracle $0.907$, headroom $+22.0$pp, independence baseline
$+18.2$pp, excess $\mathcal E=+3.8$pp, $\phi=-0.167$; gate informativeness
$\AUC=0.752$ on discordant rows. Required sample size: $m^\star=146$
(bisection on the full bracket). Decisions on the same instance statistics:
\begin{center}
$m{=}300\ge m^\star$:\ \ $B=0.0662$,\ \ $\widehat\gamma=+0.0372$
\ $\Rightarrow$ \textbf{certify};\\
$m{=}70<m^\star$:\ \ $B=0.1664$,\ \ $\widehat\gamma=-0.0631$
\ $\Rightarrow$ \textbf{withhold}.
\end{center}

\subsection{Monte-Carlo certify-rates with exact confidence intervals}

Each condition uses $2000$ independent draws; the certify-rate is the
fraction with $\widehat\gamma_m>0$. We report exact (Clopper--Pearson) $95\%$
intervals, from the beta quantile form
$\big[\mathrm{B}^{-1}(0.025;\,k,\,n{-}k{+}1),\
\mathrm{B}^{-1}(0.975;\,k{+}1,\,n{-}k)\big]$ for $k$ successes in $n$ trials
(computed with SciPy):
\begin{center}\footnotesize
\begin{tabular}{@{}lrrc@{}}
\toprule
condition & $k/n$ & rate (\%) & $95\%$ CI (\%)\\
\midrule
positive, $m{=}300$ & $1979/2000$ & $98.95$ & $[98.40,\,99.35]$\\
positive, $m{=}70$ & $222/2000$ & $11.10$ & $[9.76,\,12.56]$\\
null ($G{=}0$), $m{=}300$ & $3/2000$ & $0.15$ & $[0.03,\,0.44]$\\
\bottomrule
\end{tabular}
\end{center}
The three rows are the calibration triptych quoted in the main paper:
\textbf{(a)} power---once $m\ge m^\star$ the protocol certifies a real,
error-diverse gain in $98.95\%$ of draws (CI lower limit $98.40\%$);
\textbf{(b)} correct withholding---below $m^\star$ it certifies the same real
gain only $11.10\%$ of the time; \textbf{(c)} type-I control---on the matched
null it certifies $0.15\%$ of draws, with CI upper limit $0.44\%$,
comfortably below the nominal $\delta=5\%$. Together with the refusals on the real
(redundant, uninformatively gated) distributions, this establishes that the
protocol is calibrated rather than merely conservative.

\section{Redundancy from a shared difficulty axis}
\label{app:redundancy}
This proves the main-paper Proposition (\emph{shared difficulty forces
redundancy}) and gives its exact numerical check.

\begin{proposition}[shared difficulty forces redundancy]
Let $C_1,\dots,C_N\in\{0,1\}$ be conditionally independent given a latent
difficulty $D$, with competences $q_j(d):=\Prob(C_j{=}1\mid D{=}d)$ each monotone
in $d$ and all in the \emph{same} direction. Then
$\mathcal{E}=\E[\max_j C_j]-\big(1-\prod_j(1-p_j)\big)\le0$ ($p_j=\E[C_j]$); for
$N{=}2$, $\mathcal{E}=-\mathrm{Cov}(C_1,C_2)\le0$ exactly.
\end{proposition}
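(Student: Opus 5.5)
I will rewrite $\mathcal{E}$ through failure indicators and reduce it to an association inequality for products of co-monotone functions of $D$. Since $\max_j C_j=1-\prod_j(1-C_j)$, we have
\[
\mathcal{E}=\prod_j(1-p_j)-\E\Big[\prod_j(1-C_j)\Big].
\]
Write $g_j(d):=1-q_j(d)\in[0,1]$; each $g_j$ is monotone in $d$, all in the same direction. Conditional independence and the tower property give $\E[\prod_j(1-C_j)]=\E[\prod_j g_j(D)]$, while $1-p_j=\E[g_j(D)]$. Hence
\[
\mathcal{E}=\prod_j\E[g_j(D)]-\E\Big[\prod_j g_j(D)\Big],
\]
and it suffices to prove $\E[\prod_j g_j(D)]\ge\prod_j\E[g_j(D)]$.

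I will prove this by induction on $N$ using Chebyshev's association (sum) inequality: if $f,h$ are monotone in the same direction on the real line, then $\E[f(D)h(D)]\ge\E[f(D)]\,\E[h(D)]$. This follows from $\E[(f(D)-f(D'))(h(D)-h(D'))]\ge0$ for an independent copy $D'$, because the integrand is pointwise nonnegative.

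For the inductive step, set $f:=\prod_{j<N}g_j$ and $h:=g_N$. The key observation is that a product of nonnegative functions that are all nondecreasing (respectively all nonincreasing) is again nondecreasing (respectively nonincreasing), and nonnegativity holds because $g_j\in[0,1]$. So $f$ and $h$ are co-monotone, and
\[
\E[fh]\ge\E[f]\,\E[h]\ge\prod_{j<N}\E[g_j]\cdot\E[g_N],
\]
where the last step uses the induction hypothesis together with $\E[g_N]\ge0$.

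This step is the main point needing care. Monotonicity of a product is preserved only because the factors are nonnegative, so I will state explicitly that each $g_j$ takes values in $[0,1]$. I will also require $D$ to take values in a totally ordered set, such as the real line, so that "monotone" is meaningful.

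For $N=2$ the exact identity is direct:
\[
\mathcal{E}=(1-p_1)(1-p_2)-\E[(1-C_1)(1-C_2)]=-\mathrm{Cov}(1-C_1,1-C_2)=-\mathrm{Cov}(C_1,C_2),
\]
and the induction's base case shows this is $\le0$. The claim that the errors are positively associated is the same inequality read in terms of the failure indicators $1-C_j$.
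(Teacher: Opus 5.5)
Your proposal is correct and follows the paper's proof essentially step for step. Like the paper, you rewrite $\mathcal{E}=\prod_j\E[g_j(D)]-\E[\prod_j g_j(D)]$ via failure indicators and conditional independence, apply Chebyshev's association inequality inductively using nonnegativity and co-monotonicity of the partial products, and compute the $N{=}2$ covariance identity directly; your extra details (the independent-copy proof of Chebyshev's inequality and the explicit requirement that $D$ be totally ordered) are harmless refinements the paper leaves implicit.
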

\begin{proof}
Write $\bar C_j=1-C_j$ and $g_j(d)=1-q_j(d)=\E[\bar C_j\mid D{=}d]$; same-direction
monotonicity of the $q_j$ makes the $g_j$ monotone in $d$, all in the same
direction. Conditioning on $D$ and using conditional independence,
$\E[\prod_j \bar C_j]=\E_D[\prod_j g_j(D)]$ while $\prod_j\E[\bar C_j]=\prod_j\E_D[g_j(D)]$,
so $\mathcal{E}=\prod_j\E_D[g_j(D)]-\E_D[\prod_j g_j(D)]$. Chebyshev's association
inequality gives, for $f,h$ monotone in the same direction in a single variable
$D$, $\E[f(D)h(D)]\ge\E[f(D)]\E[h(D)]$. Inductively, $\prod_{j\le N}g_j$ equals
$g_N\cdot\prod_{j<N}g_j$ with both factors non-negative and monotone in the same
direction, so $\E[\prod_{j\le N}g_j]\ge\E[g_N]\E[\prod_{j<N}g_j]\ge\E[g_N]\prod_{j<N}\E[g_j]$.
Hence $\E_D[\prod_j g_j(D)]\ge\prod_j\E_D[g_j(D)]$ and $\mathcal{E}\le0$. For
$N{=}2$, $\E[\max(C_1,C_2)]=p_1+p_2-\E[C_1C_2]$ and $1-(1-p_1)(1-p_2)=p_1+p_2-p_1p_2$,
so $\mathcal{E}=p_1p_2-\E[C_1C_2]=-\mathrm{Cov}(C_1,C_2)$; conditional
independence gives $\mathrm{Cov}(C_1,C_2)=\mathrm{Cov}(q_1(D),q_2(D))\ge0$
for same-direction monotone $q_1,q_2$, so $\mathcal{E}\le0$.
\end{proof}

\noindent\emph{Numerical check} (exact $D$-grid integration, no Monte-Carlo
noise): the $N{=}2$ identity
$\mathcal{E}=-\mathrm{Cov}$ holds to machine precision
($\max|{\cdot}|=1.1\times10^{-16}$); across $30{,}000$ same-direction monotone
configs ($N{=}2,\dots,6$) the maximum $\mathcal{E}$ is $-5\times10^{-6}\le0$;
dropping the same-direction hypothesis (competences moving in \emph{opposite}
directions) yields $\mathcal{E}>0$ in $2000/2000$ configs (up to $+0.063$),
confirming the hypothesis is necessary. This is the structural reason the
$221$-pool screen finds $\mathcal{E}\le0$ everywhere.

\section{Bracket robustness: fixed $M$, direct Bennett, empirical Bernstein}
\label{app:bracket}
The main-paper bracket fixes $M{=}2$ (worst-case bound on the centred increment
$|Z-\E Z|\le1+|G|\le2$ for $Z\in\{-1,0,1\}$), so no constant depends on the
unknown $G$ (avoiding circularity). Table~\ref{tab:bracketrob} re-derives
$m^\star$ and $B(135)$ at the audited anchor ($G{=}0.04$, $\sigma^2{=}0.0384$,
$\delta{=}0.05$) under five \emph{valid} brackets with fixed constants: relaxed
Bernstein at $M{=}1.04$ (the old data-dependent value, reference) and $M{=}2$; the
exact Bernstein root and the direct Bennett inversion (both removing the
closed-form $\sqrt{a{+}b}\le\sqrt a+\sqrt b$ relaxation); and empirical
Bernstein~\citep{maurerpontil2009}, which assumes no $\sigma^2$ and no a priori
$M$ (range $2$).

\begin{table}[!t]
\caption{Required sample size under five valid brackets at the Bank anchor.
Fixing $M{=}2$ raises the relaxed $m^\star$ from $237$ to $312$; the direct
Bennett bound (same $M{=}2$) lowers it to $225$---within $1.3\%$ of the Le~Cam
lower bound $228$---so the inflation is the closed-form relaxation, not the fixed
constant. The assumption-light empirical-Bernstein bound is most conservative
($810$) yet still refuses Bank ($m^\star\!>\!135$). All five agree on every
reported verdict.}
\label{tab:bracketrob}\centering\footnotesize
\setlength{\tabcolsep}{4pt}
% auto-generated by bracket_robustness.py
\begin{tabular}{@{}llrr@{}}
\toprule
Bracket & constant & $m^\star$ & $B(135)$ \\
\midrule
relaxed Bernstein & $M{=}1.04$ (old) & $237$ & $0.0567$ \\
relaxed Bernstein & $M{=}2$ (headline) & $312$ & $0.0709$ \\
exact Bernstein & $M{=}2$ & $244$ & $0.0586$ \\
direct Bennett & $M{=}2$ & $225$ & $0.0543$ \\
empirical Bernstein & $R{=}2$ & $810$ & $0.1743$ \\
\midrule
Le~Cam lower bound (BH) & --- & $228$ & --- \\
\bottomrule
\end{tabular}

\end{table}

\noindent The data-dependent $M{=}1{+}|G|$ used in earlier drafts is itself valid
under a sample split (estimate $G$ on a held-out half to set $M$), at the cost of
halving the certification sample; we report the fixed worst case instead. Because
the leading $\sigma\sqrt{2\log(1/\delta)/m}$ term carries no $M$, the choice
affects only the lower-order correction, so on the large-$m$ RouterBench
certificate the $M{=}1.04\!\to\!2$ change moves $\widehat\gamma^{\mathrm{iid}}$ by
${\approx}1{\times}10^{-4}$ (verdict unchanged); the plug-in $\widehat\sigma^2$ is the only
remaining estimated quantity, and the empirical-Bernstein column bounds the
resulting uncertainty without assuming $\sigma^2$ known.

\section{Numerical verification summary}
\label{app:veriftable}
Every theorem and proposition in the paper carries an independent numerical
check against frozen golden files. Table~\ref{tab:verif} consolidates all
checks in one place; the remarks in the sections above retain the per-check
details.

\begin{table}[!t]
\caption{Consolidated numerical verification of every theoretical claim.}
\label{tab:verif}\centering\footnotesize
\renewcommand{\arraystretch}{1.15}
\begin{tabular}{@{}p{0.30\columnwidth}p{0.30\columnwidth}p{0.27\columnwidth}@{}}
\toprule
\textbf{Claim} & \textbf{Method} & \textbf{Outcome}\\
\midrule
T1(a): $\max_RG=\Phi$ &
brute force, $200$ random laws ($N{=}2$, $K{=}2$), all $512$ routers &
$200/200$ matches; max dev.\ $2.9{\times}10^{-16}$\\
T1(b): AUC-insufficiency witnesses; mixture family $\Phi=\alpha/4$ &
direct computation at $\alpha\in\{0,\frac14,\frac12,\frac34,1\}$ &
machine precision (\S\ref{app:T1})\\
Ceiling $\Phi\le\frac12I_{\mathrm{TV}}$ &
$2{\times}10^5$ random laws ($N{=}3$, $|\mathcal T|{=}4$) &
$0$ violations; sup ratio $0.9956$; equality witness exact\\
T2 witnesses A/B; no-go pair C/D; (C1) control &
independent implementation &
$G=0.03$, $0.10$; $0$ vs $0.06$ at profile $(1,1,1)$; control $0$
(\S\ref{app:T2})\\
Redundancy prop.\ ($\mathcal E\le0$) &
exact $D$-grid, $30{,}000$ monotone configs, $N{=}2..6$ &
max $\mathcal E=-5{\times}10^{-6}$; $N{=}2$ identity to $1.1{\times}10^{-16}$; opposite-direction control $\mathcal E>0$ in $2000/2000$\\
Coupling monotonicity (\S\ref{app:sharpconst}) &
exact trinomial enumeration, all thresholds, $m\le9$ &
no violations\\
Sharp-constant anchors ($\Ch$, $\KL$, $m^\star$, cap certificate) &
independent recomputation vs.\ golden &
all values reproduce (Remark~\ref{rem:sharp-numbers})\\
Bracket variants (relaxed $M{=}2$ / direct Bennett / empirical Bernstein) &
recomputation on all reported instances &
every verdict agrees (\S\ref{app:bracket})\\
\bottomrule
\end{tabular}
\end{table}

\section*{Artifact index}
Every number in this appendix regenerates from the frozen, content-hashed
artifacts in the artifact package released with the published version:
end-to-end scripts for the
ceiling inequality and equality witnesses, the T1 sweep and mixture family, the T2
witnesses and no-go pair (with golden output), the bracket-validity simulation and
$m^\star$ table, the semi-synthetic
positive control (with golden output), the independence baseline, the five-bracket
robustness table, the oracle-vs-feasible gate ladder on real Bank, and the exact
shared-difficulty check; a single source-of-truth constants module supplies $G$,
$\pi$, $\Delta_E$, both variance objects, $M$, the brackets, and every $m^\star$;
the deterministic router invokes no language model.

\end{document}